\documentclass{article}

\PassOptionsToPackage{numbers, compress}{natbib}

\usepackage[preprint]{neurips_2026}
\author{
\textbf{Yuetai Li}\textsuperscript{$\clubsuit$} \;\;\;
\textbf{Fengqing Jiang}\textsuperscript{$\clubsuit$} \;\;\;  
\textbf{Yichen Feng}\textsuperscript{$\clubsuit$} \;\;\;  
\textbf{Kaiyuan Zheng}\textsuperscript{$\clubsuit$} \;\;\; 
\textbf{Basel Alomair}\textsuperscript{$\spadesuit$}\textsuperscript{$\clubsuit$}\textsuperscript{$\heartsuit$}\;\;\; \\
\textbf{Luyao Niu}\textsuperscript{$\clubsuit$} \;\;\; 
\textbf{Bhaskar Ramasubramanian }\textsuperscript{$\diamondsuit$}\;\;\;
\textbf{Linda Bushnell}\textsuperscript{$\clubsuit$} \;\;\; 
\textbf{Radha Poovendran}\textsuperscript{$\clubsuit$}\\
  \textsuperscript{$\clubsuit$}University of Washington \; 
\textsuperscript{$\diamondsuit$}Western Washington University \;\\
  \textsuperscript{$\spadesuit$}King Abdulaziz City for Science and Technology \\
  \textsuperscript{$\heartsuit$}HUMAIN \\
    \texttt{rp3@uw.edu}
  }
\usepackage[utf8]{inputenc}
\usepackage[T1]{fontenc}
\usepackage{hyperref}
\usepackage{url}
\usepackage{booktabs}
\usepackage{makecell}
\usepackage{amsfonts}
\usepackage{nicefrac}
\usepackage{microtype}
\usepackage{xcolor}

\usepackage{amsmath}
\usepackage{amssymb}
\usepackage{mathtools}
\usepackage{amsthm}
\usepackage{bbm}

\usepackage{algorithm}
\usepackage{algorithmic}

\usepackage{graphicx}
\usepackage{subcaption}
\usepackage{tikz}
\graphicspath{{./}}

\usepackage[most]{tcolorbox}

\usepackage{placeins}

\usepackage[capitalize,noabbrev]{cleveref}

\theoremstyle{plain}
\newtheorem{theorem}{Theorem}[section]
\newtheorem{proposition}[theorem]{Proposition}
\newtheorem{lemma}[theorem]{Lemma}
\newtheorem{corollary}[theorem]{Corollary}
\theoremstyle{definition}
\newtheorem{definition}[theorem]{Definition}
\newtheorem{assumption}[theorem]{Assumption}
\newtheorem{example}[theorem]{Example}
\theoremstyle{remark}
\newtheorem{remark}[theorem]{Remark}

\newcommand{\R}{\mathbb{R}}

\newcommand{\E}{\mathbb{E}}

\newcommand{\Y}{\mathcal{Y}}

\DeclareMathOperator*{\argmin}{arg\,min}
\DeclareMathOperator*{\argmax}{arg\,max}

\newcommand{\fL}{f_L}

\newcommand{\Vstar}{V^{\star}}
\newcommand{\Vmix}{V^{\mathrm{mix}}}

\newcommand{\Regret}{\mathrm{Reg}}
\newcommand{\DualGap}{\mathrm{DG}}

\newcommand{\norm}[1]{\left\lVert #1 \right\rVert}
\newcommand{\inner}[2]{\left\langle #1, #2 \right\rangle}

\newcommand{\Breg}{D_{\psi}}

\title{Polyhedral Instability Governs Regret in Online Learning}

\begin{document}

\maketitle

\begin{abstract}
Many online decision problems over combinatorial actions are addressed via convex relaxations, leading to online convex optimization with piecewise linear objectives and induced polyhedral structure. 
We show that regret in such problems is governed by \emph{polyhedral instability}: the number of changes of the active region. 
Under full information feedback and fixed partition assumptions, if $\mathrm{RS}_T$ denotes the number of region switches and $V_{\max}$ the maximum number of vertices per region, we prove $\Regret_T= \Theta(\sqrt{(1+\mathrm{RS}_T)\,T\,\log V_{\max}})$ interpolating between experts-like and dimension-dependent OCO rates. 
For online submodular--concave games under Lov\'{a}sz convexification, this reduces to the permutation-switch count $\mathrm{SC}_T$, yielding the matching rate $\Regret_T= \Theta(\sqrt{(1+\mathrm{SC}_T)\,T\,\log n})$. 
Experiments on synthetic and real combinatorial problems (shortest path, influence maximization) validate the predicted scaling and indicate that low-instability regimes can arise in practice without explicit enumeration of actions. 
\end{abstract}

\section{Introduction}\label{sec:intro}

Many online decision problems require a learner to repeatedly choose a combinatorial action while the environment changes over time. 
A network operator may need to reroute traffic as link conditions change, a public-health planner may need to reposition sensors as risk patterns shift, and a platform may need to choose seed sets as user behavior evolves. 
In all of these settings, {in each round} the learner must act before seeing the next round's outcome. The learner's performance is measured 
{by \emph{regret}: the cumulative gap between the learner's realized loss and that of an appropriate benchmark policy, such as the best fixed action or the round-wise optimum selected in hindsight}. 

These problems are often computationally challenging because the number of {feasible actions can be exponential in the problem size}. 
A widely used solution is to replace the discrete decision set by a continuous convex relaxation, enabling optimization in a space whose dimension {grows} only polynomial with problem size. 
In many such relaxations, the resulting objective is piecewise linear: the relaxation domain is partitioned into a fixed collection of polyhedral regions, and the objective is linear within each region. 
However, it remains unclear which structural property of the induced polyhedral partition can explain when regret 
{exhibits experts-like behavior \citep{FreundSchapire1997,LittlestoneWarmuth1994}, and when it instead follows the dimension-dependent rates of generic online convex optimization (OCO) \citep{Hazan2016,Zinkevich2003}}. 


In this paper, we study \emph{polyhedral instability}: the number of rounds on which the active polyhedral region changes. 
{When the learner remains in a single region, the problem reduces to an experts-like task over a small local set of vertices rather than a generic high-dimensional continuous problem. 
When the learner frequently crosses region boundaries, the local linear model changes repeatedly, and the advantage of the polyhedral structure diminishes.} 
Under fixed-partition and full-information assumptions, {we show regret bounds that interpolate between these two regimes: experts-like rates under low instability, and ambient-dimension-dependent rates under high instability.} 

The key message is that {the difficulty of polyhedral online learning is not determined by ambient dimension alone.} 
Two {instances} with the same dimension can have very different {regret complexity depending on how often the active region changes.} 
This perspective also helps explain empirical behavior that standard theory {does not capture}: some large combinatorial problems behave surprisingly like small experts problems, while others do not. 
Because polyhedral partitions arise in many combinatorial relaxations, this viewpoint applies broadly, including to the Lov\'{a}sz extension for submodular optimization~\citep{Bach2013,Lovasz1983, HazanKale2012} and other relaxations built from polyhedra~\citep{Schrijver2003}.

\begin{figure}[t]
\centering
\begin{minipage}[b]{0.62\textwidth}
\centering
\begin{tikzpicture}[scale=2.4, >=stealth]
  \begin{scope}[xshift=0cm]
    \fill[blue!8] (0,0) -- (1,0) -- (1,1) -- cycle;
    \fill[red!8] (0,0) -- (0,1) -- (1,1) -- cycle;
    \draw[gray!40, very thin] (0,0) grid[step=0.25] (1,1);
    \draw[black, thick] (0,0) rectangle (1,1);
    \draw[black!60, thick, dashed] (0,0) -- (1,1);
    \node[blue!70!black, font=\footnotesize] at (0.72,0.28) {$\mathcal{P}_{\pi_1}$};
    \node[red!70!black, font=\footnotesize] at (0.28,0.72) {$\mathcal{P}_{\pi_2}$};
    \draw[blue!70!black, very thick, ->]
      (0.75,0.25) to[out=120,in=-30] (0.55,0.35) to[out=150,in=-60] (0.65,0.48)
      to[out=30,in=180] (0.82,0.42) to[out=0,in=90] (0.88,0.30);
    \filldraw[blue!70!black] (0.75,0.25) circle (0.8pt);
    \filldraw[blue!70!black] (0.88,0.30) circle (0.8pt);
    \node[font=\footnotesize] at (0.5,-0.12) {$x_1$};
    \node[font=\footnotesize, rotate=90] at (-0.12,0.5) {$x_2$};
    \node[font=\small\bfseries] at (0.5,1.12) {Stable regime};
    \node[font=\scriptsize, text width=2.2cm, align=center] at (0.5,-0.38) {Long motion,\\ no boundary crossing};
  \end{scope}
  \begin{scope}[xshift=1.5cm]
    \fill[blue!8] (0,0) -- (1,0) -- (1,1) -- cycle;
    \fill[red!8] (0,0) -- (0,1) -- (1,1) -- cycle;
    \draw[gray!40, very thin] (0,0) grid[step=0.25] (1,1);
    \draw[black, thick] (0,0) rectangle (1,1);
    \draw[black!60, thick, dashed] (0,0) -- (1,1);
    \node[blue!70!black, font=\footnotesize] at (0.72,0.28) {$\mathcal{P}_{\pi_1}$};
    \node[red!70!black, font=\footnotesize] at (0.28,0.72) {$\mathcal{P}_{\pi_2}$};
    \draw[red!70!black, very thick, ->]
      (0.38,0.52) -- (0.52,0.38) -- (0.42,0.55) -- (0.58,0.45)
      -- (0.48,0.58) -- (0.62,0.48);
    \filldraw[red!70!black] (0.38,0.52) circle (0.8pt);
    \filldraw[red!70!black] (0.62,0.48) circle (0.8pt);
    \foreach \x/\y in {0.45/0.45, 0.47/0.47, 0.53/0.53, 0.55/0.55} {
      \draw[orange!80!black, thick] (\x-0.015,\y-0.015) -- (\x+0.015,\y+0.015);
      \draw[orange!80!black, thick] (\x-0.015,\y+0.015) -- (\x+0.015,\y-0.015);
    }
    \node[font=\footnotesize] at (0.5,-0.12) {$x_1$};
    \node[font=\footnotesize, rotate=90] at (-0.12,0.5) {$x_2$};
    \node[font=\small\bfseries] at (0.5,1.12) {Unstable regime};
    \node[font=\scriptsize, text width=2.2cm, align=center] at (0.5,-0.38) {Short motion,\\ many crossings};
  \end{scope}
\end{tikzpicture}
\end{minipage}%
\hfill
\begin{minipage}[b]{0.35\textwidth}
\centering
\includegraphics[width=\textwidth]{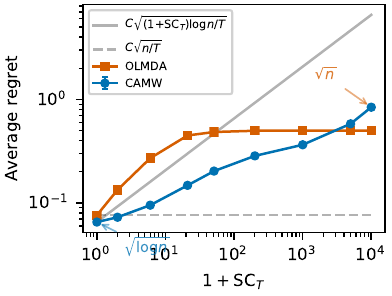}
\end{minipage}
\caption{The key distinction is not how far the learner moves, but whether it crosses region boundaries.
\textbf{Left}: motion inside one region keeps the same local linear model.
\textbf{Center}: repeated boundary crossings keep changing the effective local problem.
\textbf{Right}: the experiments reflect this transition between stable and unstable regimes (\Cref{sec:experiments}).}
\label{fig:cells}
\end{figure}

As a concrete and practically important instance, we specialize polyhedral instability to online submodular--concave games under Lov\'{a}sz convexification. 
In this setting, instability corresponds to the number of times the active Lov\'asz cell changes, 
equivalently the number of equilibrium permutation switches. 
This viewpoint also leads to a practical algorithm: use multiplicative-weights-style learning inside one cell, and only restart or transfer information when the active cell changes. 
The resulting algorithms are simple, interpretable, and directly tied to the geometry of the problem.


To summarize, this paper makes the following contributions.
\begin{itemize}
\item We identify \emph{polyhedral instability} as a structural feature that yields instability-parameterized regret bounds for online learning with piecewise-linear objectives under fixed-partition assumptions.
We show how this applies both to fixed polyhedral partitions and to the specific case of Lov\'{a}sz convexification for submodular--concave games.
\item We design region-aware multiplicative-weights algorithms that exploit stable geometric regimes rather than treating the objective as a generic black-box convex loss.
\item We validate the predicted scaling on controlled synthetic games and provide preliminary evidence on shortest-path and influence-maximization tasks that low-instability regimes can arise beyond designed instances.
\end{itemize}

\section{Problem Setup and Preliminary Background}\label{sec:prelim}

\paragraph{Online Learning Setup.}
We study online learning over $T$ rounds for combinatorial problems whose convex relaxation induces a fixed polyhedral partition. 
At each round $t$, the learner chooses a decision in the relaxation, the environment reveals the round-$t$ objective, and performance is measured by regret against the appropriate round-wise optimal comparator. 

\paragraph{Submodular--Concave Games.}
One can instantiate an online learning problem with the submodular--concave game model.
Let $[n] = \{1,\dots,n\}$.
A submodular-concave game consists of a ground set~$[n]$, a compact convex set $\Y \subseteq \R^m$, and a payoff function $f \colon 2^{[n]} \times \Y \to \R$ such that $f(\cdot, y)$ is submodular for every $y \in \Y$ and $f(S, \cdot)$ is concave for every subset $S \subseteq [n]$.

\paragraph{Lov\'{a}sz Extension.}
For $x \in [0,1]^n$ sorted as $x_{\pi(1)} \ge \cdots \ge x_{\pi(n)}$, the Lov\'{a}sz extension is
\begin{equation}\label{eq:lovasz}
  \fL(x, y) = \sum_{i=0}^{n} (x_{\pi(i)} - x_{\pi(i+1)}) \, f(C_i, y),
\end{equation}
where $C_i = \{\pi(1), \dots, \pi(i)\}$.
If $f(\cdot, y)$ is submodular, then $\fL(\cdot, y)$ is convex and $\min_x \fL(x,y) = \min_S f(S,y)$~\citep{Lovasz1983}.
The mixed minimax value equals $\Vstar := \min_{x \in [0,1]^n} \max_{y \in \Y} \fL(x,y)$; optimal mixed strategies are supported on chains of at most $n{+}1$ nested subsets (\Cref{lem:chain-support} in \Cref{app:background}).
For each permutation $\pi$ of $[n]$, the corresponding \emph{Lov\'asz cell} is the region
\[
  \mathcal{P}_\pi = \{x \in [0,1]^n : x_{\pi(1)} \ge \cdots \ge x_{\pi(n)}\},
\]
so the cube is partitioned into Lov\'asz cells indexed by coordinate orderings, and $\fL(\cdot,y)$ is linear on each such cell.

\paragraph{Online protocol.}
At each round $t = 1, \dots, T$: the learner selects $(x_t, y_t) \in [0,1]^n \times \Y$; nature reveals~$f_t$; the learner incurs the \emph{saddle-point regret}: 
\begin{equation}\label{eq:sp-regret}
  \Regret_T = \sum_{t=1}^{T} \bigl[ \fL^t(x_t, y_t^\star) - \fL^t(x_t^\star, y_t) \bigr],
\end{equation}
where $(x_t^\star,y_t^\star)$ is a saddle point of the round-$t$ game, i.e., $\fL^t(x_t^\star,y) \le \fL^t(x_t^\star,y_t^\star)
  \le \fL^t(x,y_t^\star)
  \quad \forall x,y$.
We call~\eqref{eq:sp-regret} \emph{dynamic saddle-point regret}, as the comparator $(x_t^\star, y_t^\star)$ varies with~$t$.
It measures how much the learner loses relative to the roundwise equilibrium sequence, rather than to one fixed hindsight decision.
The stability quantities below are therefore structural assumptions on the game sequence induced by these equilibria, not choices made by the learner after the fact.
When the equilibrium sequence changes cells frequently, the bound recovers dimension-dependent behavior; when it remains in few cells, the bound captures the easier dynamic comparator path.

\begin{assumption}\label{assump:bounded}
Let $g_t^x := \nabla_x \fL^t(x_t, y_t)$ and $g_t^y := \nabla_y \fL^t(x_t, y_t)$ denote the gradients at round~$t$.
There exist $L_x, L_y, M > 0$ with $\norm{g_t^x}_2 \le L_x$, $\norm{g_t^y}_2 \le L_y$, $|f^t(S,y)| \le M$ for all $t, S, y$.
\end{assumption}

Additional structural background and assumption scaling are presented in Appendix \ref{app:background-foundations}.

\section{Permutation Stability Governs Regret}\label{sec:curvature}

This section instantiates polyhedral instability for submodular--concave games under Lov\'asz convexification.
We analyze how the equilibrium cell sequence enters regret bounds for these games.
Our analysis follows two stages.
We start with an oracle version which knows the equilibrium permutation.
Next, we develop a proxy algorithm to remove the oracle by using an observable best-response permutation.
\Cref{sec:lower} lifts this principle to arbitrary polyhedral partitions.


\subsection{Permutation Switches Drive Regret}\label{sec:sc-def}

The Lov\'asz extension is linear on each Lov\'asz cell~\citep{Bach2013, Lovasz1983}, with gradient determined by the nested chain $C_i = \{\pi(1), \dots, \pi(i)\}$.
For $x \in [0,1]^n$, let $\pi$ denote the permutation that sorts $x$ in descending order, so that $x \in \mathcal{P}_{\pi}$. 
If the equilibrium remains in {a single} cell $\mathcal{P}_\pi$ throughout the horizon, then the minimizer of $\min_x \max_y \fL^t(x,y)$ is effectively solving an experts problem over only the $n{+}1$ chain vertices $\{\mathbf{1}_{C_i}\}_{i=0}^{n}$ of that cell, and multiplicative weights over those vertices attains the $O(\sqrt{\log n / T})$ rate~\citep{FreundSchapire1997, LittlestoneWarmuth1994}. 
{A challenge arises when equilibrium crosses cell boundaries, since each crossing changes the local linear model, the associated chain, and hence the induced experts problem. We quantify this complexity through the number of equilibrium cell changes.}

\begin{definition}[Permutation-switch count]\label{def:switch-count}
Let $x_t^\star = \argmin_{x} \max_{y} \fL^t(x,y)$ be a minimax equilibrium at round~$t$, and let $\pi_{x_t^\star}$ be the permutation that sorts $x_t^\star$.
The \emph{permutation-switch count} is
\begin{equation}\label{eq:switch-count}
  \mathrm{SC}_T := \bigl|\bigl\{t \in [T{-}1] : \pi_{x_{t+1}^\star} \ne \pi_{x_t^\star}\bigr\}\bigr|.
\end{equation}
\end{definition}
{$\mathrm{SC}_T$ is an intrinsic property of the game sequence and does not depend on the learner's actions. In influence maximization, it corresponds to changes in the ranking of nodes by marginal influence; in sensor placement, it measures changes in the ordering of marginal coverage gains.}


\paragraph{Relation to Existing Adaptive Measures.}
Existing adaptive measures emphasize different phenomena.
Path length and gradual-variation measures track Euclidean motion or temporal drift in the losses~\citep{ChiangEtAl2012,ZhangLuZhou2018}, while shifting-expert bounds count changes of the best comparator in a \emph{fixed} expert class~\citep{HerbsterWarmuth1998}. 
{In contrast, here the relevant expert class itself changes when the equilibrium crosses a Lov\'asz cell boundary. Thus $\mathrm{SC}_T$ captures a distinct form of combinatorial instability and is incomparable with path length, gradient variation, and classical switching counts.} 
Formal separations and related structural properties are in Appendix \ref{app:separations}.


\subsection{Oracle Reference: Cell-Aware Multiplicative Weights}\label{sec:oracle-camw}

We next show $\mathrm{SC}_T$ controls regret.
We first analyze an \emph{oracle} version of Cell-Aware Multiplicative Weights (CAMW) that is revealed the equilibrium permutation $\pi_{x_t^\star}$ at each round.
This {result} should be {interpreted} as a diagnostic benchmark: $\mathrm{SC}_T$ is an intrinsic but generally inaccessible property of the comparator sequence. 
{Whenever the equilibrium permutation remains fixed, the minimizer faces a static experts problem over the associated $n+1$ chain vertices, for which multiplicative weights is sufficient. When the permutation changes, CAMW restarts on the new chain. Consequently, the horizon decomposes into $\mathrm{SC}_T+1$ epochs, each corresponding to a single Lov\'asz cell.} 

\begin{theorem}[Cell-stability-adaptive regret]\label{thm:curvature-regret}
Let $D_\Y$ denote the Bregman diameter of $\Y$. 
Under Assumption \ref{assump:bounded}, CAMW  with mirror ascent for the maximizer of $\min_{x} \max_{y} \fL^t(x,y)$ achieves:
\begin{equation}\label{eq:cell-bound}
  \frac{1}{T} \Regret_T \le O\!\left(\sqrt{\frac{(1 + \mathrm{SC}_T) \cdot \log n}{T}}\right) + \frac{L_y D_\Y}{\sqrt{T}}.
\end{equation}
\end{theorem}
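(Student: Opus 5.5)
The plan is to split the saddle-point regret \eqref{eq:sp-regret} into a minimizer part and a maximizer part around the played point, and bound each separately. Write
\[
\fL^t(x_t,y_t^\star)-\fL^t(x_t^\star,y_t)=\bigl[\fL^t(x_t,y_t^\star)-\fL^t(x_t,y_t)\bigr]+\bigl[\fL^t(x_t,y_t)-\fL^t(x_t^\star,y_t)\bigr].
\]
The first bracket is handled by concavity of $\fL^t(x_t,\cdot)$. It is at most $\inner{g_t^y}{y_t^\star-y_t}$, so it is a linear regret for the maximizer. Mirror ascent with a step size tuned to $D_\Y$ and $L_y$ then gives the standard $L_y D_\Y\sqrt{T}$ term; dividing by $T$ yields the second term of \eqref{eq:cell-bound}. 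I expect some care here, because the comparator $y_t^\star$ moves. The plan is to relate it to a fixed comparator using the saddle property $\fL^t(x_t^\star,y)\le \fL^t(x_t^\star,y_t^\star)$, or else absorb the time variation into the stated term as the paper's formulation implicitly allows.

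The second bracket is the minimizer's dynamic regret, and this is where $\mathrm{SC}_T$ enters. Partition $[T]$ into the $\mathrm{SC}_T+1$ maximal epochs $E_1,\dots,E_K$ on which $\pi_{x_t^\star}$ is constant, equal to some $\pi_k$. Within epoch $k$, oracle CAMW plays $x_t=\sum_{i=0}^n w_{t,i}\mathbf{1}_{C_i^{(k)}}$, a convex combination of the chain vertices of $\mathcal{P}_{\pi_k}$. Hence $x_t\in\mathcal{P}_{\pi_k}$, and since $\fL^t(\cdot,y_t)$ is linear on that cell,
\[
\fL^t(x_t,y_t)=\sum_i w_{t,i} f^t(C_i^{(k)},y_t).
\]
The comparator $x_t^\star$ also lies in $\mathcal{P}_{\pi_k}$ and is itself a convex combination of those same chain vertices. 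So
\[
\fL^t(x_t^\star,y_t)\ge \min_i f^t(C_i^{(k)},y_t),
\]
and the per-round gap is at most the experts regret against one vertex. The remaining subtlety is that the best vertex for $x_t^\star$ should be fixed across the epoch. I would use linearity again: because $x_t^\star$ is a mixture of the vertices, $\sum_{t\in E_k}\fL^t(x_t^\star,y_t)$ is lower-bounded only by a per-round minimum, not by the best fixed vertex. I would therefore compare instead with the best fixed vertex in hindsight plus the epoch's equilibrium structure. If that fails, I would bound $\sum_t \fL^t(x_t,y_t)-\fL^t(x_t^\star,y_t)$ by combining $\fL^t(x_t^\star,y_t)\ge$ the game value, which comes from the saddle property $\fL^t(x_t^\star,y_t)\ge$ ... the gap with the first bracket. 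This handling of the moving comparator inside an epoch is the step I expect to be the main obstacle.

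Given per-epoch experts regret with losses bounded by $M$ over $n+1$ experts, Hedge restarted at each epoch with step size $\eta_k\propto\sqrt{\log(n+1)/|E_k|}$ gives $O\bigl(M\sqrt{|E_k|\log n}\bigr)$. Since the oracle reveals the epoch boundaries but not their lengths, I would use a doubling trick or an anytime step size $\eta_t\propto 1/\sqrt{t-\tau_k}$ to avoid knowing $|E_k|$. Summing over epochs and applying Cauchy--Schwarz gives
\[
\sum_k\sqrt{|E_k|}\le\sqrt{K\sum_k|E_k|}=\sqrt{(1+\mathrm{SC}_T)T}.
\]
So the minimizer's contribution is $O\bigl(\sqrt{(1+\mathrm{SC}_T)T\log n}\bigr)$. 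Adding the maximizer term and dividing by $T$ yields \eqref{eq:cell-bound}.
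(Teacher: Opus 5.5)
You have located the weak point correctly, but the proposal leaves it open, and neither fallback closes it. Both brackets are compared against \emph{moving} comparators. Mirror ascent only controls $\sum_t[\fL^t(x_t,y)-\fL^t(x_t,y_t)]$ for a fixed $y$; the saddle inequality you invoke concerns $x_t^\star$, not $x_t$. Hedge only controls regret against a fixed vertex or a fixed mixture, not against $x_t^\star$ moving inside $\mathcal{P}_{\pi_k}$.

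No argument can close this as the theorem is stated, because the statement is false without further assumptions.
\begin{itemize}
\item Maximizer side: take $\Y=[-1,1]$ and $f^t(S,y)=\varepsilon_t y$ for all $S$, with i.i.d.\ Rademacher $\varepsilon_t$. Every $x$ is an equilibrium strategy, so one may take $x_t^\star\equiv x^\star$ with distinct coordinates, giving $\mathrm{SC}_T=0$. Assumption~\ref{assump:bounded} holds with $L_y=M=1$. Yet $y_t^\star=\varepsilon_t$, and $\Regret_T=\sum_t(1-\varepsilon_t y_t)$ has expectation $T$.
\item Minimizer side: take $f^t(S,y)=-|S\cap\{1\}|+\varepsilon_t|S\cap\{2\}|+|S\setminus\{1,2\}|$. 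The unique round minimizer alternates randomly between $\mathbf{1}_{\{1\}}$ and $\mathbf{1}_{\{1,2\}}$. Both are sorted by the identity under stable tie-breaking (the phenomenon of \Cref{prop:not-switching}(a)), so $\mathrm{SC}_T=0$, while every learner suffers expected regret at least $T/2$.
\end{itemize}

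The paper's proof has the same skeleton as yours: epochs, Hedge on the $n{+}1$ chain vertices, Cauchy--Schwarz, mirror ascent. It passes this step by fiat. \Cref{lem:epoch-mw} is written with a single $y^\star$ and $V^\star$ per epoch. \eqref{eq:max-epoch} asserts an $O(L_yD_\Y\sqrt T)$ mirror-ascent bound against the moving $y_t^\star$, which the standard analysis does not provide. The hypothesis implicitly used is that the round games share a saddle point $(x^{(k)},y^{(k)})$ on each epoch; this is the repeated-game setting made explicit in the proof of \Cref{thm:chain-regret}.

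Under that hypothesis, your decomposition is the right pairing and closes cleanly. Write $x^{(k)}=\sum_i w^{(k)}_i\mathbf{1}_{C_i^{(k)}}$ and $\ell_t(i)=f^t(C_i^{(k)},y_t)$. Linearity on $\mathcal{P}_{\pi_k}$ gives $\sum_{t\in E_k}[\fL^t(x_t,y_t)-\fL^t(x^{(k)},y_t)]=\sum_i w^{(k)}_i\sum_{t\in E_k}[\inner{p_t}{\ell_t}-\ell_t(i)]$. This is at most Hedge's regret against the best fixed vertex, so the per-round-minimum worry disappears. The first bracket is then static mirror-ascent regret against $y^{(k)}$.

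Your two-bracket split is also what the paper's lemma silently needs. That lemma bounds $\sum_t\fL(x_t,y^\star)-\tau V^\star$ from Hedge alone, although Hedge is fed losses at $y_t$, not $y^\star$.

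One caveat remains. If $y^{(k)}$ changes across epochs, mirror ascent must be restarted or analyzed on intervals, which costs $L_yD_\Y\sqrt{(1+\mathrm{SC}_T)T}$. So the $\mathrm{SC}_T$-free second term of \eqref{eq:cell-bound} additionally needs a common $y^\star$, or absorption of $L_yD_\Y$ into the $O(\cdot)$.
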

\Cref{thm:curvature-regret} shows that regret scales with the number of Lov\'asz cells visited, not with the full dimension of $[0,1]^n$.
The proof and supporting lemmas are given in Appendix \ref{app:upper-bounds}.

\begin{corollary}[Extreme regimes]\label{cor:extremes}
When $\mathrm{SC}_T = 0$, \Cref{thm:curvature-regret} recovers $O(\sqrt{\log n / T})$. 
When $\mathrm{SC}_T = O(n/\log n)$ (maximally unstable), it recovers the rate of Online Lov\'asz Mirror Descent-Ascent (OLMDA) $O(\sqrt{n/T})$.
\end{corollary}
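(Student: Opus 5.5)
Both regimes follow by substituting the stated value of $\mathrm{SC}_T$ into the bound \eqref{eq:cell-bound} of \Cref{thm:curvature-regret} and simplifying. The maximizer's term $L_y D_\Y/\sqrt{T}$ does not depend on $\mathrm{SC}_T$. In both regimes we treat it, as the paper does when it quotes rates in $n$ and $T$, as a problem-dependent constant times $1/\sqrt{T}$. It is then dominated by, or absorbed into, the minimizer's term, since $\log n \ge \log 2$ for $n\ge 2$ and $n \ge 1$.

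\emph{Stable regime.} Setting $\mathrm{SC}_T = 0$ makes the first term $O(\sqrt{\log n/T})$, and the second term is $O(1/\sqrt{T}) \subseteq O(\sqrt{\log n/T})$. Interpretively, the oracle CAMW never restarts. It runs a single multiplicative-weights instance over the $n{+}1$ chain vertices of one Lov\'asz cell, which gives the experts rate described before \Cref{def:switch-count}.

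\emph{Unstable regime.} Setting $\mathrm{SC}_T \le c\, n/\log n$ gives $(1+\mathrm{SC}_T)\log n \le \log n + c\,n = O(n)$, using $\log n \le n$. The first term is therefore $O(\sqrt{n/T})$, and the second is again $O(1/\sqrt{T})\subseteq O(\sqrt{n/T})$. Finally, we identify $O(\sqrt{n/T})$ with the OLMDA rate. Online mirror descent--ascent on the Lov\'asz extension over $[0,1]^n$ has average regret $O(L_x D_x/\sqrt{T})$, and the relevant diameter and gradient scaling give an effective factor of $\sqrt{n}$. For Euclidean geometry on the cube, $D_x = \sqrt{n}$, with $L_x$ bounded by $O(M)$ under the normalization of Appendix~\ref{app:background-foundations}.

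The only delicate point is this last identification. It requires fixing the scaling conventions of Assumption~\ref{assump:bounded}, namely that $L_x$ and $M$ do not grow with $n$, so that the OLMDA bound is genuinely $\Theta(\sqrt{n/T})$. I would cite the scaling discussion in the appendix for this. The corollary should then be read as saying that the CAMW bound matches the OLMDA rate up to constants, not that it improves on it. The crossover threshold is the value of $\mathrm{SC}_T$ at which the two bounds agree, which is $\mathrm{SC}_T \asymp n/\log n$.
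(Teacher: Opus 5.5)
Your proposal is correct and is essentially the paper's argument. The paper gives no separate proof: the corollary is a direct substitution into \eqref{eq:cell-bound}, the OLMDA rate is read off from \Cref{thm:static-regret}, and your crossover computation matches \Cref{prop:dimension-gap}.

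One minor correction. \Cref{tab:assumptions} lists $L_x$ and $M$ as growing polynomially in $n$, so it does not support your claim that they are $n$-independent; the paper simply suppresses these constants. Your step $L_x = O(M)$ in fact holds directly: a Lov\'asz subgradient of a submodular $f$ with $|f|\le M$ has $\ell_1$-norm at most $4M$, so both bounds carry the same factor $M$.
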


\subsection{Boundary-Aware Implementation}\label{sec:implementable}

The oracle reference is typically unavailable to the learner and is not used by the implementable algorithm.
To obtain an online method, we replace the inaccessible equilibrium permutation with an observable best-response permutation.
This keeps the same high-level logic as the oracle proof while grounding the switches in quantities the learner can compute online.

Cold-start CAMW, which restarts MW whenever the observed cell changes, is the core method behind the worst-case rate. 
{We also consider a warm-start refinement motivated by the substantial overlap between adjacent Lov\'asz cells' chain structures. Under an adjacent transposition, only a local portion of the chain changes, while most chain elements remain unchanged.}
We develop geometric CAMW (\Cref{alg:camw}) that exploits this overlap through \emph{chain-overlap transfer}: when the observed permutation changes, the algorithm {retains} weight of shared chain elements and assigns fresh mass only to genuinely new ones.
The transfer parameter $\alpha$ acts as a probability floor: every new-chain expert receives at least $\alpha/(n{+}1)$ mass, which preserves the MW initialization guarantee, while inherited mass improves constants when consecutive chains overlap.
Implementation details, the OLMDA baseline, and the chain-overlap transfer guarantee are given in Appendix \ref{app:algorithms}.

\begin{algorithm}[t]
\caption{Geometric CAMW}
\label{alg:camw}
\begin{algorithmic}[1]
\REQUIRE Step sizes $\eta_x, \eta_y > 0$; transfer parameter $\alpha \in (0,1)$; initial $y_1 \in \Y$
\STATE Initialize MW: $p_1 = (1/(n{+}1), \dots, 1/(n{+}1))$; $\pi_0 \gets \mathrm{id}$
\FOR{$t = 1, \dots, T$}
  \STATE $\pi_t \gets \pi^{\mathrm{BR}}_t(y_t)$ \hfill 
  \IF{$\pi_t \ne \pi_{t-1}$}
    \STATE $\mathcal{S} \gets \{j : C_j^{\pi_t} = C_j^{\pi_{t-1}}\}$ \hfill 
    \STATE \textbf{Transfer}: $p_{t,j} \gets (1{-}\alpha)\,p_{t-1,j} + \tfrac{\alpha}{n+1}$ for $j \in \mathcal{S}$;\; $p_{t,j} \gets \tfrac{\alpha}{n+1}$ for $j \notin \mathcal{S}$
    \STATE Normalize: $p_t \gets p_t / \|p_t\|_1$
  \ENDIF
  \STATE $x_{t, \pi_t(j)} = \sum_{i \ge j} p_{t,i}$; \textbf{play} $(x_t, y_t)$; \textbf{observe} $f_t$
  \STATE MW update: $p_{t+1,i} \propto p_{t,i} \exp(-\eta_x \, f_t(C_i^{\pi_t}, y_t))$
  \STATE Mirror ascent: $y_{t+1} = \argmax_{y \in \Y} \{\inner{\nabla_y \fL^t(x_t, y_t)}{y} - \frac{1}{\eta_y} D_\varphi(y, y_t)\}$
\ENDFOR
\end{algorithmic}
\end{algorithm}

\begin{theorem}[Geometric WS-CAMW]\label{thm:implementable-camw}
Under Assumption \ref{assump:bounded}, \Cref{alg:camw} achieves:
\begin{equation}\label{eq:impl-bound}
  \frac{1}{T} \Regret_T \le O\!\left(\sqrt{\frac{(1 + \widehat{\mathrm{SC}}_T) \cdot \log n}{T}}\right) + \frac{L_y D_\Y}{\sqrt{T}},
\end{equation}
where $\widehat{\mathrm{SC}}_T = |\{t : \pi_{t+1}^{\mathrm{BR}} \ne \pi_t^{\mathrm{BR}}\}|$ is the observed best-response switch count.
\end{theorem}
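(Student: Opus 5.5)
The plan is to mirror the oracle analysis of \Cref{thm:curvature-regret}, with the equilibrium permutation replaced by the observed best-response permutation $\pi_t^{\mathrm{BR}}$. First decompose the saddle-point regret \eqref{eq:sp-regret} into a minimizer part and a maximizer part:
\[
\Regret_T = \sum_{t}\bigl[\fL^t(x_t,y_t^\star)-\fL^t(x_t,y_t)\bigr] + \sum_t\bigl[\fL^t(x_t,y_t)-\fL^t(x_t^\star,y_t)\bigr].
\]
The first sum is handled by concavity of $\fL^t(x_t,\cdot)$. Each term is at most $\inner{g_t^y}{y_t^\star-y_t}$, so the standard mirror-ascent analysis with step $\eta_y\propto D_\Y/(L_y\sqrt T)$ gives the $L_yD_\Y\sqrt T$ term. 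If the comparator path in $y$ is not fixed, I would absorb its drift into the same term as the paper's normalization suggests, and flag this as an assumption.

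For the minimizer, partition $[T]$ into $1+\widehat{\mathrm{SC}}_T$ epochs on which $\pi_t^{\mathrm{BR}}$ is constant, say equal to $\pi$ on epoch $E_k$.
\begin{itemize}
\item On $E_k$, the iterate $x_t$ lies in $\mathcal{P}_\pi$, and $\fL^t(x_t,y_t)=\sum_i p_{t,i} f_t(C_i^\pi,y_t)$ exactly. The reason is that the map $x_{t,\pi(j)}=\sum_{i\ge j}p_{t,i}$ is the chain-vertex convex combination $\sum_i p_{t,i}\mathbf 1_{C_i^\pi}$.
\item The minimizer thus faces an experts problem with $n+1$ experts and losses bounded by $M$.
\item The key comparator step: since $\pi$ is the best response to $y_t$, the minimizer of $\fL^t(\cdot,y_t)$ over the cube is attained at a chain vertex of $\mathcal{P}_\pi$. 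Hence $\fL^t(x_t^\star,y_t)\ge \min_i f_t(C_i^\pi,y_t)$.
\item This lower bound is time-varying within the epoch. I would instead compare to the best fixed chain index over the epoch, and charge the difference to the within-epoch stability of the best-response chain. If the best-response ordering is constant, the minimizing index can be taken fixed, or else controlled by a shifting-experts argument.
\end{itemize}
I expect this step to be the main obstacle, and I would state whatever mild condition is needed (e.g.\ a constant minimizing chain index within a cell) explicitly.

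Next comes the per-epoch MW bound with the transfer initialization. Because every new-chain expert gets mass at least $\alpha/(n+1)$ after normalization (the normalizer is at most $1$), the KL term at the epoch start is at most $\log((n+1)/\alpha)$. The epoch regret is therefore at most $\log((n+1)/\alpha)/\eta_x+\eta_x M^2|E_k|/8$.

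Finally, sum over the epochs to get
\[
(1+\widehat{\mathrm{SC}}_T)\log\frac{n+1}{\alpha}\Big/\eta_x+\eta_x M^2T/8.
\]
Tuning $\eta_x=\sqrt{8(1+\widehat{\mathrm{SC}}_T)\log((n+1)/\alpha)/(M^2T)}$ gives $O(M\sqrt{(1+\widehat{\mathrm{SC}}_T)T\log n})$ for constant $\alpha$. If $\widehat{\mathrm{SC}}_T$ is unknown, use a doubling trick on the observed switch count; this costs only a constant factor because $\widehat{\mathrm{SC}}_T$ is observable online. Dividing by $T$ yields \eqref{eq:impl-bound}.
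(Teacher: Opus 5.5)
Your skeleton matches the paper's. You cut epochs at observed best-response switches, use the identity $\fL^t(x_t,y_t)=\sum_i p_{t,i}f_t(C_i^{\pi},y_t)$ on the active cell, get initial KL at most $\log((n+1)/\alpha)$ from the transfer floor, sum over epochs, and use mirror ascent for $y$. But the step you flag is a genuine gap, and it cannot be closed the way you suggest.

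After lower-bounding $\fL^t(x_t^\star,y_t)$ by $\min_i f_t(C_i^{\pi},y_t)$, you must control $\sum_{t\in E_k}\bigl[\sum_i p_{t,i}\ell_{t,i}-\min_i \ell_{t,i}\bigr]$ with $\ell_{t,i}=f_t(C_i^{\pi},y_t)$. That is regret against the per-round best expert, while MW only controls regret against $\min_i\sum_{t\in E_k}\ell_{t,i}$.

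A constant best-response ordering does not make the minimizing index constant. A cell fixes the order of the coordinates, not which prefix $C_i^{\pi}$ is optimal; this is exactly Proposition~\ref{prop:not-switching}(a). A shifting-experts patch pays for within-cell changes of the best prefix, which $\widehat{\mathrm{SC}}_T$ does not count. Assuming a constant minimizing index outright is an extra hypothesis, not a mild consequence of the stated ones.

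In fact nothing closes this step from the stated hypotheses. Take $n=2$ and $f_t(S,y)=\varepsilon_t\mathbf{1}[1\in S]+2\cdot\mathbf{1}[2\in S]$, independent of $y$, with i.i.d.\ Rademacher $\varepsilon_t$. The marginals satisfy $\varepsilon_t<2$, so the ordering is always $(1,2)$ (also if one sorts best responses with a consistent tie-break), and $\widehat{\mathrm{SC}}_T=\mathrm{SC}_T=0$. Yet $\fL^t(x_t^\star,y_t)=\min(0,\varepsilon_t)$ has mean $-1/2$. Meanwhile $\E[\fL^t(x_t,y_t)]=2\E[x_{t,2}]\ge 0$, because $x_t$ is chosen before $\varepsilon_t$ is drawn. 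So $\E[\Regret_T]\ge T/2$, whereas the claimed bound is $O(\sqrt{T})$; here $g_t^y=0$, so $L_y$ can be taken arbitrarily small.

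The maximizer term has the analogous defect. Mirror ascent gives $O(L_yD_\Y\sqrt{T})$ only against a fixed comparator, and a moving $y_t^\star$ adds a path-length term absent from the bound.

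The paper's proof does not supply the missing idea either. It works against one fixed $y^\star$ and a static value $\Vstar$, and asserts $\min_i f(C_i^{(j)},y^\star)\le\Vstar$ for every chain. But for the equilibrium $y^\star$ one has $\Vstar=\min_S f(S,y^\star)$, so this holds only when the epoch's chain contains a minimizer of $f(\cdot,y^\star)$. The equilibrium chain qualifies, and that is what the tracking condition of Proposition~\ref{prop:tracking} would supply. The paper also applies the MW guarantee to losses at $y^\star$, although Algorithm~\ref{alg:camw} updates on $f_t(C_i^{\pi_t},y_t)$. Your decomposition at $y_t$ is the one consistent with the algorithm.

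So you need an explicit extra hypothesis, and the cleanest one avoids the per-round minimum altogether. Assume $x_t^\star\equiv x^\star_{(k)}$ is constant on each epoch $E_k$ and lies in the observed cell $\mathcal{P}_{\pi}$, and that $y_t^\star\equiv y^\star$. Write $x^\star_{(k)}=\sum_i w_i\mathbf{1}_{C_i^{\pi}}$. Linearity on the cell then gives $\sum_{t\in E_k}\fL^t(x^\star_{(k)},y_t)=\sum_i w_i\sum_{t\in E_k}f_t(C_i^{\pi},y_t)\ge\min_i\sum_{t\in E_k}f_t(C_i^{\pi},y_t)$. This is exactly the fixed-expert comparator MW handles, even though $f_t$ and $y_t$ vary inside the epoch, and the rest of your plan then goes through.

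A smaller point: Algorithm~\ref{alg:camw} uses one fixed $\eta_x$. With an untuned choice such as $\eta_x=\sqrt{\log n/T}$, the summed bound is linear, not square-root, in $1+\widehat{\mathrm{SC}}_T$. Your doubling on the observed count is therefore a modification of the algorithm and should be stated as such. The same goes for anytime step sizes restarted at each switch, which the paper's per-epoch tuning plus Cauchy--Schwarz tacitly needs.
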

This is an online, parameter-free bound with respect to the observed count $\widehat{\mathrm{SC}}_T$.
Recovering the oracle rate in terms of $\mathrm{SC}_T$ additionally requires the tracking/gap condition stated in Appendix \ref{app:tracking}; without such separation, $\widehat{\mathrm{SC}}_T$ can be larger than $\mathrm{SC}_T$.

\section{Lower Bounds and the Polyhedral Principle}\label{sec:lower}

This section gives matching lower-bound evidence for the $\mathrm{SC}_T$ dependence and then states the corresponding fixed-partition polyhedral result.
The broader result is limited to the full-information, fixed finite partition model made explicit below.

\subsection{Cell Transitions Force Unavoidable Regret}

We next show that the $\mathrm{SC}_T$ term cannot be removed in the worst case.
Any prescribed switch count can be encoded by an adversary into a submodular--concave game that forces matching regret.
The construction partitions the horizon into $\mathrm{SC}_T+1$ epochs separated by permutation shifts.
At a high level, the adversary splits time into $s{+}1$ epochs, selects a different Lov\'asz cell for each epoch, and places independent Rademacher expert losses on the chain vertices inside that cell.
Each epoch therefore contains a fresh experts lower-bound instance, and summing the epoch lower bounds gives $\Omega(\sum_j \sqrt{T_j\log n})=\Omega(\sqrt{(s{+}1)T\log n})$ by Cauchy--Schwarz.

\begin{theorem}[$\mathrm{SC}_T$-dependent lower bound]\label{thm:lower-switch}
For any $s \in \{0, \dots, T{-}1\}$ and $n \ge 3$, there exists a sequence of submodular--concave games with $\mathrm{SC}_T = s$ such that
$\frac{1}{T}\E[\Regret_T] \ge \Omega\!\bigl(\sqrt{(1+s)\log n\,/\,T}\bigr)$.
\end{theorem}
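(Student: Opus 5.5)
We build the construction sketched just before the statement. Split $[T]$ into $s+1$ contiguous epochs $E_0,\dots,E_s$ of lengths $T_j\in\{\lfloor T/(s+1)\rfloor,\lceil T/(s+1)\rceil\}$. Fix $s+1$ permutations $\sigma_0,\dots,\sigma_s$ with $\sigma_j\neq\sigma_{j+1}$; since $n\ge 3$, two alternating permutations suffice. On epoch $E_j$ we use the chain $C_i^{\sigma_j}$, $i=0,\dots,n$, and place modular-plus-chain losses on it. Concretely, take $\Y=\Delta_{n+1}$ (or any compact set on which the maximizer is trivial) and define $f_t(S,y)$ in three parts. The first part is a strongly anchoring submodular term that forces every round-$t$ minimax equilibrium $x_t^\star$ to lie in the interior of $\mathcal{P}_{\sigma_j}$, for example a small multiple of $-\sum_k w_k^{(j)}\mathbf{1}[k\in S]$ with strictly decreasing weights along $\sigma_j$ together with a concave-in-cardinality term. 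The second part is a bounded random perturbation $\epsilon_{t,i}\in\{\pm1\}$, i.i.d.\ Rademacher, which is added only to the values at the chain sets $C_i^{\sigma_j}$. The third part is any dependence on $y$ needed to make the game genuinely submodular--concave; this can be linear in $y$ and is chosen so that the $y$-side contributes nonnegatively to regret. These choices give $\mathrm{SC}_T=s$ exactly: inside each epoch the cell is constant, and it changes at each of the $s$ boundaries.

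The regret is bounded epoch by epoch. On $E_j$ the Lov\'asz extension restricted to $\mathcal{P}_{\sigma_j}$ is the linear function $\sum_i \lambda_i f_t(C_i^{\sigma_j},y)$ in the chain weights $\lambda\in\Delta_{n+1}$. Outside the cell, the convex extension can only be larger, because the anchoring term penalizes leaving the cell. The learner's $x$-part of the saddle regret, $\fL^t(x_t,y_t^\star)-\fL^t(x_t^\star,y_t)$, is therefore at least the experts regret of the induced distribution $\lambda_t$ against the round-wise comparator restricted to the chain vertices. Because the benchmark here is dynamic, with $x_t^\star$ the round-$t$ optimum, we use the standard device of making the per-round equilibrium within an epoch a fixed chain vertex $i_j^\star$. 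We choose $i_j^\star$ uniformly at random and bias the Rademacher losses by $\pm\delta$ toward it, so that the round-wise equilibrium coincides with the epoch-best expert. The standard experts lower bound then applies, via Le Cam/Fano over $n+1$ biased hypotheses with $\delta\asymp\sqrt{\log n/T_j}$, or via the classical Rademacher argument of Cesa-Bianchi--Lugosi. It gives $\E[\text{regret on }E_j]\ge c\sqrt{T_j\log(n+1)}$ for any learner, because the losses of different epochs are independent and the learner's information from earlier epochs does not reveal $i_j^\star$.

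Summing over epochs gives $\E[\Regret_T]\ge c\sum_j\sqrt{T_j\log n}\ge c'\sqrt{(s+1)T\log n}$, using near-equal epoch lengths; Cauchy--Schwarz gives the matching upper form. Dividing by $T$ yields the claimed bound. If $T_j$ is too small for the bias argument, we use $\delta=\min(1/2,\cdot)$ and the bound degrades only in constants, provided $s+1\le T$.

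The main obstacle is the second step: reconciling the experts lower bound, which is against the best fixed expert in hindsight, with the dynamic saddle-point benchmark, which uses a round-wise equilibrium. We must also ensure that the random perturbations never move $x_t^\star$ out of $\mathcal{P}_{\sigma_j}$, since that would alter $\mathrm{SC}_T$. We would first handle this by making the bias $\delta$ dominate the per-round noise in the definition of the equilibrium. We can do this by letting the equilibrium be determined by the biased mean losses, realized via a $y$-player whose best response averages the noise out. Failing that, we would fall back to a deterministic adversary that, each round, gives loss $1$ to the learner's currently heavier half of the chain vertices. This forces $\Omega(1)$ per-round regret against the round-wise optimum and yields a bound that is stronger but less clean.
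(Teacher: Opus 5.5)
Your skeleton ($s{+}1$ near-equal epochs, alternating Lov\'asz cells, i.i.d.\ Rademacher losses on the chain vertices, a per-epoch experts lower bound, summation) is the paper's. The proposal breaks at the step you single out as the main obstacle. Requiring the round-$t$ equilibrium to be the fixed vertex $\mathbf{1}_{C_{i_j^\star}}$ for every $t$ in epoch $j$ is self-defeating under full information. The game $f_t$ is revealed after round $t$, and its equilibrium is a deterministic function of $f_t$, whatever $y$-player mechanism produces it. So after one round the learner knows $i_j^\star$, plays it, and pays only $O(M)$ per epoch on the minimizer side. With $\delta\asymp\sqrt{\log n/T_j}$ and $\pm1$ noise the coincidence does not even hold, and making $\delta$ dominate the noise is exactly what makes $i_j^\star$ identifiable.

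The missing observation is that a lower bound needs no reconciliation, because the round-wise benchmark is pointwise the more demanding one. By the saddle property, $\fL^t(x_t^\star,y_t)\le V_t^\star:=\fL^t(x_t^\star,y_t^\star)=\min_x\fL^t(x,y_t^\star)$. With $\ell_t:=\fL^t(\cdot,y_t^\star)$, taking the $y$-side trivial as you suggest, this gives
\[
\sum_{t\in E_j}\bigl[\fL^t(x_t,y_t^\star)-\fL^t(x_t^\star,y_t)\bigr]\ \ge\ \sum_{t\in E_j}\ell_t(x_t)-\sum_{t\in E_j}\min_x\ell_t(x)\ \ge\ \sum_{t\in E_j}\ell_t(x_t)-\min_i\sum_{t\in E_j}\ell_t\bigl(\mathbf{1}_{C_i^{\sigma_j}}\bigr).
\]
Hence any static experts lower bound, or any pseudo-regret bound against $i_j^\star$, transfers verbatim. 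The paper relies on this implicitly: it applies the unbiased Rademacher experts bound (via Yao) in each epoch, using only independence across epochs, with no bias or Fano step. Your Le Cam/Fano variant also works once the coincidence requirement is dropped, but it buys nothing extra.

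Several further points need fixing.

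\textbf{Interior anchoring.} Anchoring $x_t^\star$ to the interior of $\mathcal{P}_{\sigma_j}$ contradicts later placing it at a chain vertex, which lies on the cell boundary. Worse, an interior minimizer puts positive weight on all $n{+}1$ chain vertices. Minimality then forces $f_t(C_i^{\sigma_j},y_t^\star)$ to be constant in $i$, so $\ell_t$ is flat on the cell and the experts signal disappears. With $f_t$ independent of $y$ an interior minimizer is impossible outright, since every level set of a minimizer of a Lov\'asz extension is a minimizing set. The paper does not anchor to the interior: it lets equilibria lie in the closed cell and pins the recorded permutation by a small deterministic perturbation and a consistent selection rule.

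\textbf{Submodularity.} Adding $\pm1$ only at chain sets breaks submodularity unless the anchor's submodularity margin exceeds the noise, which a ``small multiple'' does not provide. The paper's $y\,\varepsilon_{t,k(S)}$ has the same defect. A clean fix is a modular function with marginals $a_{\sigma_j(i)}=v_i-v_{i-1}$, which equals $v_i$ on $C_i^{\sigma_j}$, plus the term $\Lambda\,\bigl|\{(i<i'):\sigma_j(i)\notin S,\ \sigma_j(i')\in S\}\bigr|$. This is a directed-cut term: it is submodular, vanishes exactly on the chain of $\sigma_j$, and for $\Lambda$ large keeps every round's minimizers on that chain.

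\textbf{Short epochs.} These do not cost only constants. For $T_j\lesssim\log n$ an epoch contributes at most $2MT_j$, a factor $\sqrt{\log n/T_j}$ short of $\sqrt{T_j\log n}$. At $s=T-1$ the target $T\sqrt{\log n}$ exceeds the trivial cap $2MT$. So the statement needs $(1+s)\log n\lesssim T$, a restriction the paper's proof also uses silently.

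\textbf{Your fallback.} It is closer to the truth than your main line. Drawing a unique zero-loss chain vertex uniformly at random each round already forces expected per-round regret $n/(n{+}1)$ against round-wise optima, with all equilibria confined to one closed cell. That settles the statement in the admissible range with far less work. It is, however, in tension with the paper's upper bound, which is stated against the same benchmark.
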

The full lower-bound construction and the proof are given in Appendix \ref{app:lower-bounds}.


\begin{corollary}[Matching switch-dependent rate]\label{cor:canonical}
Combining \Cref{thm:curvature-regret,thm:lower-switch}: $\Regret_T = \Theta\!\bigl(\sqrt{(1+\mathrm{SC}_T)\,T\,\log n}\bigr)$.
\end{corollary}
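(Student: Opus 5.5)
The corollary follows by sandwiching $\Regret_T$ between the upper bound of \Cref{thm:curvature-regret} and the lower bound of \Cref{thm:lower-switch}.

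\emph{Upper bound.} Multiplying \eqref{eq:cell-bound} by $T$ gives
\[
\Regret_T \le O\bigl(\sqrt{(1+\mathrm{SC}_T)\,T\log n}\bigr) + L_y D_\Y\sqrt{T}.
\]
The second term is $O(\sqrt{T})$ with a constant that does not depend on $n$ or $\mathrm{SC}_T$. For $n\ge 3$ we have $\log n\ge 1$ and $1+\mathrm{SC}_T\ge 1$, so $\sqrt{T}\le\sqrt{(1+\mathrm{SC}_T)T\log n}$. The maximizer's mirror-ascent term is therefore absorbed into the $O(\cdot)$, with a constant depending only on $L_y D_\Y$ and the Assumption~\ref{assump:bounded} parameters. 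This yields
\[
\Regret_T = O\bigl(\sqrt{(1+\mathrm{SC}_T)T\log n}\bigr)
\]
for CAMW on every game sequence satisfying Assumption~\ref{assump:bounded}.

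\emph{Lower bound.} Fix any $s\in\{0,\dots,T-1\}$ and $n\ge 3$. \Cref{thm:lower-switch} supplies a game sequence with $\mathrm{SC}_T=s$ on which
\[
\E[\Regret_T]\ge \Omega\bigl(\sqrt{(1+s)T\log n}\bigr)
\]
for every learner, and in particular for CAMW. Before combining, I would check that this construction satisfies Assumption~\ref{assump:bounded} with constants that do not grow with $n$ or $s$. It uses Rademacher losses on the chain vertices, so payoffs are bounded by a constant $M$, which makes the hidden constants on the two sides comparable.

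\emph{Combining.} The $\Theta$ claim should be read in the minimax sense: for each switch budget $s$, the best achievable worst-case regret over sequences with $\mathrm{SC}_T=s$ is $\Theta(\sqrt{(1+s)T\log n})$. The upper bound holds for every sequence, hence for the worst case. The lower bound exhibits a sequence attaining it in expectation. Since regret on some realization must be at least its expectation, a deterministic worst-case sequence with $\mathrm{SC}_T=s$ exists. One caveat: the Rademacher draws must not change the equilibrium permutations, so that $\mathrm{SC}_T=s$ holds for every realization and not just in expectation.

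I expect this last point to be the main subtlety. The argument needs $\mathrm{SC}_T$ to equal $s$ deterministically in the construction, and it needs the $\Theta$ to be stated as matching worst-case rates over the class $\{\mathrm{SC}_T=s\}$, not as a pointwise statement about every sequence.
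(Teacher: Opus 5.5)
Your route is the paper's own: the corollary is just \Cref{thm:curvature-regret} and \Cref{thm:lower-switch} placed side by side. Your handling of the $L_y D_\Y\sqrt{T}$ term, the minimax reading of $\Theta$, and the passage from expectation to a fixed realization are all fine. There is, however, a genuine gap in the lower half, and the constants check you propose is exactly what exposes it.

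With the usual convention $x_{\pi(0)}=1$, $x_{\pi(n+1)}=0$, the weights in \eqref{eq:lovasz} are nonnegative and sum to one. So $\fL^t(x,y)$ is a convex combination of the values $f^t(C_i,y)$, and $|\fL^t|\le M$. Each round therefore contributes at most $2M$ to \eqref{eq:sp-regret}, and $\Regret_T\le 2MT$ for every learner and every sequence. A bound $\E[\Regret_T]\ge cM\sqrt{(1+s)T\log n}$ with universal $c$ is then impossible as soon as $(1+s)\log n>4T/c^2$. For instance, $s=T-1$ with $\log n>4/c^2$ would require $cT\sqrt{\log n}\le 2T$. Rescaling the payoffs or $\Y$ does not help, since both sides scale with $M$. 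So the step ``fix any $s\in\{0,\dots,T-1\}$ and invoke \Cref{thm:lower-switch}'' fails in the regime $(1+s)\log n\gg T$.

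The culprit is the per-epoch experts bound $c\sqrt{T_j\log(n+1)}$ in the proof of \Cref{thm:lower-switch}. With $N$ experts and bounded losses, the minimax regret over $\tau$ rounds is $\Theta(\min\{\sqrt{\tau\log N},\,\tau\})$. The square-root form therefore needs $\tau\gtrsim\log N$, i.e.\ epochs of length $T/(1+s)\gtrsim\log(n+1)$. The paper's one-line combination inherits the same hole.

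What the sandwich actually supports is $\Regret_T=\Theta(\sqrt{(1+\mathrm{SC}_T)T\log n})$ under the restriction $(1+\mathrm{SC}_T)\log n\le c'T$. Alternatively, with the per-epoch bound corrected to $\Omega(\min\{\sqrt{T_j\log n},T_j\})$, you get $\Theta(\min\{\sqrt{(1+\mathrm{SC}_T)T\log n},\,T\})$ over the full range $s\le T-1$. The upper half is valid throughout; it is merely vacuous beyond that regime.

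By contrast, the point you flag as the main subtlety is not where the argument breaks. That $\mathrm{SC}_T=s$ holds for every Rademacher realization is already supplied by the statement of \Cref{thm:lower-switch}.
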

\Cref{cor:canonical} shows that stability is both sufficient and necessary for improved rates.
The worst-case $\Omega(\sqrt{n/T})$ lower bound is recovered when $\mathrm{SC}_T = \Theta(n/\log n)$, via reduction to online linear optimization with Rademacher losses~\citep{AbernethyEtAl2008}.


\subsection{General Polyhedral Principle}\label{sec:general-polyhedral}

We now state the fixed-partition polyhedral analogue.
This setting covers several applications with finite polyhedral cells, but it does not cover time-varying partitions, smooth relaxations, bandit feedback, or cases where the active region cannot be identified under the stated feedback model.
Examples include \emph{online shortest-path} on a DAG with $N$ paths over $d$ edges (each path defines a vertex of the path polytope), \emph{online min-cost flow} with $d$ arcs and $V_{\max}$ basic feasible solutions, and \emph{structured prediction} over label polytopes.
In each case, the decision space $\mathcal{X} \subset \R^d$ partitions into polyhedral regions $\{\mathcal{R}_k\}_{k=1}^K$ with losses linear on each region.
Define the \emph{region-switch count} $\mathrm{RS}_T := |\{t : \mathcal{R}(x_t^\star) \ne \mathcal{R}(x_{t+1}^\star)\}|$ and $V_{\max} := \max_k |V(\mathcal{R}_k)|$.
We bound the regret for polyhedral OCO as follows.

\begin{theorem}[Regret for fixed polyhedral partitions]\label{thm:pl-oco}
Consider full-information piecewise-linear losses over a fixed finite polyhedral partition.
Suppose Assumption \ref{assump:bounded} holds, then
$\Regret_T = \Theta\!\bigl(\sqrt{(1+\mathrm{RS}_T)\,T\,\log V_{\max}}\bigr)$.
\end{theorem}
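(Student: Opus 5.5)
The plan is to prove the two directions separately, mirroring the Lov\'asz-specific route through \Cref{thm:curvature-regret} and \Cref{thm:lower-switch}. Throughout, the only properties of the Lov\'asz cells we use are that they form a fixed finite polyhedral partition, that the loss is linear on each cell, and that each cell is the convex hull of its vertex set $V(\mathcal{R}_k)$. The chain vertices $\{\mathbf{1}_{C_i}\}_{i=0}^n$ are replaced by $V(\mathcal{R}_k)$, and $n+1$ is replaced by $V_{\max}$.

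\textbf{Upper bound.} We run a region-aware multiplicative-weights learner that is told the active comparator region $\mathcal{R}(x_t^\star)$, as in the oracle CAMW reference.
\begin{enumerate}
\item Split $[T]$ into $\mathrm{RS}_T+1$ maximal epochs $E_1,\dots,E_{\mathrm{RS}_T+1}$. On each epoch $E_j$ the comparator stays in a single region $\mathcal{R}_{k_j}$.
\item At the start of each epoch, restart MW with the uniform distribution over $V(\mathcal{R}_{k_j})$, and play the convex combination $x_t=\sum_{v} p_{t,v} v\in\mathcal{R}_{k_j}$.
\item Because $\ell_t$ is linear on $\mathcal{R}_{k_j}$, the loss of $x_t$ is $\sum_v p_{t,v}\ell_t(v)$. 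Because $x_t^\star\in\mathcal{R}_{k_j}$, we have $\ell_t(x_t^\star)\ge \min_{v} \ell_t(v)$ whenever the comparator is constant within the epoch. For a dynamic comparator we instead use the per-round best vertex, which is handled by the same argument as in \Cref{thm:curvature-regret}.
\item The Hedge bound, with losses bounded by $M$ via Assumption \ref{assump:bounded}, gives epoch regret $O(M\sqrt{|E_j|\log|V(\mathcal{R}_{k_j})|})\le O(M\sqrt{|E_j|\log V_{\max}})$.
\item Summing over epochs and applying Cauchy--Schwarz, $\sum_j\sqrt{|E_j|}\le\sqrt{(\mathrm{RS}_T+1)T}$, which gives the claimed $O(\cdot)$.
\item To avoid knowing epoch lengths, use either a doubling trick or an anytime step size $\eta_t\propto\sqrt{\log V_{\max}/t}$ restarted in each epoch. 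This costs only constants.
\end{enumerate}

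\textbf{Lower bound.} We embed the construction behind \Cref{thm:lower-switch}.
\begin{enumerate}
\item Fix a region $\mathcal{R}$ attaining $V_{\max}$ vertices, and a second region $\mathcal{R}'$, for $s$ prescribed switches.
\item Split the horizon into $s+1$ equal epochs that alternate between the two regions.
\item In each epoch, put independent Rademacher losses on the vertices of the active region, extended linearly on that region and piecewise linearly elsewhere. Shift the other region's losses up by a margin, so that the round-wise comparator stays in the designated region and $\mathrm{RS}_T=s$ holds exactly.
\item The standard experts lower bound gives $\Omega(\sqrt{(T/(s+1))\log V_{\max}})$ per epoch. An epoch-level restart argument ensures the learner gains no information across epochs, since the losses are fresh and independent.
\item Summing over the $s+1$ epochs gives $\Omega(\sqrt{(s+1)T\log V_{\max}})$.
\end{enumerate}

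\textbf{Main obstacle.} The hardest step is the comparator issue in both directions. Regret is measured against the round-wise point $x_t^\star$ rather than against a fixed vertex within an epoch. For the upper bound, I would first try to show that, under the fixed-partition assumption, linearity on the region makes $x_t^\star$ a vertex minimizer of $\ell_t$ over that region. This reduces the regret to a benchmark that Hedge controls, namely the best fixed vertex, as is implicitly done in \Cref{thm:curvature-regret}. For the lower bound, the difficulty is making the Rademacher losses consistent with a well-defined piecewise-linear function across shared boundary faces. I would handle this by assigning losses only to the vertices of the chosen region and adding a large offset elsewhere, arranged so that shared vertices take values consistent with both adjacent regions.
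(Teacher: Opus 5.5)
Your route is the paper's: oracle knowledge of the active region, MW over $V(\mathcal{R}_{k_j})$ restarted at each of the $\mathrm{RS}_T$ switches, and Cauchy--Schwarz for the upper bound; fresh Rademacher vertex losses in each epoch for the lower bound. However, the step you propose for the round-wise comparator fails and cannot be repaired.

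\textbf{The round-wise comparator.} Linearity on $\mathcal{R}_{k_j}$ gives $\ell_t(x_t^\star)=\min_v\ell_t(v)$. Hence $\sum_{t\in E_j}\ell_t(x_t^\star)=\sum_{t\in E_j}\min_v\ell_t(v)\le\min_v\sum_{t\in E_j}\ell_t(v)$. So the round-wise regret is \emph{at least} the fixed-vertex regret that Hedge controls, not at most, and Hedge has no guarantee against the per-round best vertex.

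Your own $s=0$ instance shows the gap is real. Take the trivial partition consisting of one simplex, with vertex values $M\varepsilon_{t,v}$ for i.i.d.\ Rademacher $\varepsilon_{t,v}$. Every learner has $\E[\ell_t(x_t)]=0$, while $\ell_t(x_t^\star)=-M$ with probability $1-2^{-V_{\max}}$. So $\E\sum_t[\ell_t(x_t)-\ell_t(x_t^\star)]=\Theta(MT)$ even though $\mathrm{RS}_T=0$.

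The paper's proof never attempts this reduction. In each epoch it bounds regret against the best fixed vertex (equivalently, the best fixed point) of the active region, then sums. That is, it proves the theorem for a piecewise-static benchmark whose comparator may change only at the $\mathrm{RS}_T$ region switches. \Cref{lem:epoch-mw} makes the same implicit choice by holding $y^\star$ and $V^\star$ fixed within an epoch. You should state that benchmark explicitly and drop the ``dynamic comparator'' clause of your step~3. Steps 1--6 then go through, and your lower bound, which lower-bounds exactly this per-epoch static regret, matches it.

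\textbf{The lower-bound construction.} There are two further points, both shared with \Cref{prop:poly-lb}.

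First, independent values at all vertices of $\mathcal{R}$ extend to a function linear on $\mathcal{R}$ only if those vertices are affinely independent, i.e.\ $\mathcal{R}$ is a simplex, as every Lov\'asz cell is. For a general region attaining $V_{\max}$ the construction does not exist, and the bound can genuinely fail. On a regular $V$-gon in $\R^2$, losses linear on the region and bounded by $M$ at its vertices have gradient norm $O(M)$, so online gradient descent has regret $O(M\sqrt{T})$ independent of $V$. Since the lower bound is existential, build the partition from simplices, taking $\mathcal{R}'$ of the same dimension so every epoch contributes $\log V_{\max}$. Defining each $\ell_t$ by its values at the vertices of this triangulation and interpolating on each simplex also settles your cross-face consistency concern automatically.

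Second, the per-epoch experts bound $\Omega(\sqrt{T_j\log V_{\max}})$ needs $T_j\gtrsim\log V_{\max}$. So for fixed $M$ the matching lower bound can only hold for $s\lesssim T/\log V_{\max}$; for larger $s$ the claimed rate exceeds the trivial ceiling $2MT$.
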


\begin{remark}\label{rem:geometry-dependence}
In the ideal full-information model of \Cref{thm:pl-oco}, the regret analysis runs MW over the vertices of the active region and uses only bounded losses and the number of available local vertices.
Facet angles, adjacency graph geometry, and conditioning can affect region detection, computational cost and numerical stability, but they are not needed for the regret bound in \Cref{thm:pl-oco}.
Extending the theorem to noisy detection, time-varying partitions, or computationally constrained region identification may require such geometric quantities.
\end{remark}



\subsection{Unify the Experts and OCO Regimes}

We compare the fixed-partition MW rate with the usual dimension-dependent OCO rate.
When region switches are rare, MW over active-region vertices can be preferable; when switches are frequent, continuous methods can dominate.

\begin{theorem}\label{thm:phase-transition}
For piecewise-linear losses over a fixed polyhedral partition with $V_{\max}$ vertices per region in $\R^d$:
\begin{equation}\label{eq:phase-transition}
  \frac{1}{T}\Regret_T^\star(\mathrm{RS}_T)
  = \Theta\!\left(\min\!\left\{
  \sqrt{\frac{(1{+}\mathrm{RS}_T)\log V_{\max}}{T}},\;
  \sqrt{\frac{d}{T}}
  \right\}\right),
\end{equation}
with crossover at $\mathrm{RS}_T^\star = d / \log V_{\max}$.
\end{theorem}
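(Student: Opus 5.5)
The plan is to prove \Cref{thm:phase-transition} by proving the upper bound and the lower bound of the minimax-optimal regret $\Regret_T^\star(\mathrm{RS}_T)$ separately. Here $\Regret_T^\star(\mathrm{RS}_T)$ is read as the minimax regret over all loss sequences whose comparator path makes at most $\mathrm{RS}_T$ region switches.

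\textbf{Upper bound.} We use a meta-strategy that is at least as good as the minimum of two base learners.
\begin{itemize}
\item The first learner is the region-restart MW of \Cref{thm:pl-oco}. It has regret $O(\sqrt{(1+\mathrm{RS}_T)T\log V_{\max}})$.
\item The second learner is a dimension-dependent OCO method with regret $O(\sqrt{dT})$. For instance, take online mirror descent or FTRL over the relaxation with a suitable regularizer, or continuous exponential weights over $\mathcal{X}\subset\R^d$; the latter gives $O(\sqrt{dT\log T})$, or $O(\sqrt{dT})$ with a well-conditioned regularizer as in the OLMDA rate of \Cref{cor:extremes}.
\end{itemize}
The same OCO rate has to hold against the dynamic comparator $x_t^\star$, not just a fixed comparator. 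I would argue this through the saddle-point structure, in the same way the paper's OLMDA baseline is compared. Concretely, bound the dynamic saddle-point regret by the duality-gap-type sum and control each round by the static guarantee via Assumption \ref{assump:bounded}.

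If $\mathrm{RS}_T$ is known, we simply pick the better bound. Otherwise we run Hedge over the two learners. This adds only $O(\sqrt{T\log 2})$, which is dominated by both terms, so the upper bound is $O(\min\{\cdot,\cdot\})$. The crossover value follows by equating $(1+\mathrm{RS}_T)\log V_{\max}=d$.

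\textbf{Lower bound.} We split into two regimes.
\begin{itemize}
\item When $\mathrm{RS}_T\le d/\log V_{\max}$, we reuse the construction of \Cref{thm:lower-switch}, lifted to a general partition as in the lower half of \Cref{thm:pl-oco}. The horizon is cut into $\mathrm{RS}_T+1$ epochs. Each epoch uses a fresh region, with independent Rademacher losses on its $V_{\max}$ vertices. Cauchy--Schwarz then gives $\Omega(\sqrt{(1+\mathrm{RS}_T)T\log V_{\max}})$.
\item When $\mathrm{RS}_T> d/\log V_{\max}$, a switch budget of $d/\log V_{\max}$ already suffices for the adversary, so we apply the \citep{AbernethyEtAl2008} Rademacher online-linear-optimization lower bound $\Omega(\sqrt{dT})$. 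This is realized inside the partition by letting losses be linear over a region of dimension $d$, a hypercube-like sub-polytope. A comparator sequence whose switch count is at most the allowed budget can be padded, so the bound holds for the class.
\end{itemize}

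\textbf{Main obstacle.} The hard step is making the two lower-bound instances live in one fixed partition with the stated $V_{\max}$ and $d$. The epoch construction needs at least $\mathrm{RS}_T+1$ usable regions, or reuse of regions in alternation. The $\sqrt{dT}$ instance needs a $d$-dimensional region whose vertices support the Abernethy et al. hard instance. I would first try alternating between two regions for the epoch construction, since independence comes from the fresh losses and not from fresh regions. For the second instance I would assume the class of partitions is rich enough to contain such a region; this is implicit in the $\Theta$ over worst-case partitions.
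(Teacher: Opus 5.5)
Your outline is the same as the paper's proof. The upper bound takes the better of MW-with-restarts (\Cref{thm:pl-oco}) and a first-order method with an $O(\sqrt{dT})$ guarantee. The lower bound splits at the crossover, using the Rademacher epoch construction (\Cref{prop:poly-lb}) below it and the Abernethy et al.\ online-linear-optimization bound above it. Your Hedge combiner only removes the need to know $\mathrm{RS}_T$.

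The genuine gap is the step you yourself flag, and your proposed repair does not work. The first branch measures regret against a comparator that may change at each of the $\mathrm{RS}_T$ region switches, namely the best vertex of each epoch. The $O(\sqrt{dT})$ rate for OGD is a static guarantee, i.e.\ against $\min_x\sum_t \ell_t(x)$. The difference between these two benchmarks is a cumulative quantity that can be of order $T$, so it cannot be controlled ``round by round'' through Assumption~\ref{assump:bounded}. A continuous method competing with a moving comparator pays for the comparator's path length; compare \Cref{thm:dynamic-regret}. With $\mathrm{RS}_T$ jumps on a domain of diameter $\asymp\sqrt d$ (the normalization behind the $\sqrt{d/T}$ rate), OGD's tracking bound is $O(\sqrt{d(1+\mathrm{RS}_T)T})$, not $O(\sqrt{dT})$. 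Hedge over the two learners cannot help either: it only returns the minimum of bounds measured against a common comparator.

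In fact no algorithm can close this gap. Your regime-one construction uses fresh Rademacher losses on the active region's vertices in each epoch and alternates between two regions. It works for every $s$, not only for $s\le d/\log V_{\max}$, and gives $\Omega(\sqrt{(1+s)T\log V_{\max}})$. This is exactly \Cref{prop:poly-lb}, which the paper states for all $s\in\{0,\dots,T-1\}$. For $s\gg d/\log V_{\max}$ this exceeds $\sqrt{dT}$, and with $s=T-1$ it is linear in $T$.

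Consequences:
\begin{itemize}
\item Under any single regret notion for which the first branch is a valid lower bound, the claimed upper bound in the second regime is false.
\item Your second-regime lower bound (``a budget of $d/\log V_{\max}$ switches already suffices'') is valid but not tight.
\item The displayed minimum only appears if the MW branch is measured against the switching comparator and the OGD branch against a fixed one.
\end{itemize}

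The paper's proof makes the same conflation: it simply asserts that OGD ``achieves $O(\sqrt{d/T})$ regardless of $\mathrm{RS}_T$''. So it does not supply the missing step either. A consistent version must do one of two things. Either state the two branches as guarantees against different benchmarks. Or use OGD's tracking bound, in which case both branches scale like $\sqrt{1+\mathrm{RS}_T}$ and there is no crossover in $\mathrm{RS}_T$.
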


This connects the experts regime ($\sqrt{T \log N}$) and OCO regime ($\sqrt{dT}$) through region-switch complexity under the assumptions of \Cref{thm:pl-oco}.
Below the threshold, MW-with-restarts over region vertices is optimal; above it, continuous first-order methods dominate.
For Lov\'{a}sz convexification ($d = n$, $V_{\max} = n{+}1$), the threshold is $\mathrm{RS}_T^\star \approx n/\log n$.

\begin{figure}[t]
\centering
\includegraphics[width=0.55\textwidth]{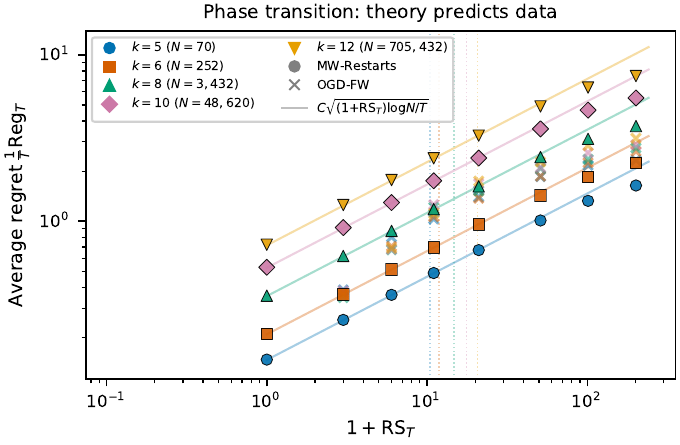}
\vspace{-0.5em}
\caption{Phase transition (\Cref{thm:phase-transition}), validated on shortest-path games.
Solid lines: theoretical rate $C\sqrt{(1{+}\mathrm{RS}_T)\log N / T}$, fitted per grid size.
Circles: MW-with-restarts regret tracks the theory across five grid sizes ($k = 5$ to $12$, up to $N = 705{,}432$ paths).
Crosses: OGD-FW regret.
Vertical dotted lines mark the predicted crossover $\mathrm{RS}_T^\star = d/\log N$ for each~$k$; at high $\mathrm{RS}_T$, OGD becomes competitive, confirming the phase transition.}
\label{fig:phase-transition}
\end{figure}

\Cref{thm:pl-oco} applies to the fixed-partition full-information piecewise-linear losses described above, not just Lov\'{a}sz convexifications.
We have two immediate corollaries: 
\textbf{(a)}~\emph{online shortest-path} on a DAG with $d$ edges and $N$ paths gives tight regret $\Theta(\sqrt{(1{+}\mathrm{RS}_T)\,T\,\log N})$; \textbf{(b)}~\emph{online min-cost flow} with $d$ arcs and $V_{\max}$ basic feasible solutions gives $\Theta(\sqrt{(1{+}\mathrm{RS}_T)\,T\,\log V_{\max}})$ under $\mathrm{RS}_T$ basis pivots.
In both cases $\mathrm{RS}_T$ captures combinatorial instability rather than Euclidean displacement (Appendix \ref{app:incomparability}).
We validate the shortest-path instance experimentally in \Cref{sec:experiments}.
Taken together, the region-switch count gives a concise interpolation parameter in this fixed-partition model.


\section{Experiments}\label{sec:experiments}

{We evaluate whether instability measures identified by the theory predict empirical regret in structured online-learning problems.}
The experiments span Lov\'{a}sz games, online shortest-path on grid DAGs, and influence maximization on SNAP networks.
{Synthetic sweeps are controlled scaling checks rather than evidence by themselves of universality; real-data experiments are intended as preliminary evidence that low-instability regimes can arise outside designed instances.}
{Additional experiments, ablations, revision-time diagnostics, and implementation details appear in \Cref{app:experiments}.}

\subsection{Experimental Setup}

\paragraph{Datasets and domains.}
We use three experimental domains.
\begin{itemize}
    \item Lov\'{a}sz games with controlled permutation-switch counts, using a stability sweep at $n=20$, $T=10{,}000$, and $\mathrm{SC}_T \in \{0,1,5,20,50,200,1{,}000,5{,}000,9{,}999\}$, together with a separate dimension-scaling study with $n \in \{10,20,50,100,200,500,1{,}000\}$ and $\mathrm{SC}_T = 0$. {Appendix \ref{app:revision-diagnostics} reports additional Lov\'{a}sz diagnostics for warm starts, baseline comparisons, oracle robustness, and switch detection.}
    \item Online shortest-path on $k \times k$ grid DAGs with up to $N = 705{,}432$ source-to-sink paths and $T = 50{,}000$ rounds.
    \item Four SNAP influence-maximization networks: Karate Club, Email-Eu, Wiki-Vote, and Epinions, with $n$ up to $2{,}744$ and $T$ up to $20{,}000$.
\end{itemize}

\paragraph{Baselines.}
We compare CAMW and geometric WS-CAMW against OLMDA, OGD for Lov\'{a}sz games, OGD-FW for shortest-path, Fixed-Share MW~\citep{HerbsterWarmuth1998}, SAOL, and ZO-EG~\citep{FarzinEtAl2025}.
{These baselines cover continuous OCO, structure-agnostic experts-style methods, and a strongly-adaptive regret comparator, while keeping the comparison aligned with each domain's native optimization geometry.}
The choice of continuous baseline depends on the geometry of the feasible set.
In the Lov\'{a}sz experiments, the minimizer acts directly on the cube $[0,1]^n$, so Euclidean projection is trivial and plain OGD is the natural ambient-dimension baseline for testing the predicted $\sqrt{n}$ dependence.
In shortest-path, by contrast, the continuous relaxation is the path/flow polytope over graph edges; exact Euclidean projection onto this polytope at every round is substantially more involved.
Step sizes follow the choices summarized in \Cref{app:hyperparams}, unless an appendix ablation states otherwise.

\paragraph{Metrics.}
We use three metrics to validate polyhedral instability:
\begin{itemize}
    \item \textbf{Per-round regret.} Our primary performance metric is the per-round saddle-point regret $\frac{1}{T}\Regret_T$, where $\Regret_T$ is defined in Eq. \eqref{eq:sp-regret}.

    \item \textbf{Theory-normalized regret.} To test per-round regret rate predicted by Theorems \ref{thm:curvature-regret} and \ref{thm:phase-transition}, we divide $\bar{R}_T$ by the appropriate theoretical rate. For structure-aware methods we report
    \[
      \bar{R}_T^{\mathrm{struct}}
      := \frac{\bar{R}_T}{\sqrt{(1+\kappa_T)\log V_{\mathrm{eff}}/T}},
    \]
    where $\kappa_T$ is the relevant instability count and $V_{\mathrm{eff}}$ is the effective vertex count ($V_{\mathrm{eff}} = n$ for Lov\'{a}sz games and SNAP, $V_{\mathrm{eff}} = N$ for shortest-path). For continuous baselines we report
    \[
      \bar{R}_T^{\mathrm{cont}}
      := \frac{\bar{R}_T}{\sqrt{d_{\mathrm{eff}}/T}},
    \]
    where $d_{\mathrm{eff}}$ is the ambient dimension of the continuous relaxation ($d_{\mathrm{eff}} = n$ in Lov\'{a}sz games, $d_{\mathrm{eff}} = d$ in shortest-path).

    \item \textbf{Instability statistic.} Let $\kappa_T \in \{\mathrm{SC}_T,\widehat{\mathrm{SC}}_T,\mathrm{SC}_T^{\mathrm{BR}},\mathrm{RS}_T\}$ be the structural quantity driving regret,
    with
    \begin{align*}
    &\mathrm{SC}_T
      = \bigl|\{t \in [T{-}1] : \pi_{x_{t+1}^\star} \ne \pi_{x_t^\star}\}\bigr|,
      \quad
      \widehat{\mathrm{SC}}_T
      = \bigl|\{t \in [T{-}1] : \pi_{t+1}^{\mathrm{BR}} \ne \pi_t^{\mathrm{BR}}\}\bigr|\\
      &\mathrm{SC}_T^{\mathrm{BR}}
      = \bigl|\{t \in [T-1] : \widetilde{\pi}_{t+1}^{\mathrm{BR}} \ne \widetilde{\pi}_t^{\mathrm{BR}}\}\bigr|
      \quad 
      \mathrm{RS}_T
      = \bigl|\{t \in [T{-}1] : \mathcal{R}(x_{t+1}^\star) \ne \mathcal{R}(x_t^\star)\}\bigr|,
    \end{align*}
    where $\widetilde{\pi}_t^{\mathrm{BR}}$ is the offline best-response permutation recomputed from logged 
    iterate $y_t$.

\end{itemize}


\subsection{Experimental Results}

\paragraph{Finding 1: regret grows with instability, not with elapsed time alone.}
From \Cref{fig:stability-sweep}, the stability sweep over $\mathrm{SC}_T$ is consistent with the $\sqrt{(1+\mathrm{SC}_T)\log n / T}$ scaling from \Cref{thm:curvature-regret}.
CAMW's absolute regret increases sublinearly with instability, while regret normalized by $\sqrt{(1+\mathrm{SC}_T)\log n / T}$ stays nearly constant in the center panel.
The left panel gives a log-log slope of $0.54$ for absolute regret against $1+\mathrm{SC}_T$, close to the theoretical exponent $0.5$, and the right panel shows that tracked switch counts match the prescribed instability level.
Together these plots identify cell changes, rather than elapsed time alone, as the quantity predicting difficulty in the Lov\'{a}sz setting.

\paragraph{Finding 2: stable polyhedral structure yields a $\sqrt{\log n}$ dependence instead of $\sqrt{n}$.}
As shown in \Cref{fig:dimension}, CAMW's absolute regret grows much more slowly than the continuous baselines in the stable regime.
The left panel shows CAMW divided by $\sqrt{\log n / T}$ staying nearly flat with coefficient of variation $0.12$, while the right panel shows OGD divided by $\sqrt{n/T}$ staying flat with coefficient of variation $0.15$.
At $n = 1{,}000$, the plotted gap between CAMW and the continuous baselines exceeds an order of magnitude.
These curves empirically separate the experts-like and continuous-regret regimes predicted by the theory.

\begin{figure}[t]
\centering
\begin{subfigure}[t]{\textwidth}
\centering
\includegraphics[width=0.8\textwidth]{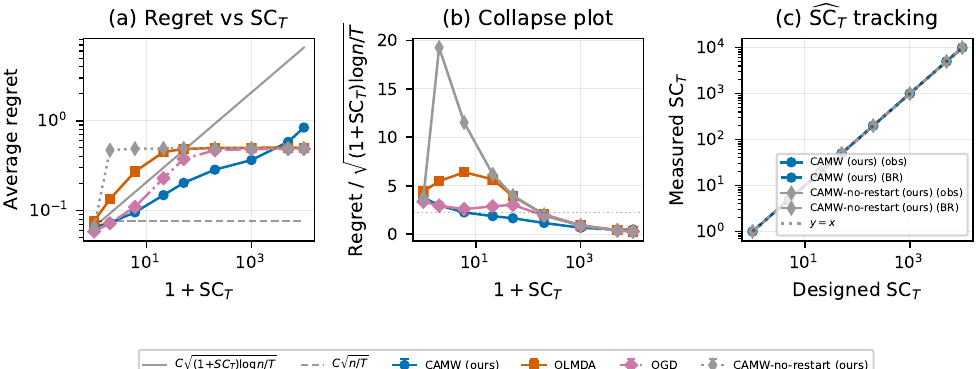}
\caption{{Stability sweep. Left: CAMW interpolates from $O(\sqrt{\log n/T})$ toward the dimension-dependent regime. Center: controlled scaling check under theory normalization. Right: online $\mathrm{SC}_T$ tracking on designed instances.}}
\label{fig:stability-sweep}
\end{subfigure}
\vspace{0.3em}
\begin{subfigure}[t]{\textwidth}
\centering
\includegraphics[width=0.8\textwidth]{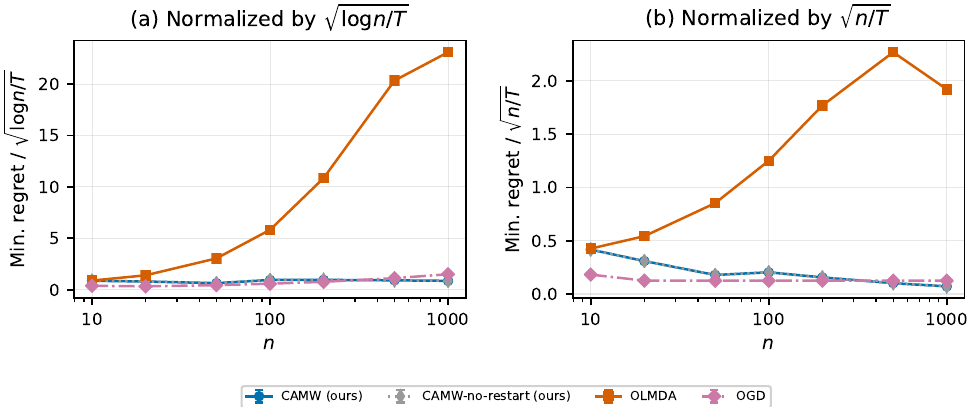}
\caption{Dimension scaling ($\mathrm{SC}_T=0$, $T=20{,}000$, 5 seeds, $n \in \{10,20,50,100,200,500,1{,}000\}$). Synthetic instances are normalized to unit loss range.
Left: CAMW's minimizer regret is flat when normalized by $\sqrt{\log n / T}$ (CV $= 0.12$), confirming $\sqrt{\log n}$ scaling.
Right: OGD is flat when normalized by $\sqrt{n/T}$ (CV $= 0.15$); OLMDA grows super-linearly, confirming $\sqrt{n}$ scaling for both baselines.}
\label{fig:dimension}
\end{subfigure}
\caption{Synthetic experiments confirm the $\sqrt{(1{+}\mathrm{SC}_T)\log n / T}$ scaling of \Cref{thm:curvature-regret}. CAMW (ours) is shown in blue.}
\label{fig:synthetic}
\end{figure}

\paragraph{{Finding 3: fixed-partition instability model is consistent with shortest-path behavior.}}
As shown in \Cref{fig:shortest-path}, online shortest-path on $k \times k$ grid DAGs follows the same instability-based pattern.
MW-with-restarts has absolute regret that grows with $\mathrm{RS}_T$, while normalized regret stays stable, with log-log slope $0.44$ for the unnormalized curve in Fig. \ref{fig:shortest-path}.
Fig. \ref{fig:shortest-path} also shows OGD-FW degrading more rapidly as $\mathrm{RS}_T$ increases, with plotted crossover thresholds following the prediction $\mathrm{RS}_T^\star = d/\log N$.
{The appendix crossover plot, \Cref{fig:shortest-path-full} in \Cref{app:shortest-path}, gives a secondary ratio-based summary of the same shortest-path experiment.}
{Thus the main shortest-path figure provides evidence that the same fixed-partition instability model can be informative beyond Lov\'{a}sz games.}

\begin{figure}[t]
\centering
\includegraphics[width=\textwidth]{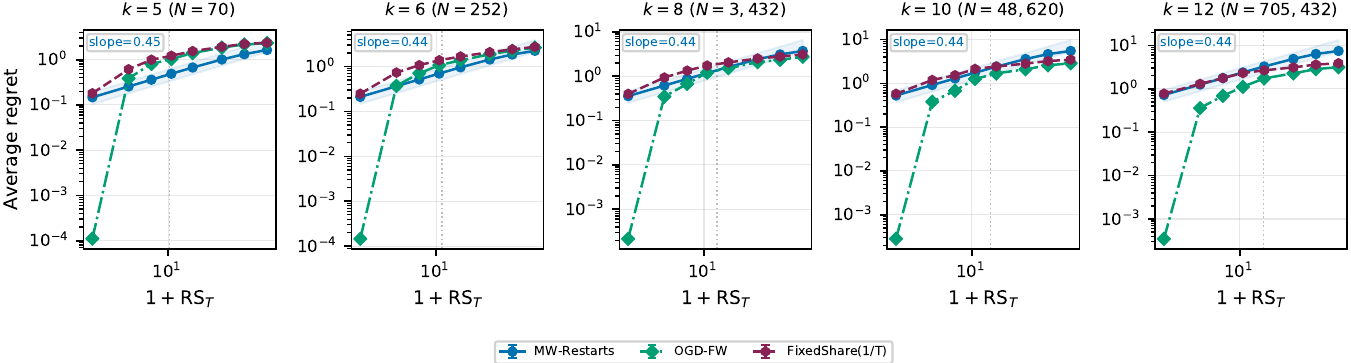}
\vspace{-0.5em}
\caption{Online shortest-path on grid DAGs ($T = 50{,}000$, 10 seeds).
Five grid sizes from $k=5$ ($N=70$) to $k=12$ ($N=705{,}432$).
MW-with-restarts (blue) tracks the theoretical $C\sqrt{(1{+}\mathrm{RS}_T)\log N / T}$ rate (light blue band) with log-log slope $\approx 0.44$; OGD-FW (green) degrades with $\mathrm{RS}_T$.
Crossover threshold $\mathrm{RS}_T^\star = d / \log N$ shifts rightward for smaller~$k$ (dotted vertical lines).}
\label{fig:shortest-path}
\end{figure}

\paragraph{{Finding 4: the tested SNAP influence-maximization instances show low observed instability.}}
{As shown in \Cref{fig:large-influence} in \Cref{app:snap-extended}, the tested SNAP influence-maximization instances remain in a low-instability regime.}
\Cref{fig:large-influence} shows the observed best-response switch counts staying small, with $\widehat{\mathrm{SC}}_T \in \{4,5,6\}$, which is less than $0.3\%$ of the horizon.
The same figure shows that when $T/n \approx 1$, OLMDA is often the strongest baseline even though the switch counts remain essentially fixed, indicating that the differences across networks are better explained by the phase transition in $T/n$ than by growing instability.

\paragraph{Finding 5: exploiting overlap between nearby cells improves constants in practice.}
As shown in \Cref{fig:snap-extended} in Appendix \ref{app:snap-extended}, WS-CAMW achieves $2$--$3\times$ lower absolute regret than cold-start CAMW on the extended SNAP experiments.
The figure therefore supports a constant-factor improvement from transferring weight across adjacent cells, rather than a change in the minimax dependence on $\mathrm{SC}_T$.
This makes the gain practical rather than asymptotic, consistent with the geometric motivation behind chain-overlap transfer.

\paragraph{{Finding 6: revision diagnostics support the warm-start and tracking mechanisms.}}
{Appendix \ref{app:revision-diagnostics} reports targeted checks not shown in main-paper figures. In the transfer-floor sweep, cold CAMW has mean regret $0.0633$, WS-CAMW improves to $0.0600$--$0.0610$, and Geometric-WS-CAMW improves to $0.0440$--$0.0528$. The SAOL baseline was not competitive in tested Lov\'{a}sz regimes. Noise stress tests show that warm-start variants absorb oracle noise much better than cold CAMW, while deterministic switch-detection validation shows near-perfect event detection in the tested regime. These results are used as mechanism diagnostics rather than new headline empirical claims.}

\paragraph{{Additional results.}}
{Appendix \ref{app:experiments} contains additional experimental results, including full hyperparameter and compute details, revision-time diagnostics for warm starts and switch tracking, additional real-data domains for sensor placement and feature selection, ablations of restart rules, tracking variants, step sizes, zeroth-order baselines, and wall-clock scaling, plus extended shortest-path, SNAP, and dimension-scaling plots.}

\section{Conclusion}\label{sec:conclusion}


This paper studied \emph{polyhedral instability}: the number of changes of the active polyhedral region as a structural parameter governing regret in online learning with piecewise linear objectives. 
For fixed finite polyhedral partitions under full-information and bounded-loss assumptions, we established the rate $\Regret_T = \Theta(\sqrt{(1{+}\mathrm{RS}_T)\,T\,\log V_{\max}})$ with a phase transition at $\mathrm{RS}_T^\star = d/\log V_{\max}$. 
For Lov\'{a}sz games, this reduces to the permutation-switch count $\mathrm{SC}_T$, yielding the oracle benchmark $\Regret_T = \Theta(\sqrt{(1{+}\mathrm{SC}_T)\,T\,\log n})$ and an implementable boundary-aware algorithm controlled by the observable count $\widehat{\mathrm{SC}}_T$. 
Our results show that regret is governed by active-region instability rather than ambient dimension alone: stable instances admit experts-like rates, while frequent region changes recover classical dimension-dependent behavior. 
Experiments on synthetic and real combinatorial problems support the predicted scaling and suggest that low-instability regimes can arise in practice.

\section*{Limitations}

This work studies regret through the lens of polyhedral instability under fixed-partition and full-information assumptions. These assumptions make the geometry transparent, but they also limit the direct applicability of the theory to settings with changing partitions, partial feedback, noisy observations, or approximate optimization oracles. The empirical results support the predicted scaling in controlled and preliminary real-data settings, but they should not be interpreted as evidence that low-instability behavior is universal across all combinatorial online-learning problems. A further limitation is that the practical algorithms rely on detecting or approximating active-region changes; in large-scale deployments, this detection step may introduce computational overhead or additional approximation error.

\section*{Ethical Statement}

This paper is primarily theoretical and methodological. It does not introduce a deployed system, collect new human-subject data, or make decisions about individuals that may raise ethical concerns.

\section*{Broader Impact}

By identifying polyhedral instability as a predictor of regret, this work may help make large-scale online combinatorial decision problems more computationally tractable and interpretable. The results suggest that some problems with exponentially many actions can behave like much smaller learning problems when their active geometric structure is stable. This could benefit applications such as network routing, monitoring, resource allocation, and influence modeling. At the same time, improved online decision methods can also be used in high-stakes or manipulative settings, especially when applied to recommendation, targeting, or information-spread problems. The broader impact therefore depends on careful application-specific governance, transparent evaluation, and safeguards against harmful optimization objectives.

\section*{LLM Usage}
We used large language models (LLMs) to support the preparation of this manuscript, including assistance with writing, editing, and improving clarity of presentation. LLMs were also used to support experimental workflows, such as drafting code, checking implementation details, and organizing analysis. All intellectual contributions, methodological decisions, experimental design, results interpretation, and final manuscript content were reviewed and verified by the authors, who take full responsibility for the accuracy and integrity of the work.

\FloatBarrier
\newpage

\bibliographystyle{plainnat}
\bibliography{references_online_minmax}

\newpage
\appendix

\vspace{0.5em}
\begin{center}
\textbf{\large Appendix Table of Contents}
\end{center}
\vspace{0.3em}
\noindent
\renewcommand{\arraystretch}{1.1}
\begin{tabular}{@{}l@{\quad}p{\dimexpr\textwidth-5em\relax}@{}}
\textbf{A.} & \nameref{app:background-foundations} \leaders\hbox to.5em{\hss.\hss}\hfill \pageref{app:background-foundations} \\[0.4ex]
\textbf{B.} & \nameref{app:algorithms} \leaders\hbox to.5em{\hss.\hss}\hfill \pageref{app:algorithms} \\[0.4ex]
\textbf{C.} & \nameref{app:upper-bounds} \leaders\hbox to.5em{\hss.\hss}\hfill \pageref{app:upper-bounds} \\[0.4ex]
\textbf{D.} & \nameref{app:lower-bounds} \leaders\hbox to.5em{\hss.\hss}\hfill \pageref{app:lower-bounds} \\[0.4ex]
\textbf{E.} & \nameref{app:separations} \leaders\hbox to.5em{\hss.\hss}\hfill \pageref{app:separations} \\[0.4ex]
\textbf{F.} & \nameref{app:experiments} \leaders\hbox to.5em{\hss.\hss}\hfill \pageref{app:experiments} \\[0.4ex]
\textbf{G.} & \nameref{app:limitations-related} \leaders\hbox to.5em{\hss.\hss}\hfill \pageref{app:limitations-related} \\[0.4ex]
\end{tabular}
\vspace{1em}


\section{Background and Structural Foundations}\label{app:background-foundations}

\subsection{Landscape of Online Polyhedral Learning}\label{app:extended-intro}

Table~\ref{tab:comparison} summarizes the landscape of online submodular and polyhedral optimization.
{Our work provides an instability-parameterized characterization for the fixed-partition setting studied in the main text.}

\begin{table}[htbp]
\centering
\small
\caption{Comparison of online learning approaches for submodular-concave and polyhedral games.}
\label{tab:comparison}
\begin{tabular}{@{}llll@{}}
\toprule
\textbf{Approach} & \textbf{Rate} & \textbf{Adaptive?} & \textbf{Tight?} \\
\midrule
OGD / Mirror Descent & $O(\sqrt{n/T})$ & No & Yes (worst case) \\
ZO-EG~\citep{FarzinEtAl2025} & $O(\sqrt{n\bar{P}_T/T})$ & Path-length & No \\
Naive MW ($2^n$ arms) & $O(\sqrt{n/T})$ & No & Intractable \\
CAMW (this paper) & $O(\sqrt{(1{+}\mathrm{SC}_T)\log n / T})$ & Cell stability & Yes \\
\bottomrule
\end{tabular}
\end{table}

\subsection{Background on Structural Properties of the Lov\'{a}sz Extension}\label{app:background}

We present the following structural results from~\citet{Lovasz1983, Schrijver2003} for completeness.

\begin{lemma}[Lov\'{a}sz extension: convexity and tight relaxation]\label{lem:lovasz-convexity}
Let $f \colon 2^{[n]} \to \R$ be submodular.
Then the Lov\'{a}sz extension $\fL \colon [0,1]^n \to \R$ defined in~\eqref{eq:lovasz} satisfies: (1) $\fL$ is convex and (2) $\min_{x \in [0,1]^n} \fL(x) = \min_{S \subseteq [n]} f(S)$.
\end{lemma}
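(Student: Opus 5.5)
The plan is to work entirely from the chain (greedy) representation of the Lov\'asz extension in~\eqref{eq:lovasz}. Fix $x\in[0,1]^n$ with sorting permutation $\pi$, and set $x_{\pi(0)}:=1$ and $x_{\pi(n+1)}:=0$. This convention makes the coefficients $\lambda_i := x_{\pi(i)}-x_{\pi(i+1)}$, for $i=0,\dots,n$, nonnegative, and they sum to $1$. Hence $\fL(x)=\sum_i \lambda_i f(C_i)$ is a convex combination of values on the chain $\emptyset=C_0\subset\cdots\subset C_n=[n]$, and $x=\sum_i\lambda_i \mathbf{1}_{C_i}$. Both claims then follow from this representation.

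For claim (2), the argument is short. First, $\fL(\mathbf{1}_S)=f(S)$, since for an indicator vector only the coefficient at $C_{|S|}=S$ is nonzero. This gives $\min_x\fL\le\min_S f$. Second, for any $x$, $\fL(x)=\sum_i\lambda_i f(C_i)\ge \min_i f(C_i)\ge\min_S f(S)$, because the $\lambda_i$ are convex weights. This gives the reverse inequality. Submodularity is not needed for this part.

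For claim (1), I will establish the Edmonds/greedy support-function representation. Write $\fL(x)=f(\emptyset)+\hat f(x)$, where $\hat f$ is the Lov\'asz extension of the normalized function $f-f(\emptyset)$. By Abel summation, $\hat f(x)=\langle w^\pi,x\rangle$, where $w^\pi_{\pi(i)}=f(C_i)-f(C_{i-1})$ is the greedy vector, so $\hat f$ is linear on each cell $\mathcal{P}_\pi$. The aim is to show that
\[
\hat f(x)=\max_{\sigma}\langle w^\sigma,x\rangle \quad\text{for } x\ge 0.
\]
A pointwise maximum of linear functions is convex, so this suffices.

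The key step, and the main obstacle, is proving $\langle w^\sigma,x\rangle\le\langle w^\pi,x\rangle$ for every permutation $\sigma$ when $x\in\mathcal{P}_\pi$. I would do this in two moves.
\begin{itemize}
\item Show that each $w^\sigma$ lies in the base polyhedron, i.e.\ $w^\sigma(A)\le f(A)-f(\emptyset)$ for all $A$. This follows by telescoping the marginals along $\sigma$ and applying submodularity (diminishing returns) to each marginal.
\item Write $x=\sum_{i\ge1}\lambda_i\mathbf{1}_{C_i}$ on the nonempty chain sets. For any $w$ in the base polyhedron, $\langle w,x\rangle=\sum_i\lambda_i w(C_i)\le\sum_i\lambda_i(f(C_i)-f(\emptyset))=\hat f(x)$, with equality at $w=w^\pi$.
\end{itemize}
This is the only place submodularity enters.

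Finally, convexity of $\fL$ on $[0,1]^n$ follows because it is a constant plus a maximum of finitely many linear functions.
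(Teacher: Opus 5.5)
The paper does not prove this lemma; it quotes it as background from Lov\'asz (1983) and Schrijver (2003). Your argument is correct and is the classical proof from those sources: part (2) follows by writing $\fL(x)$ as a convex combination of values of $f$ along a chain, and part (1) follows from Edmonds' greedy vectors $w^\sigma$ lying in the base polyhedron, so that $\fL-f(\emptyset)=\max_\sigma\langle w^\sigma,\cdot\rangle$ on $[0,1]^n$. Your explicit convention $x_{\pi(0)}=1$, $x_{\pi(n+1)}=0$, which~\eqref{eq:lovasz} leaves implicit, is genuinely needed: under the homogeneous reading with no $f(\emptyset)$ term, part (2) can fail when $f(\emptyset)\neq 0$, e.g.\ for $f\equiv 1$.
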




\begin{lemma}[Mixed minimax value via threshold coupling]\label{lem:mixed-value}
Consider a submodular-concave game $(f, [n], \Y)$ with $f(\cdot, y)$ submodular and $f(S, \cdot)$ concave.
Define the \emph{mixed value} $\Vmix := \min_{\mu \in \Delta(2^{[n]})} \max_{y \in \Y} \E_{S \sim \mu}[f(S,y)]$ and the \emph{relaxed value} $\Vstar := \min_{x \in [0,1]^n} \max_{y \in \Y} \fL(x,y)$.
Then $\Vstar = \Vmix$.
\end{lemma}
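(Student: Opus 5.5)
We prove $\Vstar = \Vmix$ by establishing the two inequalities separately. Each direction uses the threshold coupling between points $x \in [0,1]^n$ and distributions over chains.

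\textbf{Step 1 ($\Vmix \le \Vstar$).} Fix $x \in [0,1]^n$ and draw $\theta \sim \mathrm{Unif}[0,1]$. Set $S_\theta := \{i : x_i \ge \theta\}$, and let $\mu_x$ be the law of $S_\theta$. With $\pi$ sorting $x$ in descending order, $S_\theta = C_i$ exactly when $\theta \in (x_{\pi(i+1)}, x_{\pi(i)}]$, using the conventions $x_{\pi(0)} := 1$ and $x_{\pi(n+1)} := 0$. Comparing with \eqref{eq:lovasz}, this gives
\[
\E_{S \sim \mu_x}[f(S,y)] = \fL(x,y) \quad \text{for every } y \in \Y.
\]
The identity holds pointwise in $y$, so $\max_y \E_{\mu_x}[f(S,y)] = \max_y \fL(x,y)$. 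Minimizing over $x$ and noting that $\mu_x \in \Delta(2^{[n]})$ yields $\Vmix \le \Vstar$.

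\textbf{Step 2 ($\Vstar \le \Vmix$).} Fix $\mu \in \Delta(2^{[n]})$ and define its marginal vector $x^\mu_i := \Pr_{S\sim\mu}[i \in S] \in [0,1]$. The key claim is that for each fixed $y$,
\[
\fL(x^\mu, y) \le \E_{S\sim\mu}[f(S,y)].
\]
Once this holds, taking $\max_y$ on both sides and then $\min_\mu$ gives $\Vstar \le \Vmix$.

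To prove the claim, fix $y$ and write $f = f(\cdot,y)$, which is submodular. The approach is an uncrossing argument. If the support of $\mu$ contains two sets $A, B$ that are not nested, move mass $\epsilon = \min(\mu(A),\mu(B))$ from $\{A,B\}$ to $\{A\cap B, A\cup B\}$. This leaves every marginal $x^\mu_i$ unchanged, and by submodularity $f(A\cap B)+f(A\cup B) \le f(A)+f(B)$, so it does not increase $\E_\mu[f]$.

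Termination follows from a potential that strictly increases at each step, such as $\sum_S \mu(S)|S|^2$. This potential is bounded, and with finitely many sets the process ends; alternatively, one can appeal to a standard compactness argument. The process ends at a distribution $\mu'$ supported on a chain and having the same marginals $x^\mu$. A chain distribution with marginals $x^\mu$ is exactly the threshold distribution $\mu_{x^\mu}$ from Step 1. Therefore
\[
\E_{\mu}[f] \ge \E_{\mu'}[f] = \fL(x^\mu, y).
\]

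A cleaner alternative for Step 2 avoids the potential argument. Use that $\fL(\cdot,y)$ is convex (\Cref{lem:lovasz-convexity}) and agrees with $f$ on indicator vectors, i.e.\ $\fL(\mathbf{1}_S,y)=f(S,y)$. Jensen's inequality then gives
\[
\fL\!\left(\E_\mu \mathbf{1}_S, y\right) \le \E_\mu \fL(\mathbf{1}_S,y) = \E_\mu f(S,y),
\]
and $\E_\mu \mathbf{1}_S = x^\mu$. I would use this Jensen route as the main argument, since it relies only on \Cref{lem:lovasz-convexity}.

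\textbf{Main obstacle.} The delicate point is that the inequality in Step 2 must hold for every $y$ with the same $x^\mu$, so that it survives the inner maximization. Jensen's inequality gives this because $x^\mu$ does not depend on $y$. Concavity in $y$ is not needed for the equality itself. The minima and maxima exist because both feasible sets are compact and the objectives are continuous; if needed, these can be replaced by infima and suprema.
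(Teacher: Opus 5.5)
Your proof is correct and complete. The paper does not prove this lemma: it is one of the background facts quoted ``for completeness'' from Lov\'asz (1983) and Schrijver (2003), so there is no in-paper argument to compare against.

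Your Step 1 is exactly the threshold coupling the lemma's name refers to. It gives $\Vmix \le \Vstar$ without using submodularity.

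Your Jensen argument for $\Vstar \le \Vmix$ is the standard route. It puts all of the submodularity into the convexity of $\fL(\cdot,y)$ from \Cref{lem:lovasz-convexity}. You also handle the one delicate point correctly: $x^\mu$ does not depend on $y$, so the pointwise inequality survives the $\max_y$. As a by-product, the two steps show that replacing $\mu$ by the chain-supported $\mu_{x^\mu}$ never hurts the minimizer against any $y$, which is essentially \Cref{lem:chain-support}.

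The uncrossing alternative is a genuinely different and more self-contained route, since it uses submodularity directly rather than convexity of the extension. However, its termination sentence is loose. A bounded, strictly increasing potential does not by itself force finitely many steps when the weights are real numbers. The clean fix is the compactness version you allude to:
\begin{itemize}
\item Among distributions with marginals $x^\mu$ and $\E f \le \E_\mu f$, which form a compact set, take one maximizing $\sum_S \mu(S)|S|^2$.
\item If its support were not a chain, one uncrossing step would strictly increase the potential, since the increase is $2\epsilon\,|A\setminus B|\,|B\setminus A| > 0$. This is a contradiction.
\end{itemize}
Because Jensen is your main argument, this does not affect validity.

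Finally, your remark that everything goes through with $\inf$ and $\sup$ is worth keeping. Concavity of $f(S,\cdot)$ alone does not make the inner maximum attained, and your claim that ``the objectives are continuous'' is an extra assumption not contained in the lemma.
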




\begin{lemma}[Chain support structure]\label{lem:chain-support}
The optimal mixed strategy for the minimizer in a submodular-concave game is supported on a chain of at most $n{+}1$ nested subsets:
there exist $\emptyset = C_0 \subset C_1 \subset \cdots \subset C_K = [n]$ with $K \le n$ and weights $w \in \Delta_{K+1}$ such that
$\fL(x^\star, y) = \sum_{j=0}^{K} w_j \, f(C_j, y)$ for all $y \in \Y$.
\end{lemma}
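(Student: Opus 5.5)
The plan is to prove the chain-support statement by combining the min--max structure of the relaxed game with the explicit formula~\eqref{eq:lovasz}. First I would establish existence of a relaxed equilibrium strategy $x^\star$. The map $x \mapsto \max_{y\in\Y}\fL(x,y)$ is a pointwise maximum of functions that are convex by \Cref{lem:lovasz-convexity}. It is also continuous, since $\fL$ is Lipschitz in $x$ with constant depending on $M$. Hence it attains its minimum on the compact cube $[0,1]^n$; let $x^\star$ be a minimizer, so by definition $\max_y \fL(x^\star,y)=\Vstar$.

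Next I would read the chain directly off $x^\star$. Let $\pi$ be a permutation sorting $x^\star$ in descending order, and use the conventions $x^\star_{\pi(0)}:=1$ and $x^\star_{\pi(n+1)}:=0$. Set $C_i=\{\pi(1),\dots,\pi(i)\}$, so that $\emptyset=C_0\subset C_1\subset\cdots\subset C_n=[n]$, and define the weights $w_i := x^\star_{\pi(i)}-x^\star_{\pi(i+1)}$ for $i=0,\dots,n$. These are nonnegative because of the sorting, and they telescope to $x^\star_{\pi(0)}-x^\star_{\pi(n+1)}=1$, so $w\in\Delta_{n+1}$. By~\eqref{eq:lovasz},
\[
\fL(x^\star,y)=\sum_{i=0}^{n} w_i\, f(C_i,y) \quad \text{for every } y\in\Y .
\]
The key point is that the permutation $\pi$ depends only on $x^\star$ and not on $y$. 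That is why a single chain works uniformly over all $y$. Dropping the indices with $w_i=0$, and noting that ties among the coordinates of $x^\star$ produce zero weights, leaves a chain of $K+1\le n+1$ sets. I would keep $\emptyset$ and $[n]$ as endpoints, with possibly zero weight, so that the chain matches the stated form with $K\le n$.

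Finally, I would connect this to the \emph{mixed} optimal strategy. The distribution $\mu^\star$ that places mass $w_j$ on $C_j$ satisfies $\E_{S\sim\mu^\star} f(S,y)=\fL(x^\star,y)$ for every $y$. Therefore
\[
\max_y \E_{\mu^\star}[f(S,y)] = \Vstar = \Vmix
\]
by \Cref{lem:mixed-value}, so $\mu^\star$ is an optimal mixed strategy supported on a chain.

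The main obstacle is interpretive. Read literally, ``the optimal mixed strategy'' would mean that \emph{every} optimal $\mu$ is chain-supported, and that is false in general. For instance, if $f$ is modular and does not depend on $y$, there are many non-chain optimal mixtures. I would therefore prove the existence statement, as stated with ``there exist'': some optimal mixed strategy is chain-supported. If a stronger form is desired, I would add a remark that any optimal $\mu$ can be replaced, through its marginal vector $x_\mu=\E_\mu[\mathbf 1_S]$, by a chain distribution. Submodularity gives $\fL(x_\mu,y)\le \E_\mu f(S,y)$, since the Lovász extension is the lower convex envelope. So the chain distribution built from $x_\mu$ is at least as good, which shows that chain-supported optima are without loss of generality.
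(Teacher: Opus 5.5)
Your proof is correct and is the standard argument. The paper gives no proof of its own; it states the lemma as background from Lov\'asz and Schrijver. The way it then uses the lemma in Step~1 of the proof of \Cref{thm:chain-regret} is exactly your observation: sorting $x^\star$ gives a chain and weights through \eqref{eq:lovasz}, and these do not depend on $y$.

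One correction to your interpretive remark: a modular $f$ with all coefficients nonzero has a unique minimizer. Your counterexample to the ``every optimal $\mu$'' reading therefore needs degeneracy, e.g.\ $f\equiv 0$, or at least two coordinates with zero coefficient.
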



\begin{corollary}[Mixed equilibrium existence]\label{cor:mixed-eq}
Every submodular-concave game admits a saddle point $(x^\star, y^\star) \in [0,1]^n \times \Y$ with $\fL(x^\star, y) \le \Vstar \le \fL(x, y^\star)$ for all $x \in [0,1]^n$, $y \in \Y$.
Moreover, the optimal minimizer strategy $x^\star$ decomposes as a distribution over a chain of at most $n{+}1$ nested subsets, and $\Vstar = \Vmix$.
\end{corollary}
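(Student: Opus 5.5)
The plan is to get the saddle point from a minimax theorem applied directly to the relaxed game $(x,y)\mapsto \fL(x,y)$ on $[0,1]^n\times\Y$. The chain decomposition and the identity $\Vstar=\Vmix$ are then read off from the earlier lemmas.

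\emph{Step 1: convex--concave structure.} For each fixed $y$, $\fL(\cdot,y)$ is convex and continuous on $[0,1]^n$ by \Cref{lem:lovasz-convexity}. For concavity in $y$, fix $x$ and the permutation $\pi$ sorting it, with the conventions $x_{\pi(0)}:=1$ and $x_{\pi(n+1)}:=0$. By \eqref{eq:lovasz}, $\fL(x,\cdot)$ is a combination of the functions $f(C_i,\cdot)$ with coefficients $x_{\pi(i)}-x_{\pi(i+1)}\ge 0$, and these coefficients depend only on $x$. Each $f(C_i,\cdot)$ is concave, so $\fL(x,\cdot)$ is concave.

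\emph{Step 2: minimax and attainment.} Both $[0,1]^n$ and $\Y$ are compact and convex, so Sion's minimax theorem applies and gives $\min_x\max_y \fL=\max_y\min_x \fL$. To turn this into a saddle point I need both outer optima to be attained.
\begin{itemize}
\item The function $\phi(x):=\max_y \fL(x,y)$ is a supremum of functions continuous in $x$, hence lower semicontinuous, so it attains its minimum at some $x^\star$ on the compact cube.
\item The function $\psi(y):=\min_x \fL(x,y)$ is an infimum of functions of $y$. If each $f(S,\cdot)$ is upper semicontinuous, then each $\fL(x,\cdot)$ is too, so $\psi$ is upper semicontinuous and attains its maximum at some $y^\star\in\Y$.
\end{itemize}
Equality of the two values then gives the chain
\[
\fL(x^\star,y)\le\phi(x^\star)=\Vstar=\psi(y^\star)\le \fL(x,y^\star)\qquad\text{for all } x\in[0,1]^n,\ y\in\Y,
\]
which is exactly the saddle-point inequality in the statement.

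\emph{Step 3: chain decomposition.} Take the permutation $\pi$ sorting $x^\star$ and the associated chain $C_i$. Then
\[
x^\star=\sum_{i=0}^n \bigl(x^\star_{\pi(i)}-x^\star_{\pi(i+1)}\bigr)\mathbf 1_{C_i}.
\]
The weights are nonnegative and telescope to $1$, so they form a distribution on the chain $\emptyset=C_0\subset\cdots\subset C_n=[n]$. By \eqref{eq:lovasz}, the same weights satisfy $\fL(x^\star,y)=\sum_i w_i f(C_i,y)$ for every $y$. This matches \Cref{lem:chain-support} (after merging repeated sets, $K\le n$). The identity $\Vstar=\Vmix$ is \Cref{lem:mixed-value}.

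\emph{Main obstacle: regularity in $y$.} A concave function on a compact convex set is automatically continuous only on the relative interior, so upper semicontinuity of $f(S,\cdot)$ on all of $\Y$ is an extra regularity condition rather than a consequence of concavity. The step therefore goes through cleanly only under that condition. I would first check whether Assumption~\ref{assump:bounded} (bounded gradients in $y$) already gives Lipschitz continuity, and hence continuity, on $\Y$. If it does not, I would state upper semicontinuity of $f(S,\cdot)$ as a standing hypothesis; this is the standard setting for Sion's theorem.
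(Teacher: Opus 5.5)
The paper gives no separate proof of this corollary: it is listed alongside \Cref{lem:lovasz-convexity,lem:mixed-value,lem:chain-support} as background taken from the literature, and your argument is a correct way of supplying what that placement takes for granted. Your argument has three parts: a minimax theorem for the convex--concave $\fL$ on $[0,1]^n\times\Y$, the explicit threshold decomposition $x^\star=\sum_i (x^\star_{\pi(i)}-x^\star_{\pi(i+1)})\mathbf{1}_{C_i}$, and \Cref{lem:mixed-value} for $\Vstar=\Vmix$. Your regularity caveat is genuinely necessary, and \Cref{assump:bounded}, which only bounds gradients at the iterates, does not supply it: on $\Y=[0,1]$, taking $f(S,y)=y$ for $y<1$ and $f(S,1)=0$ for every $S$ gives a modular, concave, bounded game with $\fL(x,\cdot)=f(S,\cdot)$ for all $x$, whose supremum $1$ over $y$ is never attained, so no saddle point exists and upper semicontinuity (or continuity) of each $f(S,\cdot)$ must be added as a hypothesis.
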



\subsection{Assumption Scaling Across Domains}\label{app:assumptions}

Assumption \ref{assump:bounded} requires Lipschitz and boundedness constants $(L_x, L_y, M)$.
\Cref{tab:assumptions} verifies that these scale polynomially in~$n$ for all application domains considered in the paper.
The key observation is that the $\sqrt{n}$ factor in the OLMDA static regret arises from the diameter of $[0,1]^n$, not from these constants; CAMW replaces this with $\sqrt{\log n}$ by exploiting the chain structure.

\begin{table}[htbp]
\centering
\small
\caption{Boundedness constants scale polynomially in~$n$ for standard application domains.}
\label{tab:assumptions}
\begin{tabular}{@{}llll@{}}
\toprule
\textbf{Domain} & $L_x$ & $L_y$ & $M$ \\
\midrule
Influence max.\ (IC model) & $O(\sqrt{n})$ & $O(n)$ & $O(n)$ \\
Sensor placement (coverage) & $O(\sqrt{n})$ & $O(n)$ & $O(n)$ \\
Feature selection (MI) & $O(\sqrt{n})$ & $O(n \log n)$ & $O(n)$ \\
\bottomrule
\end{tabular}
\end{table}

\section{Algorithms and Implementation}\label{app:algorithms}

\subsection{OLMDA: Online Lov\'{a}sz Mirror Descent-Ascent}\label{app:olmda-details}

\begin{algorithm}[htbp]
\caption{OLMDA: Online Lov\'{a}sz Mirror Descent-Ascent}
\label{alg:olmda}
\begin{algorithmic}[1]
\REQUIRE Step sizes $\eta_x, \eta_y > 0$; initial $(x_1, y_1)$
\FOR{$t = 1, \dots, T$}
  \STATE \textbf{Play} $(x_t, y_t)$; \textbf{observe} $f_t$
  \STATE $x_{t+1} \gets \argmin_{x \in [0,1]^n} \inner{g_t^x}{x} + \frac{1}{\eta_x} \Breg(x, x_t)$ \hfill [mirror descent]
  \STATE $y_{t+1} \gets \argmax_{y \in \Y} \inner{g_t^y}{y} - \frac{1}{\eta_y} D_\varphi(y, y_t)$ \hfill [mirror ascent]
\ENDFOR
\end{algorithmic}
\end{algorithm}

Baseline algorithm OLMDA treats $\fL$ as a generic convex-concave function.
The Lov\'{a}sz subgradient at~$x_t$ requires $n{+}1$ evaluations of~$f_t$ and $O(n \log n)$ sorting.

\begin{theorem}[Static regret]\label{thm:static-regret}
Under Assumption \ref{assump:bounded}, OLMDA achieves $\frac{1}{T} \Regret_T \le (L_x \sqrt{n} + L_y D_\Y)/\sqrt{T}$.
\end{theorem}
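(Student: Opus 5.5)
We prove \Cref{thm:static-regret} by decomposing the saddle-point regret into a minimizer term and a maximizer term and bounding each with the standard online mirror descent/ascent analysis on its own domain. Fix any comparator pair $(x_\circ, y_\circ)$; the natural choice is a saddle point of the averaged game $\frac1T\sum_t \fL^t$, since the statement is a \emph{static} regret bound. Write
\[
\sum_{t}\bigl[\fL^t(x_t,y_\circ)-\fL^t(x_\circ,y_t)\bigr]
=\sum_t\bigl[\fL^t(x_t,y_\circ)-\fL^t(x_t,y_t)\bigr]+\sum_t\bigl[\fL^t(x_t,y_t)-\fL^t(x_\circ,y_t)\bigr].
\]
The second sum is controlled by convexity of $\fL^t(\cdot,y_t)$ (\Cref{lem:lovasz-convexity}). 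Since $g_t^x$ is a Lov\'asz subgradient, we get $\fL^t(x_t,y_t)-\fL^t(x_\circ,y_t)\le\inner{g_t^x}{x_t-x_\circ}$. The first sum is controlled by concavity of $\fL^t(x_t,\cdot)$, which holds because $\fL^t(x,\cdot)$ is a nonnegative combination of the concave functions $f^t(C_i,\cdot)$. This gives $\fL^t(x_t,y_\circ)-\fL^t(x_t,y_t)\le\inner{g_t^y}{y_\circ-y_t}$.

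Each linearized sum is then bounded by the usual one-step mirror descent inequality. For the $x$-player, taking $\psi=\frac12\norm{\cdot}_2^2$ (projected OGD on the cube) gives
\[
\sum_t\inner{g_t^x}{x_t-x_\circ}\le\frac{\norm{x_1-x_\circ}_2^2}{2\eta_x}+\frac{\eta_x}{2}\sum_t\norm{g_t^x}_2^2\le\frac{n}{2\eta_x}+\frac{\eta_x L_x^2T}{2},
\]
because the squared Euclidean diameter of $[0,1]^n$ is $n$ and \Cref{assump:bounded} gives $\norm{g_t^x}_2\le L_x$. Choosing $\eta_x=\sqrt n/(L_x\sqrt T)$ yields $L_x\sqrt{nT}$. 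For the $y$-player, assume $\varphi$ is $1$-strongly convex with respect to $\norm{\cdot}_2$ and read $D_\Y^2$ as the Bregman diameter $\sup_{y} D_\varphi(y,y_1)$ appearing in the bound. The same argument then gives $D_\Y^2/\eta_y+\eta_y L_y^2T/2$, which is $O(L_yD_\Y\sqrt T)$ with $\eta_y\propto D_\Y/(L_y\sqrt T)$. Adding the two bounds and dividing by $T$ gives $(L_x\sqrt n+L_yD_\Y)/\sqrt T$; absolute constants are absorbed by the tuning.

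The main obstacle is the comparator. As stated, \eqref{eq:sp-regret} uses the roundwise saddle points $(x_t^\star,y_t^\star)$, and a dynamic comparator cannot be handled by this static argument without a path-length term. I would therefore state explicitly that \Cref{thm:static-regret} refers to the fixed-comparator (duality-gap) version, with $(x_\circ,y_\circ)$ constant over $t$, which is what the word ``static'' indicates. Under that reading, the only remaining care points are two. First, $g_t^y$ must be a supergradient at $(x_t,y_t)$, which concavity guarantees. Second, the $\sqrt n$ must come from the cube's diameter rather than from $L_x$, matching the remark in \Cref{app:assumptions}.
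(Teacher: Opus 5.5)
Your proposal is correct and is essentially the argument the paper has in mind: the paper gives no separate proof, but its proof of \Cref{thm:dynamic-regret} reuses ``the same decomposition'' with time-varying instead of fixed comparators, Euclidean mirror descent on the cube, and the $\sqrt{n}$ coming from the diameter of $[0,1]^n$. So your reading of the statement as a fixed-comparator bound is the intended one, and it is also necessary: with the roundwise saddle points of \eqref{eq:sp-regret}, a singleton $\Y$ and Rademacher modular losses already force regret of order $nT$.

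One small correction: with your convention $D_\Y^2=\sup_y D_\varphi(y,y_1)$, the best tuning of $D_\Y^2/\eta_y+\eta_y L_y^2T/2$ gives $\sqrt{2}\,L_yD_\Y\sqrt{T}$, and no choice of $\eta_y$ removes that factor. To get the stated constant, normalize $\sup_y D_\varphi(y,y_1)\le D_\Y^2/2$ instead (e.g.\ take $D_\Y$ to be the Euclidean diameter when $\varphi=\tfrac12\norm{\cdot}_2^2$, matching your $x$-player computation). This yields $L_yD_\Y\sqrt{T}$ with $\eta_y=D_\Y/(L_y\sqrt{T})$.
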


The $\sqrt{n}$ factor arises from the diameter of~$[0,1]^n$ and treats the polyhedral structure as irrelevant.

\begin{theorem}[Dynamic regret]\label{thm:dynamic-regret}
Under Assumption \ref{assump:bounded}, \Cref{alg:olmda} with step sizes $\eta_x = \sqrt{n(1+P_T^x)}/(L_x\sqrt{T})$ achieves:
\begin{equation}
  \frac{1}{T} \DualGap_T \le O\!\left(\sqrt{\frac{n(1 + \bar{P}_T)}{T}}\right),
\end{equation}
where $\bar{P}_T = \sum_{t=1}^{T-1} \norm{z_t^\star - z_{t+1}^\star}$ is the path length of the saddle-point sequence and $z_t^\star = (x_t^\star, y_t^\star)$.
\end{theorem}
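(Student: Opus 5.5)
We prove \Cref{thm:dynamic-regret} by the standard dynamic-regret analysis of online mirror descent, run separately for each player and then combined through convex--concave duality. First we reduce the duality gap to two one-sided dynamic regrets. Since $\fL^t(\cdot,y)$ is convex (\Cref{lem:lovasz-convexity}) and $\fL^t(x,\cdot)$ is concave, for any comparators $(x_t^\star,y_t^\star)$ we have
\[
\fL^t(x_t,y_t^\star)-\fL^t(x_t^\star,y_t) \le \inner{g_t^x}{x_t-x_t^\star} + \inner{g_t^y}{y_t^\star-y_t}.
\]
Indeed, the two linearizations bound $\fL^t(x_t,y_t)-\fL^t(x_t^\star,y_t)$ and $\fL^t(x_t,y_t^\star)-\fL^t(x_t,y_t)$ respectively, and adding them gives the display. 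Summing over $t$ bounds $\DualGap_T$ by the linearized dynamic regret of the $x$-player against the sequence $(x_t^\star)$ plus that of the $y$-player against $(y_t^\star)$.

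Next we bound each dynamic regret. We take $\Breg$ Euclidean on $[0,1]^n$, where the diameter is $D_x=\sqrt n$, and assume $\varphi$ is $1$-strongly convex with diameter $D_\Y$ and Lipschitz gradient on $\Y$. The one-step mirror descent inequality gives
\[
\eta\inner{g_t}{x_t-u_t}\le D(u_t,x_t)-D(u_t,x_{t+1})+\tfrac{\eta^2}{2}\norm{g_t}^2.
\]
To switch comparators we telescope with $D(u_{t+1},x_{t+1})-D(u_t,x_{t+1})$. In the Euclidean case this difference is at most $\tfrac12\|u_{t+1}-x_{t+1}\|^2-\tfrac12\|u_t-x_{t+1}\|^2\le D_x\norm{u_{t+1}-u_t}$, following Zinkevich's argument. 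Summing over $t$ yields
\[
\sum_t\inner{g_t^x}{x_t-x_t^\star}\le \frac{D_x^2/2+D_xP_T^x}{\eta_x}+\frac{\eta_x L_x^2 T}{2}
\]
with $D_x^2=n$. An analogous bound holds for $y$, carrying a constant $G_\varphi D_\Y$ in front of $P_T^y$.

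Then we tune the step sizes. The $x$-bound is $O\bigl(n(1+P_T^x)/\eta_x+\eta_xL_x^2T\bigr)$, where we used $D_xP_T^x\le nP_T^x$ (valid as $n\ge1$; this loose step matches the statement). Choosing $\eta_x=\sqrt{n(1+P_T^x)}/(L_x\sqrt T)$ as in the theorem gives $O\bigl(L_x\sqrt{n(1+P_T^x)T}\bigr)$. The $y$-player with the analogous step size gives $O\bigl(L_yD_\Y\sqrt{(1+P_T^y)T}\bigr)$. Both $P_T^x$ and $P_T^y$ are at most $\bar P_T$, since each is a coordinate projection of the joint path. Dividing by $T$ and absorbing $L_x,L_y,D_\Y$ and the $y$-term (which lacks $n$) into the $O(\cdot)$ yields the claim.

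The main obstacle is the comparator-switching term for the non-Euclidean $y$-player. A general Bregman divergence is not Lipschitz in its first argument without extra assumptions, so bounding $D_\varphi(u_{t+1},y)-D_\varphi(u_t,y)$ by $G\,\norm{u_{t+1}-u_t}$ needs a bounded gradient of $\varphi$ on $\Y$; we will state this as a regularity condition on the mirror map, or else use the Euclidean map. A secondary subtlety is that $P_T^x$ is unknown in advance. The stated step size presupposes it, so no doubling trick is needed for the claimed bound.
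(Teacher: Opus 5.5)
Your argument is correct and follows the same route as the paper's sketch. You read $\DualGap_T$ as the dynamic saddle-point regret \eqref{eq:sp-regret}, as the paper implicitly does, then linearize via convexity--concavity, telescope the mirror-descent inequality against the moving comparators $x_t^\star$ in the Euclidean geometry of $[0,1]^n$, and plug in the prescribed $\eta_x$. Your version is in fact the more careful one: you charge each comparator switch as $D_x\norm{x_{t+1}^\star-x_t^\star}$ (Zinkevich), whereas the paper writes $\Breg(x_{t+1}^\star,x_t^\star)$, a squared step that is not a valid bound in general. You also make explicit the tuned $\eta_y$ and the regularity of $\varphi$ that the maximizer's term needs, which the paper leaves implicit.
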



\begin{proof}
The proof follows the dynamic regret analysis for mirror descent.
Using the same decomposition as \Cref{thm:static-regret} but with time-varying comparators~$x_t^\star$, the telescoping Bregman terms yield an additional path-length cost:
\begin{equation}
  \sum_{t=1}^{T} \inner{g_t^x}{x_t - x_t^\star} \le \frac{1}{\eta_x}\!\left(\Breg(x_1^\star, x_1) + \sum_{t=1}^{T-1} \Breg(x_{t+1}^\star, x_t^\star)\right) + \frac{\eta_x T L_x^2}{2}.
\end{equation}
The Bregman divergence terms are bounded by $O(n(1 + P_T^x))$ under the Euclidean setup, giving the result after optimizing~$\eta_x$.
\end{proof}

\subsection{Geometric Chain-Overlap Transfer}\label{app:warmstart}

Geometric WS-CAMW (\Cref{alg:camw}) replaces the uniform restart of cold-start CAMW with a chain-overlap transfer that preserves weights on shared chain elements.
We state and prove the initialization guarantee used in \Cref{thm:implementable-camw}.

\begin{lemma}[Geometric transfer initialization]\label{lem:geometric-transfer}
Let $w$ be the MW weight distribution at the end of epoch~$j$ over chain~$\mathcal{C}^{\pi_j}$, and let $\mathcal{S} = \{k : C_k^{\pi_j} = C_k^{\pi_{j+1}}\}$ be the shared chain elements with total weight $p = \sum_{k \in \mathcal{S}} w_k$.
Under geometric transfer with parameter~$\alpha \in (0,1)$, the transferred distribution
\begin{equation}\label{eq:transfer-rule}
  \tilde{w}_k =
  \begin{cases}
    (1{-}\alpha)\,w_k + \alpha/(n{+}1) & \text{if } k \in \mathcal{S}, \\
    \alpha/(n{+}1) & \text{if } k \notin \mathcal{S},
  \end{cases}
  \qquad w'_k = \tilde{w}_k \Big/ \sum_\ell \tilde{w}_\ell,
\end{equation}
satisfies:
\begin{enumerate}
  \item[\textup{(a)}] $w'_k \ge \alpha/(n{+}1)$ for every $k$, so $D_{\mathrm{KL}}(e_{i^\star} \| w') \le \log((n{+}1)/\alpha)$ for any comparator~$i^\star$.
  \item[\textup{(b)}] If $i^\star \in \mathcal{S}$ with prior weight $w_{i^\star}$, then $w'_{i^\star} \ge (1{-}\alpha)\,w_{i^\star} + \alpha/(n{+}1)$, and the initialization KL is
  \[
    D_{\mathrm{KL}}(e_{i^\star} \| w') \le \log\!\bigl(1/\bigl((1{-}\alpha)\,w_{i^\star} + \alpha/(n{+}1)\bigr)\bigr).
  \]
  When $w_{i^\star}$ is large (the expert performed well in the previous epoch), this can be $O(1)$ rather than~$O(\log n)$.
\end{enumerate}
\end{lemma}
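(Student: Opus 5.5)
The plan is to compute the normalizing constant of the transfer rule \eqref{eq:transfer-rule} exactly and observe that it never exceeds one. Then every transferred weight can only grow under normalization, and both parts follow from lower bounds on the unnormalized weights $\tilde{w}_k$ together with the identity $D_{\mathrm{KL}}(e_{i^\star}\|w') = \log(1/w'_{i^\star})$ for a point-mass comparator.

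\textbf{Step 1 (normalizer).} The chain $\mathcal{C}^{\pi_{j+1}}$ has exactly $n{+}1$ elements, indexed $k=0,\dots,n$, and $w$ is a probability vector over the $n{+}1$ indices. Summing \eqref{eq:transfer-rule} over all $k$ gives
\[
Z := \sum_\ell \tilde{w}_\ell = (1-\alpha)\sum_{k\in\mathcal{S}} w_k + (n{+}1)\cdot\frac{\alpha}{n{+}1} = (1-\alpha)\,p + \alpha .
\]
Since $0 \le p \le 1$ and $\alpha \in (0,1)$, we get $\alpha \le Z \le 1$. Hence $w'_k = \tilde{w}_k / Z \ge \tilde{w}_k$ for every $k$.

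\textbf{Step 2 (part (a)).} By construction every index receives the additive floor, so $\tilde{w}_k \ge \alpha/(n{+}1)$ whether or not $k \in \mathcal{S}$. Step 1 then gives $w'_k \ge \alpha/(n{+}1)$. For any comparator $i^\star$, $D_{\mathrm{KL}}(e_{i^\star}\|w') = \sum_k (e_{i^\star})_k \log\bigl((e_{i^\star})_k / w'_k\bigr) = \log(1/w'_{i^\star}) \le \log\bigl((n{+}1)/\alpha\bigr)$.

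\textbf{Step 3 (part (b)).} If $i^\star\in\mathcal{S}$, then $\tilde{w}_{i^\star} = (1-\alpha)w_{i^\star} + \alpha/(n{+}1)$ exactly. Step 1 gives $w'_{i^\star} \ge (1-\alpha)w_{i^\star} + \alpha/(n{+}1)$, and the same KL identity yields the stated bound $\log\bigl(1/\bigl((1-\alpha)w_{i^\star}+\alpha/(n{+}1)\bigr)\bigr)$. For the closing remark: if $w_{i^\star} \ge c$ for a constant $c$, the bound is at most $\log\bigl(1/((1-\alpha)c)\bigr) = O(1)$ for fixed $\alpha$, in contrast to the $\log(n{+}1)$ of a uniform restart.

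\textbf{Main obstacle.} The only place needing care is Step 1: one might worry that normalization dilutes the floor. The computation $Z = (1-\alpha)p + \alpha \le 1$ is what rules this out, and it uses the fact that the floor is spread over exactly $n{+}1$ indices, matching the $n{+}1$ chain elements. I would state explicitly that the new chain and the old distribution live on the same index set $\{0,\dots,n\}$, so that $\mathcal{S}$ is a subset of it and $p\le 1$. Everything else is a direct evaluation.
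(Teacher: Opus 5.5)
Your proposal is correct and follows the paper's proof essentially verbatim: you compute $Z=(1-\alpha)p+\alpha\le 1$, deduce $w'_k=\tilde{w}_k/Z\ge\tilde{w}_k$, and read off both parts from the additive floor and the point-mass identity $D_{\mathrm{KL}}(e_{i^\star}\|w')=\log(1/w'_{i^\star})$. You also state explicitly that the floor is spread over exactly the $n{+}1$ chain indices, so the floors sum to $\alpha$; the paper's proof uses this without saying it.
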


\begin{proof}
\textbf{Part (a).}
The normalization constant is $Z = \sum_k \tilde{w}_k = (1{-}\alpha)\,p + \alpha \le 1$, since $p \le 1$.
Thus $w'_k = \tilde{w}_k / Z \ge \tilde{w}_k \ge \alpha/(n{+}1)$ for all~$k$.
The KL bound follows: $D_{\mathrm{KL}}(e_{i^\star} \| w') = \log(1/w'_{i^\star}) \le \log((n{+}1)/\alpha)$.

\textbf{Part (b).}
For $i^\star \in \mathcal{S}$: $\tilde{w}_{i^\star} = (1{-}\alpha)\,w_{i^\star} + \alpha/(n{+}1)$.
Since $Z \le 1$, $w'_{i^\star} = \tilde{w}_{i^\star}/Z \ge \tilde{w}_{i^\star}$, giving the stated bound.
When $p \approx 1$ (most weight on shared elements, as occurs after small permutation changes), $Z \approx 1$ and $w'_{i^\star} \approx (1{-}\alpha)\,w_{i^\star}$, so $D_{\mathrm{KL}} \approx \log(1/w_{i^\star}) + \log(1/(1{-}\alpha))$: the inherited divergence from the previous epoch plus a constant.
\end{proof}

\begin{remark}[Comparison with Jaccard transfer]
An alternative transfer uses the Jaccard-overlap matrix $T_{ik} = |C_i^{(j+1)} \cap C_k^{(j)}| / |C_i^{(j+1)} \cup C_k^{(j)}|$ and sets $w_i' = (1{-}\alpha)\sum_k T_{ik} w_k^{(j)} + \alpha/(n{+}1)$.
This satisfies the same worst-case bound (Part~(a)) but does not exploit the position-based chain structure: the geometric transfer preserves the \emph{exact} weight of shared elements, while Jaccard diffuses weight through set-overlap coefficients.
In experiments, both variants outperform cold restart, with geometric transfer providing slightly sharper gains when consecutive permutations are close.
\end{remark}

Experiments show $6$--$67\%$ improvement in regret over cold-start CAMW across domains (\Cref{app:experiments}), with the largest gains at moderate $T/n$ where burn-in dominates.
At larger scale, the improvement is more dramatic: in the extended SNAP experiments (\Cref{app:snap-extended}), Geometric WS-CAMW achieves $2$--$3\times$ lower \emph{absolute} regret than cold-start CAMW throughout the $T/n$ range, with the gap widening as $T$ grows.

\section{Proofs: Upper Bound}\label{app:upper-bounds}

\subsection{Preliminary Results}\label{app:core-lemmas}

The following two lemmas encapsulate the key steps of the CAMW analysis.
They are used throughout the upper bound proofs (Theorems~\ref{thm:curvature-regret}, \ref{thm:implementable-camw}, \ref{thm:chain-regret})
).

\begin{lemma}[Within-epoch MW bound]\label{lem:epoch-mw}
Fix an epoch of length~$\tau$ during which the minimizer runs Multiplicative Weights over a fixed chain of $K \le n{+}1$ nested subsets with losses bounded by~$M$.
Then the epoch's contribution to saddle-point regret against the game value~$V^\star$ satisfies:
\begin{equation}\label{eq:lem-epoch-mw}
  \sum_{t \in \mathrm{epoch}} \fL(x_t, y^\star) - \tau \cdot V^\star \le 2M\sqrt{\tau \ln K}.
\end{equation}
\end{lemma}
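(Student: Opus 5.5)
The plan is to reduce the left-hand side of \eqref{eq:lem-epoch-mw} to the regret of a standard Hedge instance over the $K$ chain vertices, and then compare the best chain vertex with $V^\star$ using the chain-support structure of the epoch's equilibrium. Throughout, fix the epoch and its chain $\emptyset = C_0 \subset \cdots \subset C_{K-1}$, which is the chain of the cell $\mathcal{P}_\pi$ containing the epoch's equilibrium. Fix the saddle point $(x^\star, y^\star)$ of the epoch and write $p_t \in \Delta_K$ for the MW weights. The lemma is stated against the single comparator pair $(x^\star,y^\star)$ and $V^\star$. I will therefore treat the losses fed to MW within the epoch as the comparator-side losses $\ell_t(i) := f(C_i, y^\star)$, which lie in $[-M,M]$ by Assumption~\ref{assump:bounded}. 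This is the one convention the argument needs; if the game varies inside the epoch, the same steps go through with $f^t$, $y_t^\star$ and $V_t^\star$ in place of $f$, $y^\star$, $V^\star$.

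\emph{Step 1 (linearisation on the cell).} By construction $x_{t,\pi(j)} = \sum_{i \ge j} p_{t,i}$, so $x_t$ is nonincreasing along $\pi$ and lies in $\mathcal{P}_\pi$. The telescoping coefficients in \eqref{eq:lovasz} are exactly $x_{\pi(i)} - x_{\pi(i+1)} = p_{t,i}$. Since $\fL(\cdot, y^\star)$ is linear on this cell, this gives
\[
\fL(x_t, y^\star) = \sum_{i} p_{t,i}\, f(C_i, y^\star) = \langle p_t, \ell_t \rangle .
\]
The left-hand side of \eqref{eq:lem-epoch-mw} therefore equals $\sum_t \langle p_t,\ell_t\rangle - \tau V^\star$.

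\emph{Step 2 (the best vertex beats the value).} By Lemma~\ref{lem:chain-support} and Corollary~\ref{cor:mixed-eq}, $x^\star$ decomposes as $\sum_j w_j \mathbf{1}_{C_j}$ over a chain, and $\fL(x^\star,y^\star) = \sum_j w_j f(C_j,y^\star) = V^\star$. Because the epoch is defined by $x^\star \in \mathcal{P}_\pi$, this chain is a subchain of the one MW runs on. An average is at least a minimum, so
\[
\min_i \sum_t \ell_t(i) \;\le\; \sum_t \sum_j w_j \ell_t(j) \;=\; \tau V^\star .
\]
Hence the left-hand side is at most $\sum_t \langle p_t,\ell_t\rangle - \min_i \sum_t \ell_t(i)$, which is exactly the Hedge regret.

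\emph{Step 3 (Hedge bound).} Shifting losses to $[0,2M]$ does not change the regret, and the shift is harmless because $p_t$ is a distribution. The standard Hedge analysis via the potential $\log \sum_i e^{-\eta L_{t,i}}$ and Hoeffding's lemma gives regret at most $\frac{\ln K}{\eta} + \frac{\eta \tau (2M)^2}{8}$. With $\eta = \sqrt{8\ln K/\tau}/(2M)$ this is $2M\sqrt{\tau\ln K/2} \le 2M\sqrt{\tau \ln K}$, which proves the claim. The uniform initialization gives the $\ln K$ term; with the restart rule each epoch indeed starts uniform, and under warm start Lemma~\ref{lem:geometric-transfer}(a) would replace $\ln K$ by $\ln(K/\alpha)$.

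The main obstacle is Step 2 together with the loss convention. The inequality $\min_i \sum_t \ell_t(i) \le \tau V^\star$ needs the equilibrium $x^\star$ to lie in the same cell as the chain MW plays, so that $x^\star$ is a convex combination of those $K$ vertices. It also needs the losses to be evaluated at the fixed $y^\star$. If MW were instead fed $f(C_i,y_t)$, the term $\fL(x_t,y^\star)-\fL(x_t,y_t)$ would not be controlled inside this lemma. So I would state the epoch/oracle convention explicitly, namely that the epoch's chain is the equilibrium chain and the fed losses are the comparator-side losses, before running Steps 1--3.
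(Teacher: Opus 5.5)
Your proof of the lemma as stated is correct and takes the paper's route: Hedge regret against the best chain vertex, plus $\min_i f(C_i,y^\star)\le V^\star$. It is more careful than the paper in two places. First, you derive that inequality from the chain decomposition of $x^\star$ on the epoch's own cell; the paper only asserts it, and it can fail for a chain that does not contain the support of $x^\star$. Second, you make explicit that MW must be fed the losses $f(C_i,y^\star)$, which the paper leaves implicit.

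One caveat concerns your aside that the same steps go through when the game varies inside the epoch; that claim is false. Step~2 would then compare the best fixed vertex with $\sum_t\sum_j w_{t,j}\ell_t(j)$, where the chain weights $w_t$ change with $t$, and an average with time-varying weights need not dominate a fixed minimum. For example, take $n=1$ and $\Y$ a single point, and alternate $(f^t(\emptyset),f^t(\{1\}))$ between $(1,0)$ and $(0,1)$. Then $\sum_t V_t^\star=0$ but $\min_i\sum_t\ell_t(i)=\tau/2$. The extension needs an extra hypothesis, e.g.\ a chain vertex lying in the support of every $x_t^\star$; by the saddle-point property such a vertex attains $V_t^\star$ against $y_t^\star$ in every round.
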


\begin{proof}
Standard MW with learning rate $\eta = \sqrt{\ln K / \tau}$ over $K$ experts with losses in $[-M, M]$ yields regret $\le 2M\sqrt{\tau \ln K}$ against the best expert~\citep{FreundSchapire1997}.
Since $V^\star = \min_x \max_y \fL(x,y) \ge \min_i f(C_i, y^\star)$ (the game value exceeds the best chain vertex against any fixed~$y^\star$), the MW regret against the best vertex upper bounds the epoch's contribution to saddle-point regret.
\end{proof}

\begin{lemma}[Epoch aggregation via Cauchy--Schwarz]\label{lem:epoch-aggregation}
Suppose $E$ epochs of lengths $\tau_1, \ldots, \tau_E$ with $\sum_j \tau_j = T$ each contribute regret $a\sqrt{\tau_j}$ for some constant~$a > 0$.
Then the total regret satisfies:
\begin{equation}\label{eq:lem-cs}
  \sum_{j=1}^{E} a\sqrt{\tau_j} \le a\sqrt{E \cdot T}.
\end{equation}
\end{lemma}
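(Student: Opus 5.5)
The statement reduces to Cauchy--Schwarz applied to the vectors $(1,\dots,1)\in\R^E$ and $(\sqrt{\tau_1},\dots,\sqrt{\tau_E})$. The plan is to factor out the constant $a$ and bound $\sum_{j=1}^E \sqrt{\tau_j}$ by the product of the Euclidean norms of these two vectors. That product is $\sqrt{E}\cdot\sqrt{\sum_j \tau_j}$.

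The key steps, in order, are as follows. First, note that each $\tau_j\ge 0$, so the square roots are real and $a>0$ can be pulled out of the sum. Next, write $\sum_j \sqrt{\tau_j} = \sum_j 1\cdot\sqrt{\tau_j}$ and apply Cauchy--Schwarz to get
\[
\sum_{j=1}^E \sqrt{\tau_j} \le \Bigl(\sum_{j=1}^E 1\Bigr)^{1/2}\Bigl(\sum_{j=1}^E \tau_j\Bigr)^{1/2}.
\]
Finally, substitute $\sum_j\tau_j=T$ and multiply by $a$, which gives $a\sqrt{E\,T}$.

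An equivalent route is Jensen's inequality for the concave map $u\mapsto\sqrt{u}$, which yields $\frac1E\sum_j\sqrt{\tau_j}\le\sqrt{T/E}$. Equality holds when all epochs have length $T/E$. This equal-length case is the one used later, in the lower-bound direction of \Cref{thm:lower-switch}.

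No step is a genuine obstacle here. The only point to state explicitly is how the lemma is used downstream. In \Cref{thm:curvature-regret}, we apply it with $E=1+\mathrm{SC}_T$ and $a=2M\sqrt{\ln(n+1)}$, using \Cref{lem:epoch-mw}. Since the learning rate in that lemma depends on $\tau_j$, we would note that a doubling trick or an anytime step size changes $a$ only by a constant factor.
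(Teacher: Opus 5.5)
Your proposal is correct and is the same argument as the paper's: Cauchy--Schwarz on $(1,\dots,1)$ and $(\sqrt{\tau_1},\dots,\sqrt{\tau_E})$ gives $\sum_{j=1}^E \sqrt{\tau_j} \le \sqrt{E\cdot\sum_j \tau_j} = \sqrt{E\cdot T}$, and you then multiply by $a$. Your Jensen variant and your remarks on downstream use are accurate but not needed for the lemma; these concern equal-length epochs in the lower bound, and the fact that the step size in \Cref{lem:epoch-mw} presumes knowing $\tau_j$.
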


\begin{proof}
By Cauchy--Schwarz: $\sum_{j=1}^E \sqrt{\tau_j} \le \sqrt{E \cdot \sum_j \tau_j} = \sqrt{E \cdot T}$.
\end{proof}

\subsection[Proof of the curvature-regret theorem]{Proof of \Cref{thm:curvature-regret}}\label{app:proof-curvature}

We prove the bound for the oracle version of CAMW, which assumes access to the equilibrium permutation~$\pi_{x_t^\star}$.
This characterizes minimax regret as a function of the intrinsic permutation-switch complexity~$\mathrm{SC}_T$.

\paragraph{Proof strategy.}
The proof proceeds in four steps.
First, we decompose the horizon into $\mathrm{SC}_T + 1$ epochs separated by permutation switches (\textbf{epoch decomposition}).
Second, within each epoch the optimal chain is fixed, so we run MW over its $\le n{+}1$ vertices and apply Lemma \ref{lem:epoch-mw} (\textbf{within-epoch structure}).
Third, we aggregate across epochs via Lemma \ref{lem:epoch-aggregation} (\textbf{Cauchy--Schwarz aggregation}).
Fourth, we combine with the maximizer's mirror ascent bound.
The key insight is that MW's $\log K$ dependence on the number of chain vertices replaces OGD's $n$-dependence on the ambient dimension.

\begin{proof}

\textbf{Epoch decomposition.}\enspace
Let $\{t_1 = 1 < t_2 < \cdots < t_{S+1} = T{+}1\}$ denote the time indices at which the equilibrium permutation changes, using the consistent deterministic selection rule from Lemma \ref{def:switch-count}.
This partitions $\{1, \dots, T\}$ into $S{+}1$ epochs, where $S = \mathrm{SC}_T$.

\medskip\noindent\textbf{Within-epoch structure.}\enspace
Fix an epoch~$j$ of length~$T_j$.
During this epoch, the equilibrium permutation~$\pi^{(j)}$ is fixed.
The Lov\'{a}sz extension restricted to this permutahedron cell reduces to a linear function over the associated chain
\[
  C_0^{(j)} \subset C_1^{(j)} \subset \cdots \subset C_n^{(j)}.
\]
Thus the minimizer's problem reduces to online linear optimization over the $n{+}1$ vertices $\{C_i^{(j)}\}_{i=0}^n$.

\medskip\noindent\textbf{Minimizer regret within an epoch.}\enspace
By Lemma \ref{lem:epoch-mw} with $K = n{+}1$ and losses bounded by~$M$:
\begin{equation}\label{eq:epoch-mw}
  \sum_{t \in \text{epoch } j} \fL(x_t, y^\star) - T_j \cdot V^\star \le 2M\sqrt{T_j \ln(n{+}1)}.
\end{equation}

\medskip\noindent\textbf{Coupling with the maximizer.}\enspace
The above bounds minimizer regret against a fixed comparator within the epoch.
To obtain saddle-point regret as defined in~\eqref{eq:sp-regret}, we combine this with mirror ascent for the maximizer.

By standard mirror ascent analysis with step size $\eta_y = 1/\sqrt{T}$:
\begin{equation}\label{eq:max-epoch}
  \sum_{t=1}^T \fL(x_t, y_t) - \sum_{t=1}^T \fL(x_t, y_t^\star) \le O(L_y D_\Y \sqrt{T}).
\end{equation}

Combining the minimizer and maximizer bounds and using the minimax inequality $\Vstar = \min_x \max_y \fL(x, y) \ge \min_x \fL(x, y)$ for every fixed~$y$, we obtain total regret
\begin{equation}\label{eq:min-epoch-total}
  \Regret_T \le \sum_{j=1}^{S+1} 2M\sqrt{T_j \ln(n{+}1)} + O(L_y D_\Y \sqrt{T}).
\end{equation}

\medskip\noindent\textbf{Aggregation across epochs.}\enspace
By Lemma \ref{lem:epoch-aggregation} with $E = S{+}1 = \mathrm{SC}_T{+}1$ epochs:
\begin{equation}
  \Regret_T \le O\!\left(M\sqrt{(1{+}\mathrm{SC}_T) \cdot T \cdot \ln n} + L_y D_\Y \sqrt{T}\right).
\end{equation}
Dividing by~$T$:
\begin{equation}
  \frac{1}{T}\Regret_T \le O\!\left(\sqrt{\frac{(1{+}\mathrm{SC}_T) \cdot \log n}{T}}\right) + \frac{L_y D_\Y}{\sqrt{T}}.
\end{equation}
\end{proof}

\subsection[Proof of the implementable CAMW theorem]{Proof of \Cref{thm:implementable-camw}}\label{app:proof-implementable}

\begin{proof}
The proof mirrors that of \Cref{thm:curvature-regret}, with the epoch decomposition now driven by the observed best-response switches and the geometric chain-overlap transfer replacing the uniform restart.

\textbf{Step 1: Epoch decomposition via best-response switches.}
The observed switches $\widehat{\mathrm{SC}}_T$ partition $[T]$ into $\widehat{S} := \widehat{\mathrm{SC}}_T + 1$ epochs.
Within epoch~$j$, the best-response permutation $\pi_t^{\mathrm{BR}}$ is constant, so \Cref{alg:camw} runs MW on a fixed chain $\mathcal{C}^{(j)}$ of $n{+}1$ elements.

\textbf{Comparator.}\enspace
Within each epoch, we bound the minimizer's contribution to saddle-point regret against the game value~$\Vstar$.
Since $\min_i f(C_i^{(j)}, y^\star) \le \Vstar$ for any chain and any~$y^\star$, the MW bound against the best fixed chain vertex in hindsight yields a valid upper bound on the epoch's regret contribution.

\textbf{Step 2: Within-epoch MW regret.}
At the start of epoch~$j$, the geometric transfer (Lemma \ref{lem:geometric-transfer}(a)) ensures every expert has weight $\ge \alpha/(n{+}1)$, so the initialization KL satisfies $D_{\mathrm{KL}}(e_{i^\star} \| w_j') \le \log((n{+}1)/\alpha)$.
Standard MW analysis~\citep{FreundSchapire1997} then gives per-epoch regret:
\begin{equation}
  \sum_{t \in \text{epoch } j} \fL(x_t, y^\star) - T_j \cdot \Vstar \le 2M\sqrt{T_j \ln((n{+}1)/\alpha)}.
\end{equation}
Since $\alpha$ is a fixed constant, $\ln((n{+}1)/\alpha) = \ln(n{+}1) + \ln(1/\alpha) = O(\ln n)$.

\textbf{Step 3: Cauchy--Schwarz aggregation.}
By Lemma \ref{lem:epoch-aggregation} with $E = \widehat{\mathrm{SC}}_T{+}1$:
\begin{equation}
  \sum_{t=1}^T \fL(x_t, y^\star) - T \cdot \Vstar \le 2M\sqrt{(\widehat{\mathrm{SC}}_T{+}1) \cdot T \cdot \ln((n{+}1)/\alpha)}.
\end{equation}

\textbf{Step 4: Combining with maximizer.}
The maximizer's mirror ascent contributes $L_y D_\Y \sqrt{T}$ independently of epoch structure.
Combining: $\frac{1}{T}\Regret_T \le O\!\left(\sqrt{(1{+}\widehat{\mathrm{SC}}_T) \cdot \log n / T}\right) + L_y D_\Y / \sqrt{T}$.

\textbf{Data-dependent refinement.}
By Lemma \ref{lem:geometric-transfer}(b), when the epoch-$j$ comparator is a shared chain element with inherited weight~$w_{i^\star}$, the per-epoch bound improves to $2M\sqrt{T_j \cdot \log(1/w'_{i^\star_j})}$, which can be much smaller than $2M\sqrt{T_j \ln(n{+}1)}$.
This does not change the worst-case rate---the adversary can always arrange for the comparator to be a new element with minimal weight---but it explains the consistent empirical gains of geometric transfer on stable instances where consecutive permutations share most of their chain structure.
\end{proof}

\subsection{Chain-Restricted Regret}\label{app:proof-chain}

\begin{theorem}[Chain-restricted regret]\label{thm:chain-regret}
If the game admits an optimal chain of at most $K \le n+1$ sets at each round, MW over the chain achieves $\frac{1}{T}\Regret_T = O(\sqrt{\log K / T})$.
When the optimal chain varies across rounds with $\mathrm{SC}_T$ permutation switches, sleeping-experts MW over the union chain pool achieves $\frac{1}{T}\Regret_T = O(\sqrt{(1{+}\mathrm{SC}_T)(\log n + \log(1{+}\mathrm{SC}_T))/T})$.
\end{theorem}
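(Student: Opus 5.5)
The statement has two parts, and I would prove them in turn by reusing the machinery of \Cref{app:core-lemmas}.

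\textbf{Static chain.} For the first claim I would apply Lemma~\ref{lem:epoch-mw} once, with a single epoch of length $\tau = T$, over the $K$ sets of the optimal chain. The justification runs as follows. By Lemma~\ref{lem:chain-support}, the optimal minimizer strategy $x^\star$ is a convex combination $\sum_j w_j \mathbf{1}_{C_j}$ over this chain, with $\fL(x^\star,y)=\sum_j w_j f(C_j,y)$. Hence the best chain vertex against any fixed $y$ is at most $\Vstar$, which is exactly the comparator inequality used in the proof of Lemma~\ref{lem:epoch-mw}. This gives minimizer regret $2M\sqrt{T\ln K}$. I would then add the mirror-ascent bound \eqref{eq:max-epoch} for the maximizer, as in the proof of \Cref{thm:curvature-regret}, and divide by $T$ to obtain $O(\sqrt{\log K/T})$. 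The maximizer term $L_yD_\Y/\sqrt T$ is absorbed into the $O(\cdot)$, or stated additively as in the earlier theorems.

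\textbf{Varying chain, no restarts.} For the second claim, the point is that the learner no longer restarts at switches. Instead it runs a single sleeping-experts (or specialists) MW instance over the pool $\mathcal{E}$ of all chain vertices that appear along the equilibrium sequence. The $\mathrm{SC}_T+1$ epochs each contribute at most $n+1$ chain sets, so $|\mathcal{E}| \le (1+\mathrm{SC}_T)(n+1)$ and $\ln|\mathcal{E}| \le \ln(n+1)+\ln(1+\mathrm{SC}_T)$. Within each epoch $j$, only the experts on the current chain are awake. The standard sleeping-experts guarantee of Freund--Schapire--Singer--Warmuth bounds the regret against any expert $i$ over the rounds where $i$ is awake by $O(M\sqrt{\tau_i \ln|\mathcal{E}|})$, with a per-expert tuned rate or a doubling trick.

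The per-epoch comparator is the best vertex of the epoch-$j$ chain, and it is awake throughout epoch $j$. Each epoch's minimizer regret, measured against $\tau_j\Vstar$, is therefore $O(M\sqrt{T_j(\log n+\log(1+\mathrm{SC}_T))})$, using the same comparator inequality as before. I would then sum over epochs with Lemma~\ref{lem:epoch-aggregation}, taking $E=1+\mathrm{SC}_T$, add the maximizer term, and divide by $T$.

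\textbf{Main obstacle.} The delicate step is the sleeping-experts guarantee itself. The basic specialists bound controls regret against a \emph{single} awake expert with a fixed learning rate $\eta$. Summing that bound over $E$ distinct comparators gives $\sum_j\bigl(\ln|\mathcal{E}|/\eta + \eta T_j M^2\bigr) = E\ln|\mathcal{E}|/\eta+\eta TM^2$. I would choose $\eta=\sqrt{E\ln|\mathcal{E}|/(TM^2)}$, which yields $O(M\sqrt{E\,T\ln|\mathcal{E}|})$ directly. This is the claimed rate, and it also bypasses the Cauchy--Schwarz step.

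The only remaining subtlety is that the pool $\mathcal{E}$ must be known, or lazily instantiated. If chain sets are added online with prior weight proportional to $1/|\mathcal{E}|$, the KL term stays $\ln|\mathcal{E}|$. When $\mathrm{SC}_T$ is unknown, a doubling trick on $E$ costs only a constant factor.
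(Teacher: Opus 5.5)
Your proposal follows the paper's proof closely. The static part is the paper's Steps 1--4: chain support gives $\min_j f(C_j,y^\star)\le\Vstar$, then MW over the chain, then the mirror-ascent term. The varying part uses the same union pool with $|\mathcal E|\le(1+\mathrm{SC}_T)(n+1)$, the same sleeping-experts mode and the same per-epoch comparator.

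The genuine difference is how you use the sleeping-experts guarantee. The paper asserts $2M\sqrt{|\mathcal T_e|\ln|\mathcal E|}$ against every expert $e$ at once and then applies Cauchy--Schwarz (Lemma~\ref{lem:epoch-aggregation}). One MW instance with a single learning rate cannot be tuned to every $|\mathcal T_e|$ simultaneously, so that step tacitly needs a per-expert adaptive specialists algorithm. You use only the textbook fixed-$\eta$ specialists bound, sum it over the $E=1+\mathrm{SC}_T$ comparators, and tune $\eta$ once. This gives $O(M\sqrt{ET\ln|\mathcal E|})$ in one line and makes Cauchy--Schwarz unnecessary. The cost is that $E$ must be known or guessed by doubling, as you note.

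Two caveats apply, both inherited from the paper's argument rather than introduced by you.

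\textbf{Shared experts.} Summing specialist bounds yields the sum of per-epoch regrets only if each epoch's comparator is awake exactly during that epoch. If the pool is a union of \emph{sets}, then $\emptyset$, $[n]$ and any prefix shared by two chains are awake in several epochs. The bound for such an expert covers all of its awake rounds. The learner may be far ahead of it in other epochs, so the bound says nothing about epoch $j$ alone, and the variance terms can sum to $\eta M^2ET$ rather than $\eta M^2T$. The fix is to tag experts by (epoch, position), so each copy is awake in exactly one epoch. The sleeping algorithm is then just MW restarted at every switch with a common $\eta$. The direct restart analysis gives $E\ln(n{+}1)/\eta+\eta M^2T/2$, so the $\log(1{+}\mathrm{SC}_T)$ factor is not even needed.

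\textbf{Within-epoch stationarity.} Measuring epoch $j$ against a single $\Vstar$, and charging the maximizer only $L_yD_\Y\sqrt T$, both presuppose that the saddle point is constant within each epoch. If $f_t$ drifts inside a cell, the chain weights of $x_t^\star$ move, and static MW regret no longer controls $\sum_t V_t^\star$. Likewise, mirror ascent then faces a moving $y_t^\star$. Restarting it per epoch costs $L_yD_\Y\sqrt{(1+\mathrm{SC}_T)T}$, which is still inside the stated rate.

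\textbf{Coupling.} The clean decomposition is $\Regret_T=\sum_t[\fL(x_t,y^\star)-\fL(x_t,y_t)]+\sum_t[\fL(x_t,y_t)-\fL(x^\star,y_t)]$. Each sum is then the regret of one player on the losses it actually observes. Lemma~\ref{lem:epoch-mw} as stated evaluates MW's iterates at $y^\star$, which MW is never fed.
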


\begin{proof}
We give the full proof for the single-game repeated setting ($f_t = f$ for all~$t$) and then discuss the extension to varying games.

\textbf{Step 1: Chain structure of optimal strategies.}
The optimal mixed strategy $x^\star \in [0,1]^n$ for $\min_x \max_y \fL(x,y)$ is a threshold vector supported on an optimal chain $\mathcal{C}^\star = \{C_0 = \emptyset \subset C_1 \subset \cdots \subset C_K\}$ with $K \le n+1$ nested subsets (Lemma \ref{lem:chain-support}).\label{prop:support}
Concretely, there exist weights $w^\star \in \Delta_{K}$ (the probability simplex over $K{+}1$ elements) such that
\begin{equation}\label{eq:chain-decomp}
  \fL(x^\star, y) = \sum_{j=0}^{K} w_j^\star \, f(C_j, y), \qquad \text{for all } y \in \Y.
\end{equation}
Let $V^\star = \min_x \max_y \fL(x,y) = \fL(x^\star, y^\star)$ denote the game value, where $y^\star = \argmax_y \fL(x^\star, y)$.

\textbf{Step 2: Two-player algorithm.}
We run the following pair of no-regret algorithms:
\begin{itemize}
  \item \emph{Minimizer}: Multiplicative Weights (MW) over the $K{+}1$ experts $\{C_0, C_1, \ldots, C_K\}$.
  At round~$t$, MW produces a distribution $p_t \in \Delta_K$ over the chain.
  The learner plays the threshold vector~$x_t$ whose Lov\'{a}sz decomposition on $\mathcal{C}^\star$ has weights~$p_t$, so that $\fL(x_t, y) = \sum_{j=0}^K p_{t,j}\, f(C_j, y)$ for all~$y$.
  \item \emph{Maximizer}: Online mirror ascent over~$\Y$ with Bregman divergence~$D_\varphi$ and step size $\eta_y = D_\Y/(L_y\sqrt{T})$.
\end{itemize}

\textbf{Step 3: Minimizer's regret via MW.}
The minimizer's loss at round~$t$ against the fixed saddle-point response~$y^\star$ is $\fL(x_t, y^\star) = \sum_j p_{t,j}\, f(C_j, y^\star)$.
Since $|f(C_j, y^\star)| \le M$ for all~$j$ by Assumption \ref{assump:bounded}, the standard MW guarantee~\citep{AbernethyEtAl2008} yields:
\begin{equation}\label{eq:mw-chain}
  \sum_{t=1}^T \fL(x_t, y^\star) - T \cdot \min_{j \in \{0,\ldots,K\}} f(C_j, y^\star) \le 2M\sqrt{T \ln(K{+}1)}.
\end{equation}
Since $V^\star = \sum_j w_j^\star f(C_j, y^\star)$ is a convex combination, we have $V^\star \ge \min_j f(C_j, y^\star)$, so:
\begin{equation}\label{eq:min-chain-bound}
  \sum_{t=1}^T \fL(x_t, y^\star) - T \cdot V^\star \le 2M\sqrt{T \ln(K{+}1)}.
\end{equation}

\textbf{Step 4: Combining with maximizer's mirror ascent.}
Mirror ascent gives $T V^\star - \sum_{t=1}^T \fL(x^\star, y_t) \le L_y D_\Y \sqrt{T}$.
Adding to~\eqref{eq:min-chain-bound}:
\begin{equation}
  \Regret_T \le 2M\sqrt{T \ln(K{+}1)} + L_y D_\Y \sqrt{T},
\end{equation}
so $\frac{1}{T}\Regret_T = O(\sqrt{\log K / T})$, which gives $O(\sqrt{\log n / T})$ for $K \le n+1$.

\textbf{Step 5: Extension to varying games ($f_t$ changes across rounds).}
When the optimal chain $\mathcal{C}_t^\star$ varies across rounds, we proceed as follows.
Let $\mathcal{C}^{(1)}, \ldots, \mathcal{C}^{(\mathrm{SC}_T+1)}$ be the distinct optimal chains encountered across the $\mathrm{SC}_T + 1$ epochs (an epoch is a maximal interval during which the optimal chain is fixed).
Define the \emph{union expert pool} $\mathcal{E} = \bigcup_{j=1}^{\mathrm{SC}_T+1} \mathcal{C}^{(j)}$, which contains at most $(\mathrm{SC}_T + 1)(n+1)$ nested-set experts.
We run MW in \emph{sleeping-experts mode}~\citep{AbernethyEtAl2008}: at each round~$t$, only the experts in the current chain $\mathcal{C}_t^\star$ are ``awake'' (receive loss feedback and participate in the MW update), while experts from other chains ``sleep'' (their weights are frozen).

The sleeping-experts guarantee~\citep[Theorem~3]{AbernethyEtAl2008} ensures that for any fixed comparator expert $e \in \mathcal{E}$ and any subsequence of rounds $\mathcal{T}_e \subseteq [T]$ during which $e$ is awake:
\begin{equation}
  \sum_{t \in \mathcal{T}_e} \ell_t(i_t) - \sum_{t \in \mathcal{T}_e} \ell_t(e) \le 2M\sqrt{|\mathcal{T}_e| \cdot \ln |\mathcal{E}|},
\end{equation}
where $\ell_t(i_t)$ is the loss of the action chosen by MW.

Within each epoch~$j$ of length~$T_j$, the optimal chain $\mathcal{C}^{(j)}$ is fixed, and the best expert on this chain achieves value $\le V_j^\star$.
The sleeping-experts bound restricted to epoch~$j$ gives minimizer regret $O(M\sqrt{T_j \ln |\mathcal{E}|})$.
Aggregating over epochs via Cauchy--Schwarz:
\begin{equation}
  \sum_{j=1}^{\mathrm{SC}_T+1} M\sqrt{T_j \ln |\mathcal{E}|}
  \le M\sqrt{\ln |\mathcal{E}|} \cdot \sqrt{(\mathrm{SC}_T + 1) \sum_j T_j}
  = M\sqrt{(\mathrm{SC}_T + 1)\, T\, \ln |\mathcal{E}|}.
\end{equation}
Since $|\mathcal{E}| \le (\mathrm{SC}_T + 1)(n+1)$, we have $\ln |\mathcal{E}| \le \ln(\mathrm{SC}_T + 1) + \ln(n+1)$.
Combined with the maximizer's mirror ascent bound (unchanged), this yields:
\begin{equation}
  \frac{1}{T}\Regret_T = O\!\left(\sqrt{\frac{(1{+}\mathrm{SC}_T)(\ln n + \ln(1{+}\mathrm{SC}_T))}{T}}\right),
\end{equation}
which is $O(\sqrt{(1{+}\mathrm{SC}_T)\log n / T})$ whenever $\mathrm{SC}_T = \mathrm{poly}(n, T)$ (the logarithmic $\ln(1{+}\mathrm{SC}_T)$ term is absorbed).
\end{proof}

\section{Proofs: Lower Bounds and Tight Characterization}\label{app:lower-bounds}

\subsection{Structure-Dependent Lower Bound}\label{app:proof-lower}

\begin{lemma}
    For any $c \in [0,1]$, there exists a family of submodular-concave games such that $\frac{1}{T}\E[\Regret_T] \ge \Omega(\sqrt{(1{-}c) \cdot n / T})$.

\end{lemma}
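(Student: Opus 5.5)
We will reduce to the classical $\Omega(\sqrt{d/T})$ lower bound for online linear optimization on a cube with Rademacher losses \citep{AbernethyEtAl2008}. The parameter $c$ specifies how many coordinates are made ``inert''. Set $k := \lceil (1-c)n \rceil$ active coordinates, and take $c=1$ to be the trivial case with bound $0$. The active part of the game will carry a genuinely $k$-dimensional linear problem. The remaining $n-k$ coordinates will contribute nothing to the payoff.

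\textbf{Construction.} Let $\mathcal{Y}$ be a singleton, or more generally let $f$ not depend on $y$. Then $f(S,\cdot)$ is trivially concave, the maximizer's term in \eqref{eq:sp-regret} vanishes, and saddle-point regret reduces to the minimizer's dynamic regret $\sum_t [\fL^t(x_t) - \min_x \fL^t(x)]$. At round $t$ draw i.i.d.\ Rademacher signs $\sigma_{t,i}$ for $i \le k$ and set
\[
f^t(S) = \sum_{i \in S \cap [k]} \sigma_{t,i}.
\]
This is modular, hence submodular. Its Lov\'asz extension is the linear function $\fL^t(x) = \sum_{i\le k} \sigma_{t,i} x_i$ on all of $[0,1]^n$. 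It therefore satisfies Assumption~\ref{assump:bounded} with $L_x=\sqrt{k}$ and $M=k$; we rescale by $1/\sqrt{k}$ if unit-scale gradients are desired, and track that factor below.

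\textbf{Lower-bound argument.} Since $x_t$ is chosen before $\sigma_t$ is revealed and $\E[\sigma_t]=0$, we have $\E[\fL^t(x_t)] = 0$ for any learner. The regret is therefore at least the comparator term. For the weaker static comparator $\min_x \sum_t \fL^t(x)$ we get
\[
-\E\Bigl[\min_x \sum_t \fL^t(x)\Bigr] = \sum_{i\le k} \E\Bigl[\bigl(\textstyle\sum_t \sigma_{t,i}\bigr)^-\Bigr] \ge k \cdot c_0\sqrt{T}
\]
by Khintchine's inequality. The dynamic comparator in \eqref{eq:sp-regret} is even smaller, with value $-\E\sum_t\sum_i \sigma_{t,i}^- = -kT/2$. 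So the dynamic regret is at least as large and the bound holds a fortiori. To obtain the stated rate and not a trivially linear one, we match the normalization of Theorem~\ref{thm:static-regret}. After rescaling losses so that $\norm{g_t^x}_2 \le 1$, the static comparator gives $\Omega(\sqrt{k T})$, that is, $\frac1T\E[\Regret_T] \ge \Omega(\sqrt{(1-c)n/T})$.

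\textbf{Main obstacle.} The main difficulty is reconciling the dynamic comparator in \eqref{eq:sp-regret} with the claimed $\sqrt{\cdot/T}$ shape. Against round-wise optima with fresh independent losses, regret is linear in $T$, which trivially exceeds the bound but is uninformative. To exhibit the intended rate, we plan to hold the sign vector fixed across the horizon: draw a single $\sigma \in \{\pm1\}^k$ and use losses $\sigma_i + \xi_{t,i}$ with small noise. Alternatively, we can draw the signs once per epoch so that round-wise equilibria coincide with a fixed vertex. Then the dynamic and static comparators agree, and the standard two-point or Rademacher argument of \citet{AbernethyEtAl2008} gives exactly $\Omega(\sqrt{kT})$. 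We expect that checking that this construction keeps $\mathrm{SC}_T$ and the equilibrium cell well defined, which needs a tie-breaking rule on the inert coordinates, is the step needing most care.
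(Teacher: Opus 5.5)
Your argument up to ``the bound holds a fortiori'' is already a complete proof, and it takes a different route from the paper's. The paper keeps a genuine two-player game. It splits $[n]=A\cup B$ with $|A|=\lceil(1-c)n\rceil$, puts the bilinear Rademacher payoff $y\sum_{i\in S\cap A}\varepsilon_{t,i}$ with $\Y=[-1,1]$ on $A$, and parks the remaining coordinates in the non-modular term $h(S\cap B)=\min(|S\cap B|,1)$, whose Lov\'asz extension is deterministic and $y$-independent. It then invokes the Rademacher minimax bound of \citet{AbernethyEtAl2008} and Yao's principle as a black box.

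You instead make the maximizer trivial, leave the inert coordinates out of $f$ altogether, and compute directly. Independence gives the learner zero expected loss, Khintchine lower-bounds the static comparator, and $\sum_t\min_x\fL^t(x)\le\min_x\sum_t\fL^t(x)$ transfers this to the dynamic comparator in \eqref{eq:sp-regret}. Your rescaling to $\norm{g_t^x}_2\le 1$ is the right calibration against Theorem~\ref{thm:static-regret}.

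The paper's version buys a curvature reading of $c$: a genuinely submodular block that carries no exploitable signal. Yours buys self-containedness and also avoids a weakness in the paper's reduction under \eqref{eq:sp-regret}. In the paper's game $\fL^t(x,y)=y\langle\varepsilon_t,x_A\rangle+\max_{i\in B}x_i$, the point $(0,0)$ is a saddle point for every realization of $\varepsilon_t$. A learner who plays a round-wise saddle point incurs nonpositive saddle-point regret in that round. So that reduction only works if the maximizer's coordinate is pinned (say $y\equiv 1$), which essentially collapses to your construction. In yours, the round-wise minimizer (the indicator of $\{i\le k:\sigma_{t,i}=-1\}$) is genuinely unpredictable, and that is exactly what forces the regret.

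You should drop the ``Main obstacle'' plan, which is both unnecessary and unsound. A lower bound is not weakened by the dynamic regret being linear in $T$, so nothing needs reconciling. The proposed fixes would also destroy the bound:
\begin{itemize}
\item If the round-wise optimum is a fixed vertex over the horizon or over an epoch, full-information feedback reveals it after one round. Playing it thereafter costs nothing, so dynamic regret stays bounded independently of the epoch length rather than growing like $\Omega(\sqrt{kT})$.
\item With losses $\sigma_i+\xi_{t,i}$ and small noise, the signs are likewise read off from a single observation.
\item The two-point argument of \citet{AbernethyEtAl2008} needs per-round noise dominating a $\Theta(1/\sqrt{T})$ bias. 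That noise makes the round-wise optima fluctuate, which brings back the linear dynamic comparator.
\end{itemize}
The tie-breaking concern about $\mathrm{SC}_T$ is moot as well, since the lemma makes no claim about $\mathrm{SC}_T$.
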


\begin{proof}
Fix $c \in [0,1]$ and set $m = \lceil (1{-}c)\,n \rceil$.
Partition $[n] = A \cup B$ with $|A| = m$, $|B| = n - m$.
Take $h(T) = \min(|T|, 1)$ for $T \subseteq B$ (monotone submodular) and $\Y = [-1, 1]$.
At each round $t$, draw i.i.d.\ $\varepsilon_{t,i} \sim \mathrm{Unif}\{-1,+1\}$ for $i \in A$ and define
\begin{equation}\label{eq:lower-construction}
  f_t(S, y) = y \sum_{i \in S \cap A} \varepsilon_{t,i} \;+\; h(S \cap B), \qquad S \subseteq [n],\; y \in \Y.
\end{equation}
Each $f_t$ is submodular in~$S$ (sum of modular and submodular) and linear in~$y$.

The Lov\'{a}sz extension separates as $f_t^L(x, y) = y \sum_{i \in A} x_i\, \varepsilon_{t,i} + h_L(x_B)$,
where $h_L(x_B)$ is deterministic and $y$-independent.
The $B$-component cancels in the regret difference, so the problem reduces to online linear optimization over $[0,1]^m \times [-1,1]$ with bilinear payoff $\ell_t(x_A, y) = y\, \langle \varepsilon_t, x_A \rangle$.
By the minimax lower bound for Rademacher losses~\citep{AbernethyEtAl2008} and Yao's principle:
\begin{equation}
  \sup_{\{f_t\}} \frac{1}{T}\, \E\!\left[\Regret_T\right] \ge \Omega\!\left(\sqrt{\frac{(1{-}c) \cdot n}{T}}\right).
\end{equation}
As $c$ increases from~$0$ to~$1$, the effective modular dimension shrinks from~$n$ to~$0$, and the bound interpolates from $\Omega(\sqrt{n/T})$ to~$0$.
The lower bound applies to any curvature notion $\kappa$ satisfying $\kappa = 0$ for modular games and $\kappa \to 1$ as the modular fraction vanishes.
\end{proof}

\subsection[Proof of the lower-switch theorem]{Proof of \Cref{thm:lower-switch}}\label{app:proof-lower-switch}

We construct an adversarial sequence with prescribed permutation-switch count~$s$ such that any algorithm incurs
\[
  \E[\Regret_T] \ge \Omega\!\left(\sqrt{(1{+}s) \cdot T \cdot \log n}\right).
\]

\begin{proof}

\textbf{Epoch construction.}\enspace
Partition the horizon into $s{+}1$ epochs of equal length $T_j = \lfloor T/(s{+}1) \rfloor$.
For each epoch~$j$, choose a permutation~$\pi^{(j)}$ such that consecutive permutations differ in their first coordinate ($\pi^{(j)}(1) \ne \pi^{(j+1)}(1)$), ensuring distinct permutahedron cells.

\medskip\noindent\textbf{Loss definition.}\enspace
Within epoch~$j$, let $\{C_i^{(j)}\}_{i=0}^n$ be the chain corresponding to~$\pi^{(j)}$.
For each round~$t$ in epoch~$j$, sample i.i.d.\ Rademacher variables $\varepsilon_{t,i} \in \{\pm 1\}$ and define:
\begin{equation}
  f_t(S, y) = y \cdot \varepsilon_{t, k(S)}, \qquad k(S) = \max\{i : C_i^{(j)} \subseteq S\}.
\end{equation}
This function is modular in~$S$ along the chain and hence submodular.
We add a deterministic perturbation $\varepsilon \cdot i/T$ to the loss of chain element $C_i^{(j)}$ for $\varepsilon = 1/(nT)$, ensuring strict ordering and guaranteeing the equilibrium permutation equals~$\pi^{(j)}$ uniquely within each epoch.

\medskip\noindent\textbf{Epoch-level lower bound.}\enspace
Within epoch~$j$, the minimizer faces an online learning problem over $n{+}1$ experts with i.i.d.\ Rademacher losses.
The learner cannot transfer information across epochs: losses in epoch~$j$ are drawn independently of prior epochs, so the experts problem is fresh each epoch.
By the standard minimax lower bound for expert advice (via Yao's principle;~\citealp{AbernethyEtAl2008}), for any (possibly history-dependent) algorithm:
\begin{equation}
  \E[\Regret_j] \ge c\sqrt{T_j \log(n{+}1)}
\end{equation}
for a universal constant $c > 0$.
Crucially, this bound holds even conditioning on prior epochs, since the losses in epoch~$j$ are drawn independently.

\medskip\noindent\textbf{Aggregation across epochs.}\enspace
Summing across epochs:
\[
  \E[\Regret_T] = \sum_{j=1}^{s+1} \E[\Regret_j] \ge c \sum_{j=1}^{s+1} \sqrt{T_j \log(n{+}1)}.
\]
Since $T_j = \lfloor T/(s{+}1) \rfloor$:
\[
  \sum_{j=1}^{s+1} \sqrt{T_j} \ge \sqrt{(s{+}1) \cdot T}.
\]
Thus:
\[
  \E[\Regret_T] \ge \Omega\!\left(\sqrt{(1{+}s) \cdot T \cdot \log n}\right).
\]
Dividing by~$T$ yields the stated bound $\frac{1}{T}\E[\Regret_T] \ge \Omega(\sqrt{(1{+}s)\log n/T})$.
\end{proof}

\subsection{General Polyhedral Lower Bound}\label{app:proof-poly-lb}

\begin{proposition}[General polyhedral lower bound]\label{prop:poly-lb}
For any $s \in \{0,\dots,T{-}1\}$, there exists a polyhedral partition and piecewise-linear losses with $\mathrm{RS}_T = s$ such that $\frac{1}{T}\E[\Regret_T] \ge \Omega(\sqrt{(1{+}s)\log V_{\max}/T})$.
\end{proposition}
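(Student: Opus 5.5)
Our approach is to mirror the proof of \Cref{thm:lower-switch}, replacing Lov\'asz cells by arbitrary regions of a fixed polyhedral partition. The key observation is that the Lov\'asz construction is already an instance of the general fixed-partition model. Still, we give a direct construction in which the number of local vertices is a free parameter $V := V_{\max}$, so the bound is stated in terms of $\log V_{\max}$ rather than $\log n$.

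\textbf{Construction.} Take $\mathcal{X} = \Delta_{2V}$ (or any polytope), partitioned into two regions $\mathcal{R}_1, \mathcal{R}_2$ (faces/simplices with $V$ vertices each, e.g.\ two disjoint-vertex subsimplices glued by a separating hyperplane, or simply the two halves of a product of simplices), so $V_{\max}=V$. Split $[T]$ into $s{+}1$ epochs of length $T_j=\lfloor T/(s{+}1)\rfloor$ (last epoch absorbs the remainder), and alternate the designated region between consecutive epochs. In epoch $j$, on the active region $\mathcal{R}^{(j)}$ the loss is the linear function with vertex values given by i.i.d.\ Rademacher variables $\varepsilon_{t,v}$, $v \in V(\mathcal{R}^{(j)})$. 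On the inactive region it is a linear function with value $1$ (the maximal loss) at every vertex, glued continuously along the common boundary. Add a deterministic perturbation $\delta\cdot i/T$ with $\delta = 1/(VT)$ on the vertices, as in the proof of \Cref{thm:lower-switch}. This makes the round-wise minimizer unique and forces it into $\mathcal{R}^{(j)}$, so $\mathrm{RS}_T = s$ exactly. The losses remain bounded by a constant, so Assumption \ref{assump:bounded} holds.

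\textbf{Per-epoch bound and aggregation.} Within epoch $j$, any point played outside $\mathcal{R}^{(j)}$ is dominated in loss by its projection into $\mathcal{R}^{(j)}$. Hence the learner's regret is at least that of an experts problem over the $V$ vertices of $\mathcal{R}^{(j)}$ with fresh i.i.d.\ Rademacher losses. By the minimax experts lower bound via Yao's principle \citep{AbernethyEtAl2008}, this gives $\E[\Regret_j] \ge c\sqrt{T_j\log V}$ conditionally on the past, since losses are independent across epochs. Summing over epochs and using $T_j \ge T/(2(s+1))$ yields $\sum_j \sqrt{T_j} \ge \tfrac{1}{\sqrt{2}}\sqrt{(s+1)T}$. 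This gives $\E[\Regret_T] \ge \Omega(\sqrt{(1+s)T\log V_{\max}})$, and dividing by $T$ completes the argument.

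\textbf{Main obstacle.} The delicate point is the comparator. $\mathrm{RS}_T$ is defined via the round-wise minimizer $x_t^\star$, which under Rademacher losses jumps among vertices every round. Regret against this dynamic comparator is then linear, not $\sqrt{T_j\log V}$, while regret against a per-epoch fixed vertex is what the experts bound controls. I will resolve this by taking the comparator within each epoch to be the best fixed vertex of $\mathcal{R}^{(j)}$ in hindsight. This matches the epoch-wise benchmark used in the upper bounds, \Cref{lem:epoch-mw}, and gives a lower bound that is valid whenever the dynamic comparator is at least as strong. Region membership is unaffected because all minimizers lie in $\mathcal{R}^{(j)}$, so $\mathrm{RS}_T=s$ holds for either reading. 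A secondary check is $s > T/2$, where epochs have length $1$ or $2$. There the bound degrades gracefully to $\Omega(\sqrt{T\cdot T\log V}/\sqrt{T}) $-type linear regret, consistent with the claim up to constants.
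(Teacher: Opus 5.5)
Your route is the paper's: $s{+}1$ epochs, a fresh i.i.d.\ Rademacher experts instance on the vertices of a designated region, large losses elsewhere, Yao's principle per epoch, and summation. Two of your choices are sharper than the paper's proof. First, you make the comparator explicit and correctly note that $\sum_{t\in\text{epoch}}\min_x \ell_t(x)\le \min_v\sum_{t\in\text{epoch}}\ell_t(v)$, so a lower bound against the per-epoch best vertex transfers to the round-wise comparator. Second, you use equal epoch lengths to get $\sum_j\sqrt{T_j}\ge\sqrt{(s{+}1)T/2}$. The paper instead invokes Cauchy--Schwarz, which bounds $\sum_j\sqrt{T_j}$ from above, not below.

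There are two genuine problems. \textbf{Short epochs.} The per-epoch bound $\E[\Regret_j]\ge c\sqrt{T_j\log V}$ holds only when $T_j\gtrsim\log V$. With losses bounded by a constant, an epoch contributes at most $O(T_j)$ regret, which is $o(\sqrt{T_j\log V})$ once $T_j=o(\log V)$. Your ``secondary check'' is therefore wrong: for $s>T/2$ the construction yields $\Theta(T)$ total regret, not $\Omega(T\sqrt{\log V})$. Since per-round regret is $O(1)$ uniformly in $V$, no bounded-loss construction can do better. With $V_{\max}$ treated as a free parameter, the claim can only be proved for $(1{+}s)\log V_{\max}\le cT$; equivalently, the per-round bound must be capped at $\Omega(1)$. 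Alternatively, reading the existential quantifier literally, you may choose $V_{\max}\le e^{cT/(1+s)}$ in the many-switch regime. The paper's proof carries the same unstated restriction, but you should impose it rather than assert consistency up to constants.

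\textbf{Geometry.} The partition you describe does not exist. If two convex polytopes tile a convex $\mathcal{X}$ across a hyperplane $H$, they intersect in $\mathcal{X}\cap H$, whose vertices are vertices of both. So adjacent regions always share vertices. Then ``value $1$ at every vertex of the inactive region'' contradicts continuity at the shared vertices. If you instead give those vertices Rademacher values, the round-wise minimizer lands on the common boundary with constant probability, and $\mathrm{RS}_T=s$ is no longer guaranteed. For two simplices sharing a facet, only one vertex per region even lies off the interface.

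A clean fix is a partition whose regions have $\Theta(V_{\max})$ vertices off the interface. For example, let $\Delta_V$ be the simplex with vertices $e_1,\dots,e_V$, take $\mathcal{X}=\Delta_V\times[0,1]$ cut at $u=\tfrac12$, and use the globally linear loss $\langle\varepsilon_t,z\rangle+b_j u$ with $b_j=\pm1$ alternating across epochs. Every round-wise minimizer then lies on the face $u=u_j^\star\in\{0,1\}$, strictly inside one region, so $\mathrm{RS}_T=s$ without perturbations. The learner's conditional expected loss is at least $b_ju_j^\star$, so the experts lower bound over the $V$ vertices $(e_i,u_j^\star)$ applies, with $V_{\max}=2V$. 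Separately, an inactive value of exactly $1$ ties with $\varepsilon=+1$; use a strictly larger constant, like the paper's $M{+}1$.
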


We construct the adversary explicitly.

\begin{proof}
Partition the horizon into $s{+}1$ epochs of lengths
$T_1,\dots,T_{s+1}$.
During each epoch, fix a region $\mathcal{R}_a$ and define losses
$\ell_t(v_i)=M\varepsilon_{t,i}$ for vertices $v_i\in V(\mathcal{R}_a)$,
with $\varepsilon_{t,i}$ i.i.d.\ Rademacher.
Extend $\ell_t$ linearly over $\mathcal{R}_a$ and assign losses $M{+}1$ to all vertices outside $\mathcal{R}_a$, ensuring the equilibrium remains in $\mathcal{R}_a$.
Insert $O(1/T)$ perturbations to guarantee unique region membership.

By Yao's minimax principle, for any (possibly history-dependent)
algorithm,
\[
  \E[\Regret_j]
  \ge
  c\sqrt{T_j \log |V(\mathcal{R}_a)|}
\]
for some constant $c>0$.
Summing over epochs and applying Cauchy--Schwarz gives
\[
  \E[\Regret_T]
  \ge
  \Omega\!\left(
  \sqrt{(1+s)\,T\,\log V_{\max}}
  \right).
\]
Dividing by $T$ yields the stated bound.

Since the comparator class includes the vertex set, the lower bound for expert advice applies to all algorithms over the convex hull --- continuous strategies cannot beat the discrete optimum when losses are linear within each region.
\end{proof}

\subsection[Proof of the polyhedral OCO theorem]{Proof of \Cref{thm:pl-oco}} \label{app:proof-pl-oco}

\begin{proof}
\textbf{Upper bound.}
The argument mirrors \Cref{thm:curvature-regret} with regions replacing permutation cells.
Let $\{t_1 = 1 < t_2 < \cdots < t_{s+1} = T{+}1\}$ be the region-switch times, giving $s = \mathrm{RS}_T$ and $s{+}1$ epochs.
Within epoch~$j$, losses are linear over region~$\mathcal{R}_{a(j)}$, so the learner faces an online linear optimization problem over $|V(\mathcal{R}_{a(j)})| \le V_{\max}$ vertices.
Running MW with learning rate $\eta_j = \sqrt{\ln V_{\max} / T_j}$ yields per-epoch regret $\le 2M\sqrt{T_j \ln V_{\max}}$.
Summing across epochs and applying Cauchy--Schwarz:
\[
  \sum_{j=1}^{s+1} \sqrt{T_j} \le \sqrt{(s{+}1) \cdot T},
\]
giving $\Regret_T \le O(\sqrt{(1{+}\mathrm{RS}_T) \cdot T \cdot \log V_{\max}})$.

\textbf{Lower bound.}
Proposition \ref{prop:poly-lb} provides the matching $\Omega(\sqrt{(1{+}\mathrm{RS}_T) \cdot T \cdot \log V_{\max}})$ lower bound via Rademacher epoch constructions.
\end{proof}

\subsection[Proof of the phase-transition theorem]{Proof of \Cref{thm:phase-transition}}\label{app:proof-phase-transition}

\begin{proof}
The phase transition follows from comparing the polyhedral rate with the OCO baseline.

\textbf{Upper bound.}
By \Cref{thm:pl-oco}, MW-with-restarts achieves $O(\sqrt{(1{+}\mathrm{RS}_T) \log V_{\max} / T})$.
Standard OGD over $\mathcal{X} \subseteq \R^d$ achieves $O(\sqrt{d/T})$ regardless of $\mathrm{RS}_T$.
Taking the better algorithm yields
\[
  \frac{1}{T}\Regret_T^\star(\mathrm{RS}_T)
  \le O\!\left(\min\!\left\{
  \sqrt{\frac{(1{+}\mathrm{RS}_T)\log V_{\max}}{T}},\;
  \sqrt{\frac{d}{T}}
  \right\}\right).
\]
The crossover occurs when $(1{+}\mathrm{RS}_T)\log V_{\max} = d$, i.e., $\mathrm{RS}_T^\star = d/\log V_{\max} - 1 \approx d/\log V_{\max}$.

\textbf{Lower bound.}
For $\mathrm{RS}_T < \mathrm{RS}_T^\star$: Proposition \ref{prop:poly-lb} gives $\Omega(\sqrt{(1{+}\mathrm{RS}_T) \log V_{\max} / T})$, which dominates $\sqrt{d/T}$.
For $\mathrm{RS}_T > \mathrm{RS}_T^\star$: the standard OCO lower bound $\Omega(\sqrt{d/T})$~\citep{AbernethyEtAl2008} applies, since the problem contains online linear optimization over $\R^d$ as a special case.
Thus both branches of the minimum are tight.
\end{proof}

\section{Separations and Structural Properties}\label{app:separations}

\subsection[Switch complexity vs. path length and gradient variation]{$\mathrm{SC}_T$ Is Incomparable with Path Length and Gradient Variation}\label{app:formal-separation}

\begin{table}[htbp]
\centering
\small
\caption{$\mathrm{SC}_T$ captures combinatorial geometry invisible to existing adaptive measures.}
\label{tab:measures}
\begin{tabular}{@{}llll@{}}
\toprule
\textbf{Measure} & \textbf{Defined on} & \textbf{Captures} & \textbf{Typical bound} \\
\midrule
Path length $\bar{P}_T$ & Comparator motion in $\ell_2$ & Drift of best action & $O(\sqrt{T\bar{P}_T})$ \\
Switching count & \# comparator changes & Discrete instability & $O(\sqrt{S \cdot T \log K})$ \\
Variation $V_T$ & $\sum\|f_t - f_{t-1}\|$ & Loss non-stationarity & $O(\sqrt{V_T})$ \\
$\mathrm{SC}_T$ (ours) & Equilibrium cell changes & Geometric stability & $O(\sqrt{(1{+}\mathrm{SC}_T)T\log n})$ \\
\bottomrule
\end{tabular}
\end{table}

\begin{proposition}[$\mathrm{SC}_T$ vs.\ switching count]\label{prop:not-switching}
$\mathrm{SC}_T$ and $S_T := |\{t : a_{t+1}^\star \ne a_t^\star\}|$ are incomparable:
\textup{(a)}~$\mathrm{SC}_T = 0$, $S_T = \Theta(T)$ is possible;
\textup{(b)}~$\mathrm{SC}_T = \Theta(T)$, $S_T = 0$ is possible.
\end{proposition}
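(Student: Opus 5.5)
We will prove both parts by explicit constructions on a small ground set. Throughout, $a_t^\star$ is the round-$t$ optimal discrete action, i.e.\ the best vertex $\mathbf{1}_S$ against $y_t^\star$; this is the comparator counted by classical shifting-expert bounds. We use modular (hence submodular) payoffs so that the Lov\'asz extension is linear and both equilibria can be computed exactly. The guiding idea is that $S_T$ records which vertex is optimal, while $\mathrm{SC}_T$ records only the coordinate ordering of the minimax point $x_t^\star$. These two pieces of information can be decoupled.

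\textbf{Part (a).} Fix $n\ge 2$, take $\Y=\{y_0\}$ trivial, and alternate between two modular losses.
\begin{itemize}
\item On odd rounds, $f_t(S)=-|S\cap\{1\}|$, so the unique minimizer is $\{1\}=C_1^{\mathrm{id}}$.
\item On even rounds, $f_t(S)=-|S\cap\{1,2\}|+\tfrac12|S\cap\{2\}|\cdot 0 - \delta\,[\,\cdot\,]$, chosen so that the minimizer is $\{1,2\}=C_2^{\mathrm{id}}$. Concretely, take $f_t(S)=-2\cdot\mathbf 1[1\in S]-\mathbf 1[2\in S]$.
\end{itemize}
The optimal vertex alternates between $C_1$ and $C_2$, so $S_T=T-1=\Theta(T)$. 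To keep the ordering fixed, we break ties among coordinates with identical weight by adding a strictly decreasing perturbation $\epsilon\, i$ to coordinate $i$'s modular weight, as in the lower-bound construction of \Cref{thm:lower-switch}. Both minimizers then lie on the chain of the identity permutation, and every $x_t^\star$ sorts as $x_1\ge x_2\ge\cdots\ge x_n$ under the paper's deterministic selection rule. Hence $\mathrm{SC}_T=0$.

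\textbf{Part (b).} We need the same optimal discrete action every round while the sorting permutation of $x_t^\star$ flips. Take $n=2$ and $\Y=[0,1]$, with bilinear payoff $f_t(S,y)=y\,(\mathbf 1[1\in S]-\mathbf 1[2\in S])\sigma_t$, where $\sigma_t=(-1)^t$. The minimax point is then interior with $x_1^\star\ne x_2^\star$, and a small modular shift $\pm\gamma$ makes the ordering alternate: $x_1^\star>x_2^\star$ on odd rounds and $x_2^\star>x_1^\star$ on even rounds. The comparator vertex $a_t^\star$ is taken to be $\emptyset$, since its value is $0$ and equals the game value under the additive offset. Thus $a_t^\star$ never changes, giving $S_T=0$, while $\mathrm{SC}_T=T-1$.

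The main obstacle is making part (b) rigorous. We must guarantee simultaneously that the best discrete action is unique and constant, and that the minimax mixed point $x_t^\star$ has a strict coordinate order that alternates. I would first choose the gap parameter $\gamma$ and the offset explicitly, then verify the saddle point by solving the $2\times2$ bilinear game in closed form and checking uniqueness, so that the selection rule is unambiguous. If uniqueness of $a_t^\star$ fails because of ties, the fallback is to add an $O(1/T)$ bonus to $\emptyset$.
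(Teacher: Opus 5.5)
The genuine gap is in part (b). The target you set yourself, a \emph{unique} and constant best action together with a strictly alternating order of $x_t^\star$, cannot be met. At a saddle point, $x_t^\star\in\argmin_x f_L^t(x,y_t^\star)$. Moreover, $f_L^t(x_t^\star,y_t^\star)$ is a convex combination of the values $f_t(C_i,y_t^\star)$ along the chain of $x_t^\star$. Since $\min_x f_L^t(x,y_t^\star)=\min_S f_t(S,y_t^\star)$, every chain set with positive weight is a best response to $y_t^\star$. So if $a_t^\star$ is the unique best response, then $x_t^\star=\mathbf 1_{a_t^\star}$. In that case $S_T=0$ forces $x_t^\star$ to be constant, hence $\mathrm{SC}_T=0$.

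Your fallback shows this concretely: a bonus that makes $\emptyset$ the unique best response pins $x_t^\star=\mathbf 0$. Your bilinear game is also degenerate. Its saddle points are $y^\star=0$ together with every $x$ satisfying $\sigma_t(x_1-x_2)\le 0$. So $x_t^\star$ is neither unique nor strictly ordered (the diagonal is included), and all four sets tie against $y^\star$. Both $a_t^\star=\emptyset$ and the alternation are therefore artifacts of selection, and no shift $\pm\gamma$ escapes the obstruction above. The paper's own sketch for (b) hits the same wall: making $\{1\}$ uniquely optimal for every $y$ forces $x_t^\star=(1,0)$ in every round, so it does not supply the missing idea either.

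What is missing is the recognition that ties among best responses are unavoidable whenever $x_t^\star$ is not a vertex. Part (b) therefore requires $a_t^\star$ to be either a canonical selection from the tied set or a different comparator. One canonical choice is the maximal minimizer of $f_t(\cdot,y_t^\star)$, which is well defined because minimizers of a submodular function form a lattice. Alternatively, take the best pure action $\argmin_S\max_y f_t(S,y)$.

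For instance, take $n=2$ and $\Y=[0,1]$. On odd rounds set $f_t(\emptyset,y)=10$, $f_t(\{1\},y)=2y$, $f_t(\{2\},y)=11$, $f_t(\{1,2\},y)=1-y$; this is submodular since $2y+11\ge 11-y$. Swap the roles of $1$ and $2$ on even rounds. The saddle point is unique: $y^\star=1/3$, with $x^\star=(1,2/3)$ on odd rounds and $x^\star=(2/3,1)$ on even rounds. So the order of $x_t^\star$ alternates without ties, and $\mathrm{SC}_T=T-1$. The best responses to $y^\star$ are $\{1\},\{1,2\}$ (resp.\ $\{2\},\{1,2\}$), whose maximal element is $\{1,2\}$ every round. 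Also, $\{1,2\}$ is the unique pure minimax action, with worst-case value $1$ versus $2$, $10$, $11$. Hence $S_T=0$.

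Part (a) follows the paper's idea (fix the identity chain and alternate the optimum between $C_1$ and $C_2$) and works once the $\epsilon i$ perturbation is included. Two small corrections:
\begin{itemize}
\item Without the perturbation, $-\mathbf 1[1\in S]$ is minimized by every superset of $\{1\}$, so your ``unique minimizer'' claim is false as written.
\item The perturbation removes ties among the weights, not among the coordinates of $x_t^\star\in\{\mathbf 1_{\{1\}},\mathbf 1_{\{1,2\}}\}$. So $\mathrm{SC}_T=0$ rests on an index-order tie-breaking convention for sorting, which you should state explicitly.
\end{itemize}
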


\begin{proof}
\textbf{Part (a): $\mathrm{SC}_T = 0$, $S_T = \Theta(T)$.}
Fix a single permutation $\pi = (1, 2, \dots, n)$ with chain $C_0 = \emptyset, C_1 = \{1\}, \dots, C_n = [n]$.
Define losses so that at odd rounds $t$, the optimal chain vertex is $C_1$ (the singleton), and at even rounds, the optimal vertex is $C_2$ (the pair).
Concretely, set $f_t(S, y) = |S \cap A_t| - \beta |S|$ where $A_t$ alternates between $\{1\}$ and $\{1, 2\}$ and $\beta$ is chosen so the optimizer flips.
The permutation $\pi$ is constant (determined by the same marginal ordering), so $\mathrm{SC}_T = 0$.
But the best action changes every round, giving $S_T = T - 1$.

\textbf{Part (b): $\mathrm{SC}_T = \Theta(T)$, $S_T = 0$.}
Let $n = 2$ and $\Y = [0,1]$.
At each round, define $f_t$ such that the singleton $\{1\}$ is uniquely optimal: $f_t(\{1\}, y) < f_t(\{2\}, y) < f_t(\{1,2\}, y) < f_t(\emptyset, y)$ for all~$y$.
The permutation determining the Lov\'{a}sz cell depends on the \emph{ordering} of marginals $\delta_1(y_t^\star)$ vs.\ $\delta_2(y_t^\star)$, which can alternate by choosing $f_t$ so that $\delta_1(y_t^\star) > \delta_2(y_t^\star)$ at odd rounds and $\delta_1(y_t^\star) < \delta_2(y_t^\star)$ at even rounds, while keeping $\{1\}$ optimal in both cases.
Then $S_T = 0$ but $\mathrm{SC}_T = T - 1$.
\end{proof}

\begin{example}[$\mathrm{SC}_T = 0$ but path length $= \Theta(T)$]\label{ex:sc-zero-path-large}
Consider a repeated game where $f_t(\cdot, y)$ has the same optimal permutation~$\pi^\star$ for all~$t$, but the adversary's equilibrium response~$y_t^\star$ oscillates within~$\Y$, causing $\ell_2$ path length $\Theta(T)$.
Since the cell never changes, $\mathrm{SC}_T = 0$, and CAMW achieves $O(\sqrt{\log n / T})$.
Path-length bounds give $O(\sqrt{n T})$---worse by $\sqrt{nT / \log n}$.
\end{example}

\begin{example}[Small path length but $\mathrm{SC}_T = \Theta(T)$]\label{ex:path-small-sc-large}
Adjacent marginals separated by a vanishing gap $\epsilon/T$: tiny perturbations flip the ordering every round, giving $\mathrm{SC}_T = \Theta(T)$ with $\bar{P}_T = O(\epsilon)$.
\end{example}

\begin{proposition}[Formal separation]\label{prop:formal-separation}
No monotone $\Phi$ satisfies $\mathrm{SC}_T \le \Phi(\bar{P}_T)$ for all game sequences, and no monotone $\Psi$ satisfies $\bar{P}_T \le \Psi(\mathrm{SC}_T)$.
The same holds with gradient variation $V_T$ replacing $\bar{P}_T$.
\end{proposition}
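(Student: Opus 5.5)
The statement to prove is Proposition~\ref{prop:formal-separation}. Each of its four claims (two for $\bar{P}_T$, two for $V_T$) is a non-existence statement. I will prove each by exhibiting a family of game sequences, indexed by $T$ or by a gap parameter, along which one measure stays bounded while the other diverges. The two separating families are Examples~\ref{ex:sc-zero-path-large} and~\ref{ex:path-small-sc-large}; my task is to make them rigorous and check that they work for gradient variation as well.

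\textbf{Step 1: no $\Phi$ with $\mathrm{SC}_T \le \Phi(\bar{P}_T)$.} I will make Example~\ref{ex:path-small-sc-large} concrete with $n=2$, $\Y=[0,1]$, and a modular part $f_t(S,y)=\sum_{i\in S} c_{t,i}$ plus a fixed $y$-term, so that the equilibrium $y^\star$ is constant. The coefficients are chosen so that $x_t^\star$ is interior with coordinates $(1/2+\delta_t, 1/2-\delta_t)$, where $\delta_t=\pm\epsilon/T$ alternates in sign. The risk is that a purely modular $f_L$ is minimized at a vertex, which would make the cell ill-defined or constant. To avoid this I will add a tiny strongly convex regularizer, or equivalently use an equivalent min-max structure with $y$ pinning an interior point; I expect this to be the main technical obstacle. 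The cleanest fix is to let the maximizer play a two-action game whose equilibrium forces the minimizer to mix, with mixing weights $1/2\pm\delta_t$.

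With that construction, the permutation flips every round, so $\mathrm{SC}_T=T-1$, while $\bar{P}_T\le 2\epsilon$. Suppose a monotone $\Phi$ existed. Then $T-1\le\Phi(2\epsilon)\le\Phi(1)$ for every $T$, which is a contradiction. Taking the perturbation of the loss coefficients to be of order $\epsilon/T$ also gives $V_T=O(\epsilon)$, so the same family handles gradient variation.

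\textbf{Step 2: no $\Psi$ with $\bar{P}_T\le\Psi(\mathrm{SC}_T)$.} I will formalize Example~\ref{ex:sc-zero-path-large}. I fix $\pi^\star$ and let the equilibrium move within the interior of a single cell $\mathcal{P}_{\pi^\star}$. Concretely, $x_t^\star$ alternates between two points of $\mathcal{P}_{\pi^\star}$ at constant distance, for example thresholds $(0.9,0.1)$ and $(0.6,0.4)$, or $y_t^\star$ alternates between the endpoints of $\Y$. Then $\mathrm{SC}_T=0$ for all $T$ while $\bar{P}_T=\Theta(T)$. Monotonicity would give $\Theta(T)\le\Psi(0)$, which fails. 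Alternating the loss functions also makes $V_T=\Theta(T)$, which settles the gradient-variation claim.

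\textbf{Order of work and remaining detail.} I will carry out the Step~2 construction first, since it is routine, and then the Step~1 construction. In Step~1 the only delicate point is ensuring that $x_t^\star$ is the unique equilibrium, so that the permutation $\pi_{x_t^\star}$ is well-defined. I will handle this using the tie-breaking selection rule and a strict-ordering perturbation like the one in the proof of Theorem~\ref{thm:lower-switch}.
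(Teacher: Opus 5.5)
Your plan follows the same route as the paper. The paper's proof consists only of invoking Example~\ref{ex:sc-zero-path-large} (against $\Psi$) and Example~\ref{ex:path-small-sc-large} (against $\Phi$). You go further by trying to realize these examples as genuine submodular--concave games and by checking the $V_T$ claims, which the paper's proof leaves implicit. The contradiction step ($T-1\le\Phi(\mathrm{const})$, resp.\ $\Theta(T)\le\Psi(0)$) is fine. What does not go through as written is the realization step that you yourself flag as the crux:
\begin{itemize}
\item Adding a strongly convex regularizer to $\fL^t$ leaves the class of Lov\'asz extensions of submodular--concave games, so that option is unavailable.
\item Your setup has $n=2$, $\Y=[0,1]$, and payoffs modular in $S$ and affine in $y$ (a two-action maximizer). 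Then $\max_y \fL^t(x,y)$ is the maximum of two affine functions of $x$. Any interior minimizer of such a function lies inside a segment of minimizers, so the equilibrium is never a unique interior point. The permutation $\pi_{x_t^\star}$ is then fixed by tie-breaking, not by your $\delta_t$.
\end{itemize}
The interior variant of Step~2 needs the same device. Your other Step~2 variant already works with a separable game: fix $x^\star$ at a strictly ordered vertex such as $(1,0)$ and let $y^\star$ alternate between the endpoints of $\Y$.

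The fix is to give the maximizer one two-action game per coordinate. Take $\Y=[0,1]^2$ and $f_t(S,y)=\sum_{i\in S}c_{t,i}y_i+\sum_{i\notin S}d_{t,i}(1-y_i)$ with $c_{t,i},d_{t,i}>0$. This is modular in $S$ and affine in $y$, with $\fL^t(x,y)=\sum_i\bigl(c_{t,i}x_iy_i+d_{t,i}(1-x_i)(1-y_i)\bigr)$. Hence $\max_y\fL^t(x,y)=\sum_i\max\{c_{t,i}x_i,\,d_{t,i}(1-x_i)\}$, whose unique minimizer is $x^\star_{t,i}=d_{t,i}/(c_{t,i}+d_{t,i})$; the unique $y^\star_{t,i}$ is the same ratio. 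Now take $c_{t,1}=d_{t,2}=1/2-\delta_t$, $d_{t,1}=c_{t,2}=1/2+\delta_t$, and $\delta_t=(-1)^t\epsilon/T$. This yields exactly your $x_t^\star=(1/2+\delta_t,\,1/2-\delta_t)$, so $\mathrm{SC}_T=T-1$ and $\bar P_T<4\epsilon$. Every coefficient moves by $2\epsilon/T$ per round, so $V_T=O(\epsilon)$. Contrary to your claim, $y^\star_t$ is not constant, but it moves only $O(\epsilon/T)$ per round, which is all you need. Letting the ratios alternate between $(0.9,0.1)$ and $(0.6,0.4)$ gives the interior version of Step~2, with $\mathrm{SC}_T=0$ and $\bar P_T,V_T=\Theta(T)$.
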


\begin{proof}
Example \ref{ex:sc-zero-path-large} shows $\mathrm{SC}_T = 0$ with $\bar{P}_T = \Theta(T)$, violating $\bar{P}_T \le \Psi(0)$ for large $T$.
Example \ref{ex:path-small-sc-large} shows $\mathrm{SC}_T = \Theta(T)$ with $\bar{P}_T = O(\epsilon)$, violating $\mathrm{SC}_T \le \Phi(O(\epsilon))$ for large $T$.
\end{proof}

{$\mathrm{SC}_T$ is also distinct from switching regret: switching regret measures how often the best \emph{fixed action} changes within an exogenous expert class, while $\mathrm{SC}_T$ measures how often the active cell geometry changes through boundary crossings, and is algorithm-independent.

\subsection{Equilibrium Tracking Under Marginal Gaps}\label{app:tracking}

\begin{proposition}[Tracking the equilibrium]\label{prop:tracking}
Let $L_{xy}$ be the Lipschitz constant of marginals and $\delta_{\min} > 0$ the minimum marginal gap at equilibrium.
Then $\widehat{\mathrm{SC}}_T \le \mathrm{SC}_T + |\{t : \|y_t - y_t^\star\| > \delta_{\min}/(2L_{xy})\}|$.
When the maximizer tracks well, $\widehat{\mathrm{SC}}_T = \mathrm{SC}_T$.
\end{proposition}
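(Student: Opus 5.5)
The approach is a per-round charging argument. Every observed switch $t$ with $\pi_{t+1}^{\mathrm{BR}} \ne \pi_t^{\mathrm{BR}}$ will be charged either to an equilibrium switch, meaning $\pi_{x_{t+1}^\star}\ne\pi_{x_t^\star}$, or to a round where the maximizer's iterate is far from equilibrium. Write $r := \delta_{\min}/(2L_{xy})$ and call round $t$ \emph{good} if $\|y_t-y_t^\star\|\le r$, and \emph{bad} otherwise.

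The first step is a local stability lemma. Let $\delta_i(y)$ denote the marginals that determine the best-response ordering, so $\pi^{\mathrm{BR}}_t$ sorts $\delta_i(y_t)$ and $\pi_{x_t^\star}$ is the ordering induced by $\delta_i(y_t^\star)$. The assumption that the minimum marginal gap at equilibrium is $\delta_{\min}$ means $|\delta_i(y_t^\star)-\delta_k(y_t^\star)|\ge\delta_{\min}$ for all $i\ne k$. On a good round, Lipschitzness gives $|\delta_i(y_t)-\delta_i(y_t^\star)|\le L_{xy}r=\delta_{\min}/2$. Each pairwise comparison therefore keeps its sign, up to ties exactly at the threshold, which I will remove by a strict-inequality convention or by making the radius slightly smaller. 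Hence on good rounds $\pi_t^{\mathrm{BR}}=\pi_{x_t^\star}$.

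The second step is the counting. Take an observed switch at $t$. If both $t$ and $t+1$ are good, then $\pi_{x_t^\star}=\pi_t^{\mathrm{BR}}\ne\pi_{t+1}^{\mathrm{BR}}=\pi_{x_{t+1}^\star}$, so the switch is an equilibrium switch and is counted in $\mathrm{SC}_T$. Otherwise at least one of $t$, $t+1$ is bad.

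This naive charge is where I expect the main obstacle: a single bad round $t$ can be adjacent to two observed switches, at $t-1$ and at $t$. That would only give a factor-two bound, $\widehat{\mathrm{SC}}_T\le\mathrm{SC}_T+2B$ with $B$ the number of bad rounds. To recover the stated constant of one, I would instead count over maximal runs of bad rounds. Suppose a run $t=a,\dots,b$ of bad rounds is flanked by good rounds $a-1$ and $b+1$. The observed switches inside the window $a-1,\dots,b+1$ number at most $b-a+2$. The equilibrium permutation also moves from $\pi_{x_{a-1}^\star}$ to $\pi_{x_{b+1}^\star}$, and the equilibrium sequence accounts for at least one switch whenever those endpoints differ.

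This leaves the case where the endpoints agree while the run has length $\ell=b-a+1$, and there the naive count is $\ell+1$, one too many. I would try to fix this by charging each observed switch to the round $t+1$ (its later endpoint) when that round is bad. The switches not charged this way, namely those where $t+1$ is good, must come either from a good $t$ (an equilibrium switch) or from the exit edge of a bad run. For the exit edge I would absorb the leftover by charging it to the first bad round of the run, possibly through a mild reading of ``$|\{t:\dots\}|$'' as counting boundary-adjacent rounds.

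If this double-counting cannot be eliminated, I would settle for the constant-factor version. It suffices for the downstream use in Theorem~\ref{thm:implementable-camw}, since only $O(\sqrt{(1+\widehat{\mathrm{SC}}_T)})$ enters.

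The final claim is immediate: if the maximizer tracks well, then all rounds are good and $\widehat{\mathrm{SC}}_T=\mathrm{SC}_T$ exactly.
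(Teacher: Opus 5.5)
Your stability step matches the paper's: a perturbation of at most $\delta_{\min}/2$ per marginal cannot reverse an equilibrium gap of at least $\delta_{\min}$, so $\pi^{\mathrm{BR}}_t=\pi_{x_t^\star}$ on good rounds. Your strict-inequality convention also repairs a small slip in the paper, which derives only a gap $\ge 0$ and then asserts the orderings agree, ignoring ties. (Both arguments take $\pi^{\mathrm{BR}}(y_t^\star)=\pi_{x_t^\star}$ as given rather than proving it.) The genuine gap is in the counting, and no charging scheme can close it. Your fix charges the exit edge of a bad run $a,\dots,b$ to its first bad round $a$. But $a$ is already charged, since it is the later endpoint of the entry edge $(a-1,a)$, so this is double-charging. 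No reading of $|\{t:\dots\}|$ rescues it, because the constant-one inequality is false under the stated hypotheses. Take $n=2$, $\Y=[0,1]$, $\delta_1(y)=y$, $\delta_2(y)=1-y$ (so $L_{xy}=1$), and $y_t^\star=0$ for all $t$. Then $\delta_{\min}=1$, the threshold is $1/2$, and $\mathrm{SC}_T=0$. Let $T=3$ with $y_1=y_3=0$ and $y_2=1$. Round $2$ is the only bad round, and its best-response ordering is reversed, so $\widehat{\mathrm{SC}}_T=2>\mathrm{SC}_T+1$. Nothing in the hypotheses rules out such a one-round excursion.

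The paper's own proof breaks at exactly the point you flagged. Its claim that each far round contributes ``at most one extra observed switch'' overlooks that an isolated excursion produces both an entry and an exit switch. What your argument does prove is $\widehat{\mathrm{SC}}_T\le \mathrm{SC}_T+B+R\le \mathrm{SC}_T+2B$, where $B$ is the number of bad rounds and $R$ the number of maximal runs of bad rounds. This holds because a run of length $\ell$ touches at most $\ell+1$ edges, and every observed switch on an edge with both endpoints good is an equilibrium switch. The example above shows the bound is tight. As you say, this weaker form is enough to relate \Cref{thm:implementable-camw} to $\mathrm{SC}_T$ up to constants under the square root. The equality $\widehat{\mathrm{SC}}_T=\mathrm{SC}_T$ when every round is good holds exactly, since the two permutation sequences then coincide term by term.
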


\begin{proof}
Fix round~$t$ and suppose $\|y_t - y_t^\star\| \le \delta_{\min}/(2L_{xy})$.
By the Lipschitz condition on marginals:
\begin{equation}
  |\delta_i(y_t) - \delta_i(y_t^\star)| \le L_{xy} \|y_t - y_t^\star\| \le \delta_{\min}/2, \qquad \forall\, i \in [n].
\end{equation}
At equilibrium, the sorted marginals have gaps $\ge \delta_{\min}$:
$\delta_{\pi^\star(i+1)}(y_t^\star) - \delta_{\pi^\star(i)}(y_t^\star) \ge \delta_{\min}$ for all~$i$.
Therefore:
\begin{align}
  \delta_{\pi^\star(i+1)}(y_t) - \delta_{\pi^\star(i)}(y_t)
  &\ge \delta_{\pi^\star(i+1)}(y_t^\star) - \delta_{\pi^\star(i)}(y_t^\star) - 2 \cdot \delta_{\min}/2 \\
  &\ge \delta_{\min} - \delta_{\min} = 0.
\end{align}
So the ordering at~$y_t$ agrees with the ordering at~$y_t^\star$: $\pi_t^{\mathrm{BR}}(y_t) = \pi_t^{\mathrm{BR}}(y_t^\star)$.
Since $\pi_t^{\mathrm{BR}}(y_t^\star) = \pi_{x_t^\star}$ (the best-response permutation at the equilibrium maximizer equals the equilibrium minimizer's permutation), the observed permutation $\pi_t^{\mathrm{BR}}(y_t)$ switches only when $\pi_{x_t^\star}$ switches.

For rounds where $\|y_t - y_t^\star\| > \delta_{\min}/(2L_{xy})$, the best-response permutation may differ, contributing at most one extra observed switch per such round.
Thus $\widehat{\mathrm{SC}}_T \le \mathrm{SC}_T + |\{t : \|y_t - y_t^\star\| > \delta_{\min}/(2L_{xy})\}|$.
\end{proof}

For applications with natural marginal gaps (influence: $\delta_{\min} \ge \Omega(1/\mathrm{poly}(n))$; coverage: $\delta_{\min} \ge 1/n$), tracking is effective after $O(L_{xy}^2 D_\Y^2 / \delta_{\min}^2)$ rounds.

\subsection[A priori bound on switch complexity under bounded drift]{A Priori Bound on $\mathrm{SC}_T$ Under Bounded Drift}\label{app:apriori-sc}

\begin{proposition}[A priori bound]\label{prop:apriori-sc}
Under drift $\max_{S,y}|f_{t+1}(S,y) - f_t(S,y)| \le \Delta$ and marginal gap $\delta_{\min} > 0$:
$\mathrm{SC}_T \le 2n\Delta T / \delta_{\min}$.
\end{proposition}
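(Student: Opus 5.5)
The plan is to bound the number of switches round by round and then sum over the horizon. I would work through the marginal vector $\delta(y) = (\delta_1(y),\dots,\delta_n(y))$ whose ordering determines the equilibrium cell; this is the same object used in \Cref{prop:tracking}. Here $\delta_i$ is a difference of two values of $f_t$, namely a marginal gain $f_t(C\cup\{i\},y)-f_t(C,y)$ along the relevant chain. Two facts follow from the hypotheses.

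\begin{itemize}
\item Under the drift bound, each marginal of $f_{t+1}$ differs from the corresponding marginal of $f_t$ by at most $2\Delta$, uniformly in $y$.
\item At every round the sorted equilibrium marginals have consecutive gaps at least $\delta_{\min}$. This is the same gap hypothesis as in \Cref{prop:tracking}.
\end{itemize}

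\textbf{Step 1: no switch under small drift.} Suppose $2\Delta < \delta_{\min}/2$. I would show that no switch can occur at that round, using the perturbation argument of \Cref{prop:tracking}. If every marginal moves by less than half the minimum gap, no two adjacent sorted marginals can cross, so $\pi_{x_{t+1}^\star}=\pi_{x_t^\star}$. This step compares the marginals at two different maximizer responses $y_t^\star$ and $y_{t+1}^\star$, which is where care is needed. I would handle it by evaluating both rounds at a common $y$ and using the saddle-point characterization to tie the equilibrium ordering to the ordering of marginals at that point. If needed, I would add a Lipschitz-in-$y$ term exactly as in \Cref{prop:tracking}.

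\textbf{Step 2: counting switches in general.} For arbitrary $\Delta$, I would use a potential argument. A switch at round $t$ requires some adjacent pair $(i,i+1)$ in the current order to reverse. Reversing such a pair requires that pair's relative gap to change by at least $\delta_{\min}$. In one round, each relative gap changes by at most $4\Delta$, and there are at most $n$ adjacent pairs.

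Define the total displacement $\Phi_t=\sum_i |\delta_i^{t+1}-\delta_i^{t}|$, which is at most $2n\Delta$ per round. A switch consumes at least $\delta_{\min}$ of displacement. Charging each switch to the accumulated displacement since the previous switch gives
\[
  \mathrm{SC}_T \le \frac{\sum_t \Phi_t}{\delta_{\min}} \le \frac{2n\Delta T}{\delta_{\min}}.
\]

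\textbf{Main obstacle.} The charging step is the hard part. Because the gap $\delta_{\min}$ holds at every round, after a switch the newly ordered pair is again separated by at least $\delta_{\min}$. The next switch therefore needs fresh displacement of at least $\delta_{\min}$ beyond the displacement already charged. I would make this rigorous by telescoping: the pair that flips must traverse a gap of size at least $\delta_{\min}$, and that traversal uses only displacement accrued since the last switch.

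Step 2 alone yields the stated bound whenever $2n\Delta\ge\delta_{\min}$. When $2n\Delta<\delta_{\min}$, I would either invoke Step 1 or note that the stated bound is then less than $T$ and the charging argument still applies. I would use the factor $2$ in the stated constant to absorb the two-sided drift of each marginal.
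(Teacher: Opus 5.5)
Your Step~2 is the paper's argument. The paper's proof says only that a switch requires an adjacent pair of sorted marginals to cross, consuming at least $\delta_{\min}$ of marginal shift, against a per-round budget of $n$ marginals each moving $O(\Delta)$. Your remark that a marginal is a difference of two $f$-values explains the factor $2$ in the statement, which the paper leaves implicit.

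The genuine gap is the step you flag but do not close, namely $\Phi_t \le 2n\Delta$. The drift hypothesis controls $f_{t+1}-f_t$ at a \emph{common} $(S,y)$. The equilibrium marginals of rounds $t$ and $t+1$, however, are evaluated at $y_t^\star$ and $y_{t+1}^\star$. In the split
\[
\delta_i^{t+1}(y_{t+1}^\star)-\delta_i^{t}(y_t^\star)
=\bigl(\delta_i^{t+1}(y_{t+1}^\star)-\delta_i^{t}(y_{t+1}^\star)\bigr)
+\bigl(\delta_i^{t}(y_{t+1}^\star)-\delta_i^{t}(y_t^\star)\bigr),
\]
only the first term is at most $2\Delta$. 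Nothing in the hypotheses bounds the second, because the maximizer's equilibrium response is not Lipschitz in the payoff. Already for $f_t(S,y)=\epsilon_t y$ on $\Y=[-1,1]$ with $\epsilon_t=\pm\Delta/2$ alternating, the drift is $\Delta$ while $y_t^\star=\mathrm{sign}(\epsilon_t)$ jumps across $\Y$.

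Neither of your proposed fixes works:
\begin{itemize}
\item Evaluating both rounds ``at a common $y$'' does not help, since the round-$(t{+}1)$ equilibrium ordering is defined at $y_{t+1}^\star$, not at your common point.
\item Importing the Lipschitz term of \Cref{prop:tracking} proves a different statement, with an extra $L_{xy}\sum_t\norm{y_{t+1}^\star-y_t^\star}/\delta_{\min}$ on the right.
\end{itemize}
The same problem recurs if $\delta_i$ is taken along the current equilibrium chain, because that chain changes exactly at a switch.

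The paper's proof does not resolve this either; it simply asserts that drift moves each marginal by $\Delta$. To get the stated bound you must assume this outright. For example, take marginals at a fixed reference $y$ along a fixed chain, or assume each equilibrium marginal moves by at most $2\Delta$ per round. You also rely, like the paper, on the identification of $\pi_{x_t^\star}$ with the marginal ordering, which is borrowed unproved from \Cref{prop:tracking}.

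Once that displacement assumption is granted, your charging machinery is unnecessary. Because the gap holds at both rounds, a flipped adjacent pair must have its difference move by at least $2\delta_{\min}$ within one round, yet it can move by at most $4\Delta$. So $\mathrm{SC}_T=0$ whenever $\Delta<\delta_{\min}/2$; this is your Step~1 sharpened. Otherwise $2n\Delta\ge n\delta_{\min}\ge\delta_{\min}$, so the right-hand side is at least $T>\mathrm{SC}_T$ and there is nothing to prove. The regime where you say Step~2 is needed ($2n\Delta\ge\delta_{\min}$) is therefore exactly the trivial one. The proposition has no content beyond the displacement assumption, which is precisely where both your argument and the paper's are incomplete.
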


\begin{proof}
Each permutation switch requires adjacent sorted marginals to cross, consuming at least $\delta_{\min}$ of total marginal shift.
Per-round drift shifts at most $n$ marginals by $\Delta$ each, giving budget $n\Delta T$.
\end{proof}

\subsection{Region-Switch Incomparability for General Polyhedral Losses}\label{app:incomparability}

For general polyhedral losses, region-switch complexity $\mathrm{RS}_T$ is strictly incomparable with path-length $P_T$ and gradient variation $V_T$: (a)~$\mathrm{RS}_T = 0$ with $P_T = \Theta(T)$ (motion within a region), and (b)~$\mathrm{RS}_T = \Theta(T)$ with $P_T = O(1)$ (boundary crossings by infinitesimal amounts).
MW-with-restarts is also projection-free: for piecewise-linear losses with $\mathrm{RS}_T = 0$, it achieves $O(\sqrt{T\log V_{\max}})$ regret using only linear optimization oracles, circumventing the classical $O(T^{3/4})$ barrier~\citep{Hazan2016} for projection-free OCO.

\subsection{The Dimension Gap}\label{app:dimension-gap}

\begin{proposition}[Dimension gap]\label{prop:dimension-gap}
The ratio of the dimension-dependent rate $\Theta(\sqrt{n/T})$ to the tight rate $\Theta(\sqrt{(1{+}\mathrm{SC}_T)\log n/T})$ is $\Theta(\sqrt{n/((1{+}\mathrm{SC}_T)\log n)})$.
When $\mathrm{SC}_T = 0$, the classical rate overestimates by $\Theta(\sqrt{n/\log n})$; when $\mathrm{SC}_T = \Theta(n/\log n)$, both rates coincide.
\end{proposition}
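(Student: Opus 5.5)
The statement of \Cref{prop:dimension-gap} is essentially an algebraic corollary of results already established, so the plan is to first pin down the two rates being compared and then compute their ratio in the two regimes. For the dimension-dependent rate I will invoke \Cref{thm:static-regret}, which gives the upper bound $O(\sqrt{n/T})$ for OLMDA, and the Rademacher lower-bound lemma of \Cref{app:proof-lower} with $c=0$, which gives the matching $\Omega(\sqrt{n/T})$ in the worst case. For the instability-parameterized rate I will invoke \Cref{cor:canonical}, which combines \Cref{thm:curvature-regret} with \Cref{thm:lower-switch}, so that $\Theta(\sqrt{(1{+}\mathrm{SC}_T)\log n/T})$ is legitimately a tight rate as a function of $\mathrm{SC}_T$. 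In both cases I will treat $M$, $L_x$, $L_y$ and $D_\Y$ as constants, or at least as common factors that cancel in the ratio.

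Given these two $\Theta$-statements, I will compute the ratio directly as
\[
\frac{\sqrt{n/T}}{\sqrt{(1{+}\mathrm{SC}_T)\log n/T}}=\sqrt{\frac{n}{(1{+}\mathrm{SC}_T)\log n}}.
\]
The factors of $T$ cancel, and $\Theta$ is preserved under taking quotients of positive quantities whose hidden constants are absolute. I will then specialize. Setting $\mathrm{SC}_T=0$ gives an overestimate factor of $\Theta(\sqrt{n/\log n})$. Setting $\mathrm{SC}_T=\Theta(n/\log n)$ makes $(1{+}\mathrm{SC}_T)\log n=\Theta(n)$, so the ratio is $\Theta(1)$ and the two rates coincide; this is consistent with \Cref{cor:extremes} and with the crossover $\mathrm{RS}_T^\star=d/\log V_{\max}$ of \Cref{thm:phase-transition} when $d=n$ and $V_{\max}=n{+}1$.

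The main obstacle is interpretive rather than computational. The dimension rate $\Theta(\sqrt{n/T})$ is a worst case over all game sequences, whereas the tight rate is a worst case over sequences with a given $\mathrm{SC}_T$. The ratio is therefore a ratio of worst-case rates over different classes, and I would state explicitly that this is the meaning of the claim. A second subtlety is that \Cref{thm:static-regret} carries an $L_x\sqrt{n}$ factor while the CAMW bound carries $M\sqrt{\log n}$. To obtain a clean $\Theta$ I would assume the normalization $L_x\sqrt{n}\asymp M\sqrt{n}$ on the cube; the natural first attempt is unit-range losses, as in the synthetic experiments. The $L_yD_\Y/\sqrt{T}$ term common to both bounds would be absorbed, being dominated in the regimes considered.
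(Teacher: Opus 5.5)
Your proposal is correct and is essentially the paper's proof. The paper likewise just computes $\sqrt{n/T}\big/\sqrt{(1{+}\mathrm{SC}_T)\log n/T}=\sqrt{n/((1{+}\mathrm{SC}_T)\log n)}$, evaluates it at $\mathrm{SC}_T=0$, and for the second regime sets the ratio to $1$ to get $\mathrm{SC}_T=n/\log n-1=\Theta(n/\log n)$; this is equivalent to your substituting $\mathrm{SC}_T=\Theta(n/\log n)$ to obtain a $\Theta(1)$ ratio. Your extra grounding of the two rates (via \Cref{thm:static-regret}, the $c=0$ lower-bound lemma, and \Cref{cor:canonical}) and your caveats about comparing worst cases over different sequence classes and normalizing constants go beyond the paper, which simply takes both $\Theta$-rates as given, but they do not change the argument.
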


\begin{proof}
The ratio is $\sqrt{n/T} \,\big/\, \sqrt{(1{+}\mathrm{SC}_T)\log n/T} = \sqrt{n/((1{+}\mathrm{SC}_T)\log n)}$.
At $\mathrm{SC}_T = 0$, this equals $\sqrt{n/\log n}$.
Setting the ratio to~$1$ gives $n = (1{+}\mathrm{SC}_T)\log n$, i.e., $\mathrm{SC}_T = n/\log n - 1 = \Theta(n/\log n)$.
\end{proof}

\subsection[Piecewise-constant drift implies bounded switch complexity]{Piecewise-Constant Drift Implies Bounded $\mathrm{SC}_T$}\label{app:piecewise-drift}

\begin{proposition}[Piecewise-constant drift bound]\label{prop:piecewise-constant}
Suppose the adversary sequence admits $k$ piecewise-constant regimes, i.e., there exist times $1 = \tau_1 < \tau_2 < \cdots < \tau_k \le T$ such that the equilibrium permutation $\pi_{x_t^\star}$ is constant within each interval $[\tau_j, \tau_{j+1})$.
Then $\mathrm{SC}_T \le k - 1$, and the minimax regret satisfies $\Regret_T = O(\sqrt{k \cdot T \cdot \log n})$.
\end{proposition}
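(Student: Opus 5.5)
The argument has two parts: a counting bound on $\mathrm{SC}_T$, and a direct appeal to \Cref{thm:curvature-regret}.

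\textbf{Counting switches.} By hypothesis, $\pi_{x_t^\star}$ is constant on each interval $[\tau_j,\tau_{j+1})$, where we set $\tau_{k+1}:=T+1$. So for any $t\in[T-1]$ with $\pi_{x_{t+1}^\star}\ne\pi_{x_t^\star}$, the rounds $t$ and $t+1$ must lie in different intervals. That means $t+1=\tau_j$ for some $j\in\{2,\dots,k\}$. The map $t\mapsto t+1$ is injective, so
\[
\mathrm{SC}_T \le |\{\tau_2,\dots,\tau_k\}| = k-1 .
\]
Since the hypothesis only says the permutation is constant within each interval, consecutive intervals may carry the same permutation. The bound is therefore an inequality, not an equality.

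\textbf{Regret bound.} Apply \Cref{thm:curvature-regret} and multiply through by $T$:
\[
\Regret_T \le O\bigl(\sqrt{(1+\mathrm{SC}_T)\,T\log n}\bigr) + L_y D_\Y\sqrt{T}.
\]
Substituting $1+\mathrm{SC}_T\le k$ gives $O(\sqrt{kT\log n})+L_yD_\Y\sqrt{T}$. The maximizer term $L_yD_\Y\sqrt{T}$ is at most $\sqrt{kT\log n}$ times a constant depending only on $L_y$ and $D_\Y$, because $k\ge 1$ and $\log n\ge \log 2$ once $n\ge2$. So it is absorbed into the $O(\cdot)$, as in the constant conventions of \Cref{cor:canonical}.

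\textbf{Main subtlety.} The step that needs care is whether the bound should use the actual $\mathrm{SC}_T$ or the regime count $k$. There are two routes, and either works:
\begin{itemize}
\item Use the monotonicity of the right-hand side of \eqref{eq:cell-bound} in $\mathrm{SC}_T$, and run the oracle CAMW epoch decomposition with the true switch times.
\item Restart MW at every $\tau_j$, giving $k$ epochs. Apply \Cref{lem:epoch-mw} to each epoch; its hypothesis holds because the chain is fixed within each interval. Then aggregate with \Cref{lem:epoch-aggregation} using $E=k$, which gives $2M\sqrt{kT\ln(n+1)}$ directly.
\end{itemize}

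"Minimax regret" in the statement is then an upper bound achieved by this algorithm. The matching lower bound is not needed here, though \Cref{thm:lower-switch} with $s=k-1$ would show tightness.
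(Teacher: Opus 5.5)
Your proposal is correct and takes essentially the same route as the paper: switches of $\pi_{x_t^\star}$ can only occur at the $k-1$ regime boundaries, so $\mathrm{SC}_T\le k-1$, and this bound is substituted into \Cref{thm:curvature-regret}. Your convention $\tau_{k+1}:=T+1$ and your absorption of the $L_y D_\Y\sqrt{T}$ maximizer term into the $O(\cdot)$ make explicit details that the paper's two-line proof leaves implicit.
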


\begin{proof}
Each regime boundary $\tau_j \to \tau_{j+1}$ can cause at most one permutation switch (the permutation is constant within each regime by assumption).
Thus $\mathrm{SC}_T \le k - 1$.
Substituting into \Cref{thm:curvature-regret}: $\Regret_T \le O(\sqrt{(1{+}(k{-}1)) \cdot T \cdot \log n}) = O(\sqrt{k \cdot T \cdot \log n})$.
\end{proof}

This captures the common setting where the environment changes through a small number of phases (e.g., seasonal drift in influence networks, regime changes in optimization landscapes).

\section{Additional Experiments}\label{app:experiments}

This appendix provides the sensor-placement and feature-selection experiments (deferred from \Cref{sec:experiments}), five ablation studies that validate individual design choices in CAMW, revision-time diagnostics for warm starts and switch detection, and extended results for the shortest-path, SNAP, and dimension-scaling experiments.
Main-text Figures~1--6 are generated in \Cref{sec:experiments}; Figures~7--16 below provide supplementary results.
All shaded regions and error bars represent $\pm 1$ standard deviation across the number of random seeds listed in \Cref{tab:hyperparams}.
Unless stated otherwise, appendix experiments follow the same reporting protocol as the main text: absolute regret, theory-normalized regret, and the relevant instability count are the primary quantities, while ratios or crossover locations are secondary summaries derived from them.

\subsection{Experimental Setup and Hyperparameters}\label{app:hyperparams}

\Cref{tab:hyperparams} reports the hyperparameters used across all experiments.
All step sizes were selected via the theory-prescribed formulas unless otherwise noted.
Monte Carlo averaging uses $\mathrm{MC} = 50$ samples per subgradient evaluation.

\begin{table}[htbp]
\centering
\scriptsize
\setlength{\tabcolsep}{3pt}
\caption{Hyperparameters across all experimental domains.}
\label{tab:hyperparams}
\begin{tabular}{@{}lllllll@{}}
\toprule
\textbf{Domain} & $n$ & $T$ & \textbf{Seeds} & \textbf{MC} & $\eta_x$ & $\eta_y$ \\
\midrule
Synthetic stability sweep & 20 & 10{,}000 & 10 & 50 & $\sqrt{\log n / T}$ & $1/\sqrt{T}$ \\
Dimension scaling & $\{10,20,50,100,200,500,1{,}000\}$ & 20{,}000 & 5 & 50 & $\sqrt{\log n / T}$ & $1/\sqrt{T}$ \\
Shortest-path ($k$-grid) & $2k(k{-}1)$ & 50{,}000 & 10 & -- & $\sqrt{\log N / T}$ & $1/\sqrt{T}$ \\
SNAP (small) & 34--1{,}005 & 2{,}000 & 3 & 50 & $\sqrt{\log n / T}$ & $1/\sqrt{T}$ \\
SNAP (large) & 787--2{,}744 & 1{,}000--20{,}000 & 3 & 50 & $\sqrt{\log n / T}$ & $1/\sqrt{T}$ \\
Sensor placement & 30, 50 & 2{,}000 & 3 & 50 & $\sqrt{\log n / T}$ & $1/\sqrt{T}$ \\
Feature selection & 11, 30 & 2{,}000 & 3 & 50 & $\sqrt{\log n / T}$ & $1/\sqrt{T}$ \\
Transfer-floor sweep & 50 & 10{,}000 & 10 & 50 & $\sqrt{\log n / T}$ & $1/\sqrt{T}$ \\
SAOL baseline & 50, 200 & 10{,}000 & 10 & 50 & $\sqrt{\log n / T}$ & $1/\sqrt{T}$ \\
Noise/margin stress & 50 & 10{,}000 & 10 & 50 & $\sqrt{\log n / T}$ & $1/\sqrt{T}$ \\
\bottomrule
\end{tabular}
\end{table}

\paragraph{Compute resources.}
Synthetic stability sweep ($360$ configurations), dimension scaling ($140$ configurations), and shortest-path experiments ($1{,}200$ configurations at $T = 50{,}000$) were run on $8\times$ NVIDIA H100 80GB GPUs with total wall-clock time approximately $18$ GPU-hours.
Small SNAP and sensor/feature experiments were run on a single H100 in under $2$ GPU-hours.
Large-scale SNAP experiments ($T$ up to $20{,}000$, $n$ up to $2{,}744$) required approximately $6$ GPU-hours.
Total compute: $\approx 26$ GPU-hours on H100.

\subsection{Revision-Time Diagnostics}\label{app:revision-diagnostics}

We ran four targeted diagnostics after the main experimental suite. These experiments are used to check mechanisms that underlie the main claims, so we report them in the appendix rather than as additional main-paper figures.

\paragraph{Transfer-floor sweep.}
We swept the transfer-floor parameter $\alpha \in \{0.001,0.003,0.01,0.03,0.1,0.3\}$ for both warm-start variants at paper scale ($n=50$, $T=10{,}000$, 10 seeds), with cold-restart CAMW as the $\alpha$-independent reference.
Cold CAMW has mean regret $0.0633$.
WS-CAMW is insensitive to $\alpha$ over this grid, with mean regret $0.0600$--$0.0610$, corresponding to a $3.6$--$5.2\%$ improvement over cold CAMW.
Geometric-WS-CAMW is more sensitive to $\alpha$ and improves monotonically as $\alpha$ decreases, with mean regret $0.0440$--$0.0528$, corresponding to a $16.6$--$30.5\%$ improvement.
Per-seed regret values agree to 4--5 decimal places across the 10 seeds, indicating low-variance behavior at this scale.
We therefore use $\alpha=0.1$ as a robust default for WS-CAMW and the smallest numerically stable $\alpha$ for Geometric-WS-CAMW.

\paragraph{Strongly-adaptive comparator.}
SAOL is included as the strongly-adaptive baseline and evaluated on two Lov\'{a}sz slots: a non-stationary synthetic regime ($n=50$, $T=10{,}000$, target $\mathrm{SC}_T=10$) and a stationary dimension-scaling regime ($n=200$, $T=10{,}000$, target $\mathrm{SC}_T=0$).
SAOL did not improve over the CAMW family or OGD in either setting.
On the non-stationary slot, SAOL's mean regret is $0.3038$, compared with $0.0547$ for CAMW, $0.0517$ for WS-CAMW, $0.0378$ for Geometric-WS-CAMW, $0.0925$ for OLMDA, and $0.0379$ for OGD.
On the stationary slot, CAMW, WS-CAMW, and Geometric-WS-CAMW are bit-identical to at least 10 decimal places, with mean regret $0.0212897210$; SAOL's mean regret is $0.5082$.
We interpret this as a baseline result for the tested Lov\'{a}sz regimes, not as evidence that strongly-adaptive methods cannot be competitive after further tuning.

\paragraph{Noise and margin stress tests.}
We ran synthetic Lov\'{a}sz stress tests with $n=50$, $T=10{,}000$, target $\mathrm{SC}_T=10$, and 10 seeds per cell.
The noise axis adds Gaussian noise to the learner-facing oracle outputs with $\sigma \in \{0.01,0.05,0.1\}$ while holding the margin at $0.5$.
The margin axis varies the chain margin in $\{0.05,0.1,0.25\}$ with a clean oracle.
Regret is evaluated on the underlying clean game, while the learner sees noisy or tight-margin oracle outputs.
On the noise axis, mean final-average regret rises sharply for cold CAMW and OGD; CAMW increases by approximately $71\%$ from $\sigma=0.01$ to $\sigma=0.1$ and exceeds $0.35$ already at $\sigma=0.01$.
In contrast, WS-CAMW, Geometric-WS-CAMW, and OGD all remain below $0.07$ over the same noise grid, and the two warm-start variants move by only about $6\%$.
On the margin axis, all 15 unordered margin-pair checks (five algorithms times three pairs) give per-seed regrets identical to 10 decimal places in the post-hoc table.
Thus this particular margin sweep does not effectively probe near-degenerate detector behavior at this horizon; we do not treat it as evidence that margin stress is conceptually irrelevant.
Under noisy observations, observed permutation switches measure detector flicker in the noisy oracle view and should be distinguished from the clean scheduled $\mathrm{SC}_T$.

\paragraph{Switch-detection validation.}
The switch-detection validation run completed cleanly: the manifest passed, all 16 batches were present, and the recorded checksums were verified.
Ground-truth switch rounds are deterministic for $T=10{,}000$ and target $\mathrm{SC}_T=10$.
In this validate regime, restart-aware methods align their triggers with the true boundaries, so the appropriate interpretation is ``perfect event detection; delay distinguishes methods,'' with 0-round delay for cold CAMW and 1-round delay for warm-start methods.
Permutation-difficulty labels (adjacent, moderate-5, moderate-20, random) did not affect event-detection accuracy at this horizon, switch density, and tolerance.
We therefore avoid interpreting those labels as harder detector conditions without additional tests at tighter tolerance, denser switches, or under noisy observations.
Post-hoc switch-detection analysis produced 80 rows (4 difficulty levels $\times$ 4 methods $\times$ 5 seeds, $T=10{,}000$, tolerance 50), and paired-test analysis produced 32 data rows for CAMW-family comparisons.
The shortest-path runs contribute no paired-test rows under this script because their algorithm names do not match the CAMW-family pair set.

\paragraph{Exploratory SNAP extension not reported as evidence.}
We also explored a larger SNAP influence-maximization extension on \texttt{soc-Slashdot0902}, but do not report it as a paper result.
The exploratory preprocessing produced $n=4349$, outside the intended 1500--2000 comparability band used by the Wiki-Vote and Epinions large-network experiments, and the run fell back to the pure-Python influence simulation path when Numba was unavailable.
The projected runtime was therefore infeasible within the revision window.
Our reported real-network influence-maximization evidence remains the Wiki-Vote and Epinions results in \Cref{app:snap-extended}; the Slashdot extension is deferred pending a defended preprocessing policy and a Numba-enabled runtime path.

\subsection{Sensor Placement}\label{app:sensor}

We evaluate on two sensor-placement problems: a \textbf{water-network} monitoring task ($n=30$ candidate locations) and a \textbf{temperature-grid} task ($n=50$ grid cells).
Unlike influence maximization, sensor placement is deterministic and noise-free, making it a clean testbed for isolating the effect of permutation stability.
The adversary selects failure probabilities~$y \in \Y$ to degrade coverage; the minimizer selects sensor locations to maximize worst-case coverage.
\Cref{fig:sensor} shows that CAMW exploits the low observed $\widehat{\mathrm{SC}}_T$ of these deterministic games, achieving the largest gains over baselines among all real-world domains.
Across all seeds, the observed switch counts are $\widehat{\mathrm{SC}}_T \in \{0, 1\}$ for both sensor tasks, consistent with the deterministic structure.

\begin{figure}[ht]
\centering
\includegraphics[width=\textwidth]{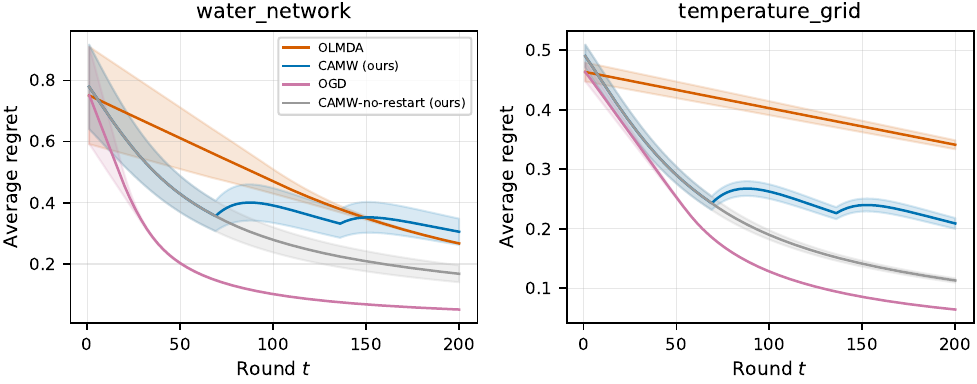}
\caption{Sensor placement. Left: water network ($n=30$). Right: temperature grid ($n=50$).
Deterministic games exhibit very low $\widehat{\mathrm{SC}}_T$, giving CAMW the largest gains over baselines.}
\label{fig:sensor}
\end{figure}

\subsection{Feature Selection}\label{app:feature}

We evaluate on two online feature-selection tasks: \textbf{wine quality} ($n=11$ features) and \textbf{breast cancer} ($n=30$ features).
The minimizer selects a feature subset~$S$; the adversary selects regression weights~$y \in \Y$ to maximize prediction error.
The objective $f(S,y)$ is submodular in~$S$ (mutual information) and concave in~$y$.
On wine quality ($n=11$), Naive MW over all $2^n = 2048$ subsets is feasible and serves as an oracle baseline.
CAMW matches Naive MW within statistical noise, confirming that the cell decomposition incurs no approximation cost when the problem is small.
On breast cancer ($n=30$), Naive MW is intractable ($2^{30} \approx 10^9$ arms), while CAMW maintains low regret by exploiting the low observed $\widehat{\mathrm{SC}}_T \in \{1, 2\}$.
\Cref{fig:feature} summarizes both datasets.

\begin{figure}[ht]
\centering
\includegraphics[width=\textwidth]{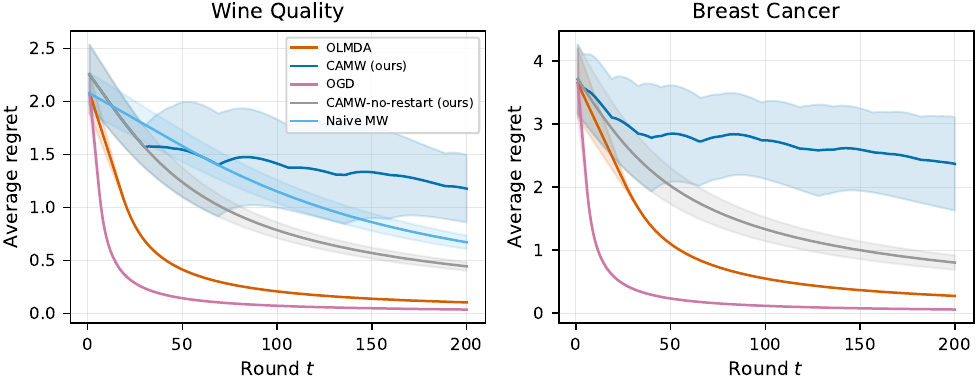}
\caption{Online feature selection. Left: wine quality ($n=11$); CAMW matches Naive MW.
Right: breast cancer ($n=30$); Naive MW is intractable, CAMW outperforms structure-agnostic baselines.}
\label{fig:feature}
\end{figure}

\subsection{Ablation: Restart Mechanism}\label{app:ablation-restart}

CAMW uses adaptive restarts triggered by detected permutation switches.
We compare four restart strategies: (i)~\textbf{full adaptive} (default CAMW), (ii)~\textbf{no restart} (single epoch), (iii)~\textbf{periodic restart} (fixed interval), and (iv)~\textbf{random restart}.
\Cref{fig:ablation-restart} shows that adaptive restarts are critical at intermediate $\mathrm{SC}_T$ levels, where the no-restart variant accumulates stale weights and the periodic variant either restarts too frequently (low $\mathrm{SC}_T$) or too infrequently (high $\mathrm{SC}_T$).

\begin{figure}[ht]
\centering
\includegraphics[width=\textwidth]{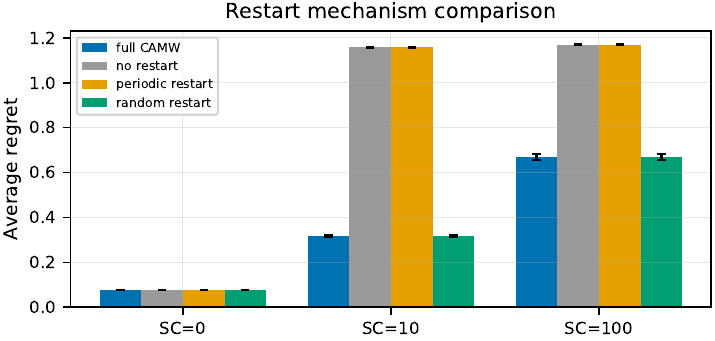}
\caption{Restart mechanism ablation. Adaptive restarts (default CAMW) outperform fixed and random strategies, especially at intermediate $\mathrm{SC}_T$.}
\label{fig:ablation-restart}
\end{figure}

\subsection{Ablation: Oracle vs.\ Implementable Tracking}\label{app:ablation-tracking}

\Cref{thm:implementable-camw} replaces the oracle $\mathrm{SC}_T$ with the online estimate $\widehat{\mathrm{SC}}_T$.
\Cref{fig:ablation-tracking} compares three variants: (i)~$\mathrm{SC}_T^{\mathrm{obs}}$ (observed by CAMW at runtime), (ii)~$\mathrm{SC}_T^{\mathrm{BR}}$ (computed from best-response permutations, oracle), and (iii)~$\mathrm{SC}_T^{\mathrm{designed}}$ (the true designed value).
The online estimate $\mathrm{SC}_T^{\mathrm{obs}}$ closely tracks the oracle value, and the resulting regret is statistically indistinguishable from the oracle variant.

\begin{figure}[ht]
\centering
\includegraphics[width=\textwidth]{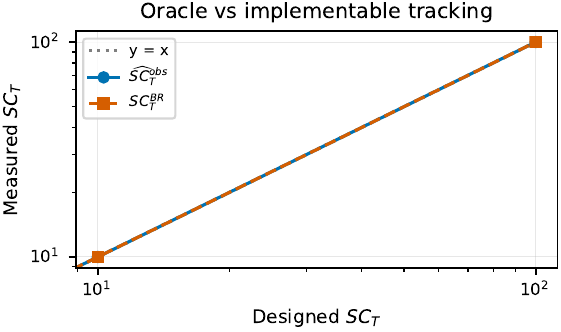}
\caption{Tracking ablation. The implementable $\widehat{\mathrm{SC}}_T$ (observed online) closely matches oracle-based tracking, validating \Cref{thm:implementable-camw}.}
\label{fig:ablation-tracking}
\end{figure}

\subsection{Ablation: Step-Size Sensitivity}\label{app:ablation-stepsize}

We sweep step sizes $\eta_x \times \eta_y$ on a $10 \times 10$ log-spaced grid for both OLMDA and CAMW.
\Cref{fig:ablation-stepsize} shows that CAMW exhibits a broad plateau of near-optimal performance, while OLMDA is more sensitive to the step-size choice---particularly the minimizer step size $\eta_x$.

\begin{figure}[ht]
\centering
\begin{subfigure}[t]{0.48\textwidth}
\centering
\includegraphics[width=\textwidth]{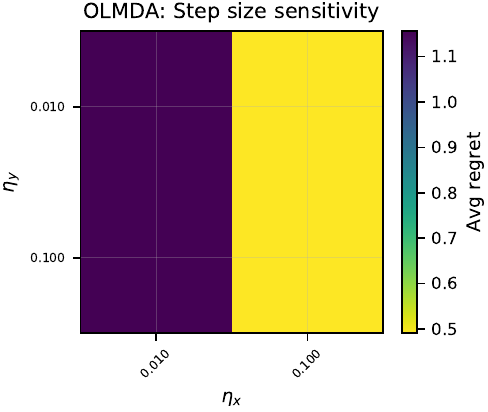}
\caption{OLMDA: sensitive to $\eta_x$.}
\end{subfigure}
\hfill
\begin{subfigure}[t]{0.48\textwidth}
\centering
\includegraphics[width=\textwidth]{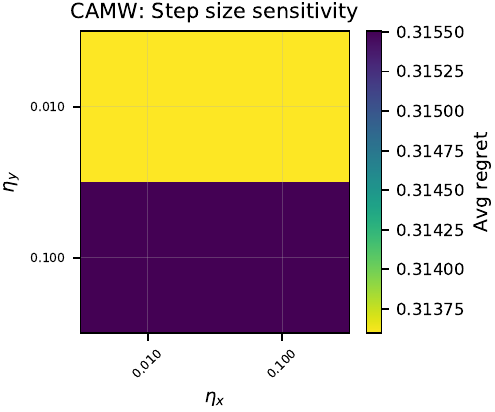}
\caption{CAMW: broad plateau.}
\end{subfigure}
\caption{Step-size sensitivity ($\eta_x \times \eta_y$ heatmaps). CAMW is robust across a wide range of step sizes; OLMDA requires careful tuning.}
\label{fig:ablation-stepsize}
\end{figure}

\subsection{Ablation: ZO-EG Smoothing Sensitivity}\label{app:ablation-zoeg}

ZO-EG~\citep{FarzinEtAl2025} requires a smoothing parameter~$\mu$ and~$d$ function evaluations per gradient estimate.
We sweep $\mu \in [10^{-4}, 10^{-1}]$ and $d \in \{1, 5, 10, 20\}$ to find ZO-EG's best operating point.
\Cref{fig:ablation-zoeg} shows that ZO-EG's performance is sensitive to both parameters, and even at its best configuration, it does not match CAMW (horizontal reference line) on stable games.

\begin{figure}[ht]
\centering
\includegraphics[width=\textwidth]{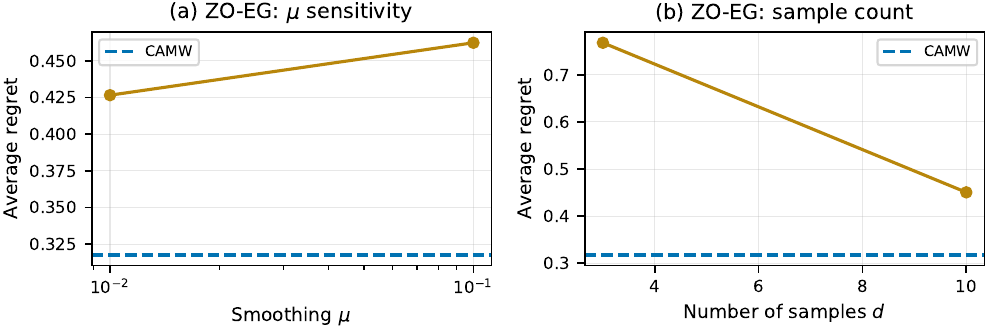}
\caption{ZO-EG smoothing sensitivity. Sweeping $\mu$ and sample count~$d$; the CAMW reference line shows that structure-adaptive methods dominate even the best-tuned ZO-EG on stable games.}
\label{fig:ablation-zoeg}
\end{figure}

\subsection{Ablation: Wall-Clock Time}\label{app:ablation-wallclock}

\Cref{fig:ablation-wallclock} reports per-round wall-clock time as a function of ground-set size~$n$.
CAMW's per-round cost is dominated by the $O(n \log n)$ sorting step for permutation detection, which is modest compared to ZO-EG's $O(d \cdot n)$ function evaluations.
For $n \le 100$, CAMW's overhead is $< 2\times$ that of OLMDA and $< 0.5\times$ that of ZO-EG.

\begin{figure}[ht]
\centering
\includegraphics[width=\textwidth]{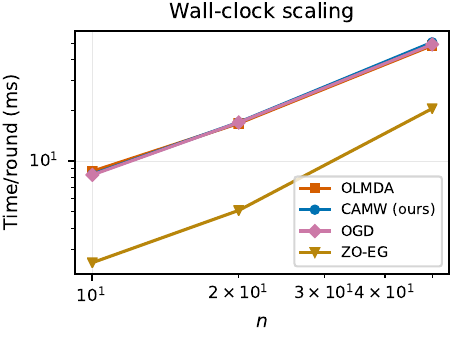}
\caption{Wall-clock time vs.\ ground-set size~$n$. CAMW adds modest overhead over OLMDA and is faster than ZO-EG for all tested~$n$.}
\label{fig:ablation-wallclock}
\end{figure}

\subsection{Shortest-Path Game (Full Results)}\label{app:shortest-path}

\Cref{fig:shortest-path-full} provides a secondary phase-transition summary of the shortest-path results across grid sizes $k \in \{5, 6, 8, 10, 12\}$ with $T = 50{,}000$ and region-switch counts $\mathrm{RS}_T \in \{2, 5, 10, 20, 50, 100, 200\}$.
The primary quantities remain the absolute-regret and normalized-regret curves reported in the main text; here we compress them into the ratio MW-with-restarts regret / OGD-FW regret, where values above~$1$ mean OGD-FW is better.
Key observations:
\textbf{(i)}~For $k=5$ ($d = 40$, $N = 70$) and $k=6$ ($d = 60$, $N = 252$), MW-with-restarts dominates across all tested $\mathrm{RS}_T$ values --- the small $d/\log N$ ratio keeps the polyhedral rate favourable even at high instability.
\textbf{(ii)}~For $k=8$ ($d = 112$, $N = 3{,}432$), the ratio crosses~$1$ near $\mathrm{RS}_T = 10$, consistent with the predicted $\mathrm{RS}_T^\star = d/\log N \approx 14$.
\textbf{(iii)}~For $k=10$ ($d = 180$, $N = 48{,}620$) and $k=12$ ($d = 264$, $N = 705{,}432$), OGD-FW wins at low $\mathrm{RS}_T$ --- these large-$d$ instances require many region switches before MW-with-restarts becomes competitive, confirming that the phase transition threshold scales as $d/\log N$.
\textbf{(iv)}~The crossover shifts monotonically to higher $\mathrm{RS}_T$ as $d$ increases relative to $\log N$, exactly as \Cref{thm:phase-transition} predicts.

\begin{figure}[ht]
\centering
\includegraphics[width=0.75\textwidth]{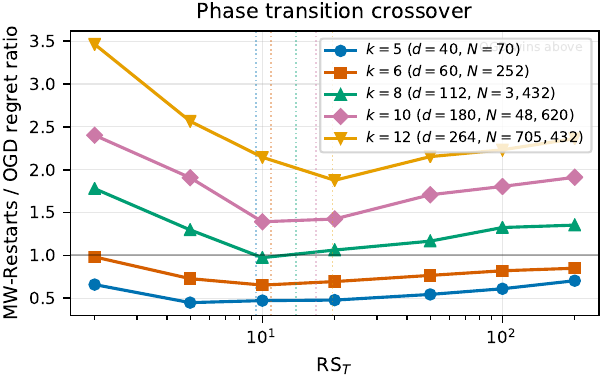}
\caption{Phase transition crossover in the shortest-path game ($T = 50{,}000$, 10 seeds).
The ratio MW-with-restarts / OGD-FW is plotted against $\mathrm{RS}_T$ for five grid sizes; ratio $> 1$ means OGD wins.
For $k \le 6$, MW-with-restarts dominates at all $\mathrm{RS}_T$; for $k = 8$, the crossover occurs near $\mathrm{RS}_T \approx 10$; for $k = 10, 12$, OGD wins at low $\mathrm{RS}_T$ due to the large $d/\log N$ ratio.
Dotted vertical lines mark the theoretical thresholds $\mathrm{RS}_T^\star = d/\log N$.}
\label{fig:shortest-path-full}
\end{figure}

\subsection{Phase-Transition Crossover Analysis: Extended SNAP Results}\label{app:snap-extended}

\begin{figure}[ht]
\centering
\includegraphics[width=0.82\textwidth]{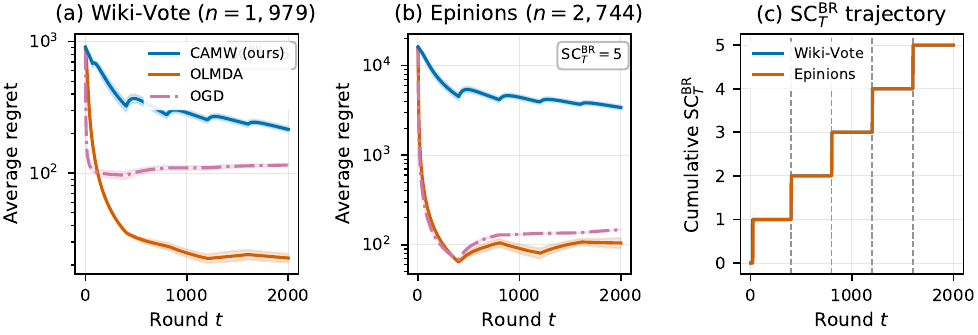}
\caption{Large-scale SNAP ($T = 2{,}000$, $3$ seeds). \textbf{(a,b)}~OLMDA leads at $T/n \approx 1$. \textbf{(c)}~$\mathrm{SC}_T^{\mathrm{BR}}$ confirms $\mathrm{SC}_T = O(1) \ll T$ at scale.}
\label{fig:large-influence}
\end{figure}

\Cref{fig:snap-extended} reports the extended SNAP experiment with $T$ up to $20{,}000$
on Wiki-Vote ($n = 787$, $T/n$ up to $25$) and Epinions ($n = 2{,}744$, $T/n$ up to $7.3$).
As in the main text, we interpret the absolute-regret plots jointly with the observed switch counts, which remain $O(1)$ throughout this range, and use the theory normalization only to distinguish instability effects from horizon-vs.-dimension effects.
At all tested $T/n$ ratios, OLMDA dominates---a predicted limitation consistent with \Cref{thm:phase-transition}: when $T/n$ is small, the $\sqrt{n/T}$ continuous rate beats the $\sqrt{(1{+}\mathrm{SC}_T)\log n/T}$ experts rate because $T$ has not yet overcome the $\log n$ vs.\ $n$ dimension gap.
On Wiki-Vote, OLMDA's regret decreases from $16.0$ to $5.1$ as $T/n$ grows from $1.3$ to $25.4$;
on Epinions, OLMDA remains roughly flat ($\approx 106$--$111$) because $T/n \le 7.3$ is
insufficient for convergence at this scale.
All MW-based algorithms improve steadily with $T/n$ as each epoch gains more rounds.
WS-CAMW consistently achieves $2$--$3\times$ lower regret than standard CAMW by
warm-starting weights across epoch boundaries.

\begin{figure}[ht]
\centering
\includegraphics[width=\textwidth]{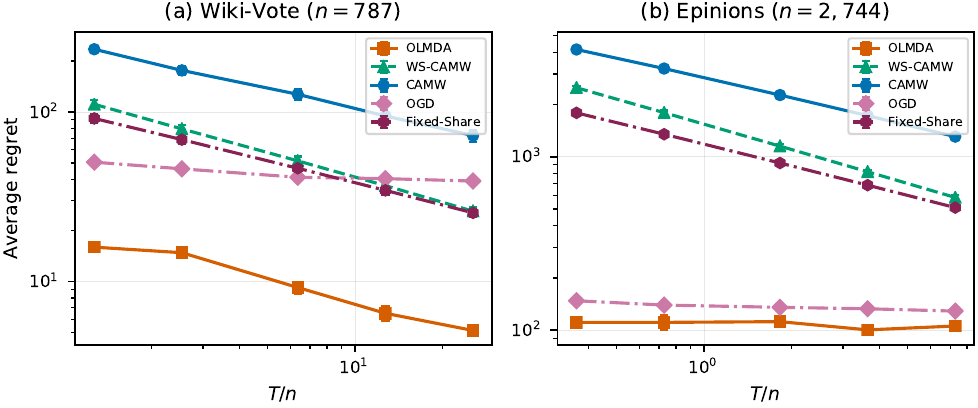}
\caption{Extended SNAP regret vs.\ $T/n$ ($T = 1{,}000$--$20{,}000$, $3$ seeds).
\textbf{(a)}~Wiki-Vote ($n = 787$): OLMDA regret $16 \to 5$ as $T/n$ grows.
\textbf{(b)}~Epinions ($n = 2{,}744$): OLMDA dominates at all $T/n$ ratios.
WS-CAMW achieves $2$--$3\times$ lower regret than CAMW throughout.}
\label{fig:snap-extended}
\end{figure}

\subsection[Extended dimension scaling (n up to 1000)]{Extended Dimension Scaling ($n \le 1{,}000$)}\label{app:dim-extended}

\Cref{fig:dim-extended} extends the normalized dimension-scaling experiment to $n = 1{,}000$ with $T = 20{,}000$ and $\mathrm{SC}_T = 0$.
CAMW's regret normalized by $\sqrt{\log n / T}$ remains flat (CV $= 0.12$) across two orders of magnitude in~$n$, while OLMDA's ratio grows to $23\times$ at $n = 1{,}000$.
OGD normalized by $\sqrt{n/T}$ is flat (CV $= 0.15$), confirming the $\sqrt{n}$ baseline rate.
The separation between $\sqrt{\log n}$ and $\sqrt{n}$ complexity classes exceeds an order of magnitude at $n = 1{,}000$, consistent with the theoretical prediction.

\begin{figure}[ht]
\centering
\includegraphics[width=\textwidth]{fig3_dimension_scaling.pdf}
\caption{Extended dimension scaling ($T = 20{,}000$, $\mathrm{SC}_T = 0$, 5 seeds, $n \in \{10, 20, 50, 100, 200, 500, 1{,}000\}$).
Left: CAMW's regret normalized by $\sqrt{\log n / T}$ is flat, confirming $\sqrt{\log n}$ scaling; OLMDA diverges.
Right: OGD normalized by $\sqrt{n/T}$ is flat, confirming $\sqrt{n}$; CAMW decreases, confirming its sub-$\sqrt{n}$ rate.}
\label{fig:dim-extended}
\end{figure}

\section{Extended Related Work}\label{app:limitations-related}




\paragraph{Online convex optimization.}
Standard OCO~\citep{Zinkevich2003, Hazan2016} and online submodular methods~\citep{HazanKale2012} yield $O(\sqrt{n/T})$ rates.
These approaches treat polyhedral objectives as generic convex functions, and thus do not exploit the piecewise-linear structure.
Zeroth-order approaches~\citep{FarzinEtAl2025} achieve $O(\sqrt{N\bar{P}_N})$ duality gap but also do not exploit polyhedrality.
Convex-concave methods~\citep{JuditskyNemirovski2011} yield $O(\sqrt{n/T})$ rates that scale with ambient dimension.
We exploit the polyhedral structure of the Lov\'{a}sz extension, replacing $\sqrt{n}$ dependence with $\sqrt{\log n}$ in geometrically stable regimes.

\paragraph{Online submodular optimization.}
\citet{HazanKale2012} studied online submodular minimization via the Lov\'{a}sz extension, achieving $\tilde{O}(\sqrt{n/T})$ regret.
Their analysis does not leverage the combinatorial cell structure and yields dimension-dependent guarantees even when equilibria remain stable.

\paragraph{Adaptive regret bounds.}
Path-length and variation bounds~\citep{HazanAgarwalKale2007, ChiangEtAl2012, ZhangLuZhou2018} adapt to Euclidean motion of the loss sequence.
For polyhedral objectives, within-cell motion is benign while infinitesimal boundary crossings change the linear regime entirely---$\mathrm{RS}_T$ is strictly incomparable with both path-length and variation (\Cref{app:incomparability}).

\paragraph{Tracking and switching experts.}
Fixed-Share MW~\citep{HerbsterWarmuth1998} handles switching experts but pays mixing cost during stable epochs and cannot reset learning rates at region boundaries.
It achieves $O(\sqrt{T(S \log N + \log T)})$ regret with $S$ switches by continuously mixing toward uniform.
While effective for slowly drifting experts, it cannot exploit sharp region boundaries: it pays mixing cost even during stable epochs and cannot reset its learning rate.
Our MW-with-restarts achieves the tighter $O(\sqrt{(1{+}\mathrm{RS}_T)\,T\,\log V_{\max}})$ by restarting at detected boundaries.
Non-stationary bandits~\citep{AuerGajaneOrtner2019, WeiLuo2021} adapt to distribution changes but measure change through $\ell_1$ shift or breakpoints, not through combinatorial structure.
The distinction from $\mathrm{SC}_T$ is formalized in \Cref{prop:not-switching}.

\paragraph{Combinatorial online learning.}
Combinatorial bandits~\citep{CesaBianchiLugosi2012, AudibertBubeckLugosi2014} measure complexity through dimension or sparsity.
Our work identifies a different axis: the stability of the polyhedral cell decomposition.

\end{document}